\theoremstyle{definition}
\newtheorem{definition}{Definition}
\newtheorem{remark}{Remark}
\theoremstyle{plain}
\newtheorem{assumption}{Assumption}
\newtheorem{lemma}{Lemma}
\newtheorem{theorem}{Theorem}
\newtheorem{proposition}{Proposition}
\newtheorem{corollary}{Corollary}
\newcommand{\barf}{\overline{f}}
\newcommand{\barg}{\overline{g}}
\newcommand{\xstar}{x^*}
\newcommand{\ystar}{y^*}
\newcommand{\Qstar}{Q^*}
\newcommand{\rhostar}{\rho^*}
\newcommand{\Scal}{\mathcal{S}}
\newcommand{\UU}{\mathcal{U}}
\newcommand{\sfrak}{\mathfrak{s}}
\newcommand{\ufrak}{\mathfrak{u}}
\newcommand{\hA}{\hat{A}}
\newcommand{\hB}{\hat{B}}
\newcommand{\hC}{\hat{C}}
\newcommand{\hD}{\hat{D}}
\newcommand{\RR}{\mathbb{R}}
\newcommand{\EE}{\mathbb{E}}
\newcommand{\PP}{\mathbb{P}}
\newcommand{\FF}{\mathcal{F}}
\newcommand{\Acal}{\mathcal{A}}
\newcommand{\Bcal}{\mathcal{B}}
\newcommand{\xx}{\mathbf{x}}
\newcommand{\cc}{\mathfrak{c}}
\newcommand{\afrak}{\mathfrak{a}}
\title{\LARGE \bf
Finite-Time Bounds for Two-Time-Scale Stochastic Approximation with Arbitrary Norm Contractions and Markovian Noise
}
\author{Siddharth Chandak$^{1\dagger}$, Shaan Ul Haque$^{2\dagger}$ and Nicholas Bambos$^{3}$
\thanks{$\dagger$ Equal contribution. Listed alphabetically.}
\thanks{$^{1,3}$ Department of Electrical Engineering, Stanford University, USA}
\thanks{$^{2}$ {Industrial and Systems Engineering, Georgia Institute of Technology, Atlanta, USA}}
\thanks{$^{1}${\tt\small chandaks@stanford.edu}}
\thanks{$^{2}${\tt\small shaque49@gatech.edu}}
\thanks{$^{3}${\tt\small bambos@stanford.edu}}  
}
\begin{document}

\maketitle
\thispagestyle{empty}
\pagestyle{empty}

\begin{abstract}
Two-time-scale Stochastic Approximation (SA) is an iterative algorithm with applications in reinforcement learning and optimization. Prior finite-time analyses of such algorithms have focused on fixed point iterations with mappings contractive under the Euclidean norm. Motivated by applications in reinforcement learning, we give the first mean square bound for non-linear two-time-scale SA with arbitrary norm contractive mappings and Markovian noise. We show that the mean square error decays at a rate of $\mathcal{O}(1/n^{2/3})$ in the general case, and at a rate of $\mathcal{O}(1/n)$ in a special case where the slower timescale is noiseless. Our analysis uses the generalized Moreau envelope to handle the arbitrary norm contractions and solutions of Poisson equation to deal with the Markovian noise. By analyzing the SSP Q-Learning algorithm, we give the first $\mathcal{O}(1/n)$ bound for an algorithm for asynchronous control of MDPs under the average reward criterion. We also obtain a rate of $\mathcal{O}(1/n)$ for Q-Learning with Polyak-averaging as a special case of our result and provide an algorithm for learning Generalized Nash Equilibrium (GNE) for strongly monotone games which converges at a rate of $\mathcal{O}(1/n^{2/3})$.
\end{abstract}
\section{Introduction}
Stochastic Approximation (SA) is a popular class of iterative algorithms for finding the fixed point of an operator given its noisy realizations. Many Reinforcement Learning (RL) algorithms \cite{sutton2018} involve solving fixed point equations under martingale and Markovian noise, and hence can be modeled using SA. Beyond RL, SA algorithms have applications in optimization, communications and control \cite{Borkar-book}. 

In many scenarios, including RL and optimization, problem structure requires updating iterates at different rates. Some of the well-known examples include GTD and TDC methods for policy evaluation with offline data \cite{sutton2018}, SSP Q-Learning for control problem in average-reward RL \cite{abounadi}, SA with Polyak-Ruppert Averaging \cite{polyak1992acceleration} or solving Generalized Nash Equilibrium Problems \cite{GNEP}. Broadly, these algorithms are instances of two-time-scale SA which we study in this work. Specifically, we study the following iterations:
\begin{equation*}
    \begin{aligned}
&x_{n+1}=x_n+\alpha_n (f(x_n,y_n,Z_n)-x_n+M_{n+1})\\
&y_{n+1}=y_n+\beta_n (g(x_n,y_n,Z_n)-y_n+M'_{n+1}).
\end{aligned}
\end{equation*}
Here $x_n\in\RR^{d_1}$ and $y_n\in\RR^{d_2}$ are the iterates updated on the faster and slower time-scale, respectively. These time-scales are dictated by the stepsizes $\alpha_n$ and $\beta_n$, respectively, where $\beta_n$ decays faster than $\alpha_n$. $\{Z_n\}$ is the underlying Markov chain which leads to the Markovian noise and $M_{n+1}$ and $M'_{n+1}$ denote additive martingale noise. The objective of the algorithm is to find solutions to $\barf(\xstar,\ystar)=\xstar$ and $\barg(\xstar,\ystar)=\ystar$, where $\barf(x,y)$ and $\barg(x,y)$ are averages of $f(x,y,i)$ and $g(x,y,i)$, respectively, under the stationary distribution $\pi(i)$ for the Markov chain $\{Z_n\}$.

While asymptotic convergence of SA has been widely studied in SA literature, applications in RL have led to recent interest in obtaining finite-time guarantees on SA algorithms. More specifically, works obtaining expectation bounds on SA algorithms predominantly assume contractions (or some negative drift) under the Euclidean norm. However, many RL algorithms exhibit the contractive property under different norms such as the $\ell_\infty$ norm. A recent work \cite{chenfinite} used generalized Moreau envelopes to obtain expectation bound for single-time-scale RL algorithms which involve a contraction mapping under arbitrary norms. We extend their result to two-time-scale SA, obtaining the first such finite time bound for RL algorithms such as SSP Q-learning for average cost Markov Decision Processes (MDPs) \cite{abounadi}.

Finite-time analysis for two-time-scale algorithms often relies on the assumption that that mappings $\barf(x,y)$ and $\barg(\xstar(y),y)$ are contractive with respect to $x$ and $y$, respectively. This is sufficient to ensure uniqueness of $(\xstar,\ystar)$ and asymptotic convergence of the algorithm for appropriate choice of stepsizes. Here $\xstar(y)$ denotes the fixed point of $\barf(\cdot,y)$. While prior works have assumed these contractions under Euclidean norm \cite{Doan}, we assume arbitrary norm contractions. To deal with arbitrary norms, we construct smooth Lyapunov functions using the generalized Moreau envelope. In addition, we use solutions of Poisson equation to deal with the Markovian noise. Our mean square bounds require novel analysis to incorporate the above two techniques in the analysis for two-time-scale iterations. 

Our main contributions are as follows.
\begin{enumerate}
    \item We show that $\EE[\|x_n-\xstar(y_n)\|^2+\|y_n-\ystar\|^2]$ decays as $\mathcal{O}(1/n^{2/3})$. This is the first such bound for two-time-scale SA with arbitrary norm contractions. We also show almost sure convergence of iterates to $(\xstar,\ystar)$.
    \item We also show that the above mean square error decays at $\mathcal{O}(1/n)$ when the slower time-scale is noiseless, i.e., lacks both martingale and Markovian noise.
    \item We show that SSP Q-learning for average cost MDPs satisfies the second framework above, and give the first finite time bound of $\mathcal{O}(1/n)$ for the algorithm. This is the first $\mathcal{O}(1/n)$ bound for an asynchronous algorithm for average cost MDPs. We also present the first a.s.\ convergence result for this algorithm which does not require projection of iterates in the slower time-scale.
    \item We adapt Q-learning with Polyak averaging into our framework and obtain a rate of $\mathcal{O}(1/n)$ for stepsizes independent of the system parameters.
    \item We also present a two-time-scale algorithm for learning GNE in strongly monotone games with linear coupled constraints, achieving a convergence rate of $\mathcal{O}(1/n^{2/3})$.
\end{enumerate}

\subsection{Related Work}
There is a vast literature on SA which focuses on asymptotic analysis of the algorithms (see \cite{Kushner, Borkar-book} for textbook references). Due to applicability in RL and optimization, there has been a recent interest in bounding the finite-time performance of these algorithms. These works can be further divided into two categories: obtaining high probability bounds on these algorithms (e.g., see \cite{Chandak_conc, chen-conc} and references therein) and obtaining mean square bounds \cite{srikant2019finite, qu2020finite, haque2024stochastic, chen2024lyapunov, zhang2024constant}. In particular, \cite{chenfinite, chen2024lyapunov} give mean square bounds for RL algorithms with contractions under arbitrary norms in the martingale and Markov noise cases, respectively. 

Among two-time-scale SA, there has been special focus on obtaining bounds for linear two-time-scale SA, where the maps $f(\cdot)$ and $g(\cdot)$ are linear. This is due to linear two-time-scale RL algorithms such as TDC and GTD2 \cite{tdc}. In \cite{konda}, an asymptotic rate of $\mathcal{O}(1/n)$ is obtained, while in \cite{Shaan, Kaledin} this has been extended to finite-sample guarantees. In \cite{Dalal} high probability guarantees have been given for such iterations. Note that our results can also be applied on linear two-time-scale SA, but we obtain weaker bounds as we do not exploit the additional structure provided by linearity.

Bounds for non-linear contractive two-time-scale SA have also been recently studied in  \cite{Doan}, where they obtained a bound of $\mathcal{O}(1/n^{2/3})$ for the Euclidean norm contractions case. We extend their result to arbitrary norms and we further include Markov noise and martingale noise which scales linearly in norm with the iterates. These assumptions allow us to capture a broader range of algorithms in our framework. 

In recent years, two-time-scale SA has also been studied in cases where the contractive assumption is not satisfied. While the mapping in the faster time-scale is often assumed to be contractive, mappings in the slower time-scale are assumed to be non-expansive in \cite{Chandak-nonexp} and the slower time-scale is a gradient descent iteration for a smooth operator in the limit of $x\rightarrow\xstar(y)$ in \cite{Doan-grad}. In \cite{Borkar-alekseev}, no contractive assumption is made and the rate of convergence of iterates to the solutions of the corresponding ODE is studied.

\subsection{Notation}

Throughout this work, we work with vectors of dimensions $d_1$ and $d_2$ as iterates $x_n\in\RR^{d_1}$ and $y_n\in\RR^{d_2}$. For simplicity, we use $\|\cdot\|$ to denote the norms in both $\RR^{d_1}$ and $\RR^{d_2}$, i.e., $\|x\|$ and $\|y\|$ denote compatible norms in $\RR^{d_1}$ and $\RR^{d_2}$, respectively, when $x\in\RR^{d_1}$ and $y\in\RR^{d_2}$. The usage will be clear from context. For $x_1,x_2\in\RR^d_1$, $\langle x_1,x_2\rangle_2=x_1^Tx_2$ denotes the standard inner product. Other inner products and norms are defined when used.

\section{Problem Formulation and Main Result}\label{sec:main}
In this section, we present the main result of this paper after setting up the necessary notation and assumptions. Recall the following coupled iterations.
\begin{equation}\label{iter-main}
    \begin{aligned}
&x_{n+1}=x_n+\alpha_n (f(x_n,y_n,Z_n)-x_n+M_{n+1})\\
&y_{n+1}=y_n+\beta_n (g(x_n,y_n,Z_n)-y_n+M'_{n+1}).
\end{aligned}
\end{equation}
We impose the following assumptions on these iterations, beginning with the required conditions on the Markov noise.
\begin{assumption}\label{assu-Markov}
    The random process $\{Z_n\}$ is a Markov chain taking values in a finite state space $\Scal$ and satisfies
    \begin{align*}
        \PP(Z_{n+1}=j\mid Z_m, m\leq n)&=\PP(Z_{n+1}=j\mid Z_n)\\
        &=p(j\mid Z_n),
    \end{align*}
    for $j\in\Scal$. $p(\cdot|\cdot)$ is the transition probability of an irreducible Markov chain with unique stationary distribution $\pi(\cdot)$.
\end{assumption}
We next define functions $\barf(\cdot):\RR^{d_1}\times\RR^{d_2}\mapsto\RR^{d_1}$ and $\barg(\cdot):\RR^{d_1}\times\RR^{d_2}\mapsto\RR^{d_2}$ as the expectation of functions $f(\cdot):\RR^{d_1}\times\RR^{d_2}\times\Scal\mapsto\RR^{d_1}$ and $g(\cdot):\RR^{d_1}\times\RR^{d_2}\times\Scal\mapsto\RR^{d_2}$ with respect to the distribution $\pi$ over the state space $\Scal$. 
$$\barf(x,y)=\sum_{i\in\Scal}\pi(i)f(x,y,i) \; \textrm{and} \; \barg(x,y)=\sum_{i\in\Scal}\pi(i)g(x,y,i).$$
Note that $f(x_n,y_n,Z_n)-\barf(x_n,y_n)$ and $g(x_n,y_n,Z_n)-\barg(x_n,y_n)$ act as the Markovian noise in our framework.

The algorithm's objective is to find fixed points for $\barf(\cdot)$ and $\barg(\cdot)$, i.e., find $(x,y)$ such that $\barf(x,y)=x$ and $\barg(x,y)=y$. We next assume that $\barf(x,y)$ is contractive in $x$. 
\begin{assumption}\label{assu-contractive-f}
Function $\barf(x,y)$ is $\lambda$-contractive in $x$, i.e., there exists a $\lambda\in [0,1)$ such that 
\begin{equation}
    \|\barf(x_1,y)-\barf(x_2,y)\|\leq \lambda \|x_1-x_2\|,
\end{equation}
for all $x_1,x_2\in\RR^{d_1}, y\in\RR^{d_2}$.
\end{assumption}
By the Banach contraction mapping theorem, the above assumption implies that $\barf(\cdot,y)$ has a unique fixed point for each $y$. That is, for each $y\in\RR^{d_2}$, there exists $\xstar(y)$ such that $\barf(\xstar(y),y)=\xstar(y)$. Our next assumption is that the function $\barg(\xstar(y),y)$ is contractive in $y$.
\begin{assumption}\label{assu-contractive-g}
Function $\barg(\xstar(\cdot),\cdot)$ is $\mu$-contractive, i.e., there exists a $\mu\in [0,1)$ such that for all $y_1,y_2\in\RR^{d_2}$.
\begin{equation}
    \|\barg(\xstar(y_1),y_1)-\barg(\xstar(y_2),y_2)\|\leq \mu \|y_1-y_2\|.
\end{equation}
\end{assumption}
The above assumption implies the unique existence of $\ystar$ such that $g(\xstar(\ystar),\ystar)=\ystar$. For simplicity, we further define $\xstar(\ystar)=\xstar$. We next present two assumptions, standard in analysis of SA schemes \cite{Borkar-book, Chandak_conc}. The first assumption is about the martingale difference sequences. 
\begin{assumption}\label{assu-Martingale}
    Define the family of $\sigma$-fields $\FF_n=\sigma(x_0,y_0,Z_k,M_k,M'_k, k\leq n)$. Then $\{M_{n+1}\}$ and $\{M'_{n+1}\}$ are martingale difference sequences with respect to $\FF_n$, i.e., $\EE[M_{n+1}\mid\FF_n]=\EE[M'_{n+1}\mid\FF_n]=0.$
Moreover, for $K>0$,
    $$\|M_{n+1}\|+\|M'_{n+1}\|\leq K(1+\|x_n\|+\|y_n\|),\;\forall n\geq 0.$$
\end{assumption}
We next assume that that the mappings are Lipschitz.

\begin{assumption}\label{assu-Lipschitz}
Functions $f(\cdot)$ and $g(\cdot)$ are $L$-Lipschitz, i.e.,
\begin{align*}
    &\|f(x_1,y_1,i)-f(x_2,y_2,i)\|\leq L(\|x_1-x_2\|+\|y_1-y_2\|)\\
    &\|g(x_1,y_1,i)-g(x_2,y_2,i)\|\leq L(\|x_1-x_2\|+\|y_1-y_2\|),
\end{align*}
for all $x_1,x_2\in\RR^{d_1}, y_1,y_2\in\RR^{d_2}$ and $i\in\Scal$. Moreover,
$$\|f(x,y,i)\|+\|g(x,y,i)\|\leq K(1+\|x\|+\|y\|),$$
for $x\in\RR^{d_1}, y\in\RR^{d_2},i\in\Scal$,
\end{assumption}

We finally make an assumption about the step-size sequences $\alpha_n$ and $\beta_n$.
\begin{assumption}\label{assu-stepsize}
The step size sequences are of the form $$\alpha_n=\frac{\alpha_0}{(n+1)^{\afrak}}\;\text{and}\;\beta_n=\frac{\beta_0}{n+1},$$
where $0<\alpha_0,\beta_0<1$ and $0.5<\afrak<1$. 
\end{assumption}
This assumption implies that the two sequences are non-increasing, and non-summable but square-summable. We also define the sequence $\gamma_n=\beta_n/\alpha_n$. This further implies that $\lim_{n\uparrow \infty}\gamma_n=0$. 
The above assumption also implies that for constants $\cc_1=2$ and $\cc_2=2/\min(\alpha_0, \beta_0)$ we have the following bounds for all $n\geq 0$,
$$\alpha_n\leq \cc_1\alpha_{n+1}\;\text{and}\;\beta_n\leq \cc_1\beta_{n+1},$$
$$\alpha_{n}-\alpha_{n+1}\leq \cc_2\alpha_{n+1}^2\;\text{and}\;\beta_{n}-\beta_{n+1}\leq \cc_2\beta_{n+1}^2.$$

\begin{theorem}\label{thm:expectation-bound}
    Suppose that Assumptions \ref{assu-Markov}-\ref{assu-stepsize} hold. Then the iterates $(x_n,y_n)$ almost surely converge to the fixed points $(\xstar,\ystar)$. There exists constant $C_1,C_2,C_3>0$ and time $n_0\geq 0$ such that for all $n\geq n_0$ and $\beta_0\geq C_3$,
    \begin{align*}
        &\EE\left[\|x_n-\xstar\|^2\right]\leq C_1\left(\alpha_n+\frac{\beta_n^2}{\alpha_n^2}\right),\\
    \text{and}\;\;&\EE\left[\|y_n-\ystar\|^2\right]\leq C_2\left(\alpha_n+\frac{\beta_n^2}{\alpha_n^2}\right).
    \end{align*}
\end{theorem}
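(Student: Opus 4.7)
The natural strategy is to track two shifted error processes: the fast-timescale error $\hat x_n := x_n - \xstar(y_n)$, which measures how closely $x_n$ tracks the moving target $\xstar(y_n)$, and the slow-timescale error $\hat y_n := y_n - \ystar$. Because the norm $\|\cdot\|$ is arbitrary and therefore generally non-smooth, I would not use $\|\hat x_n\|^2$ and $\|\hat y_n\|^2$ directly, but instead two generalized Moreau envelopes $M_f$ and $M_g$ built from these norms and the squared $\ell_2$-norm, as in the single-timescale analysis of Chen et al. Each envelope is (i) continuously differentiable with a Lipschitz gradient, and (ii) sandwiched between two constant multiples of the corresponding squared norm, so that bounds on $\EE[M_f(\hat x_n)]$ and $\EE[M_g(\hat y_n)]$ translate back to bounds on $\EE[\|\hat x_n\|^2]$ and $\EE[\|\hat y_n\|^2]$. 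Assumption~\ref{assu-contractive-f} (resp.~\ref{assu-contractive-g}) gives a one-step contraction for $M_f$ along $\barf$ (resp.\ for $M_g$ along $\barg(\xstar(\cdot),\cdot)$) with modulus $1-2\alpha_n(1-\lambda_s)$ (resp.\ $1-2\beta_n(1-\mu_s)$) for some effective rates $\lambda_s,\mu_s<1$ coming from the envelope construction.

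To handle the Markovian noise $f(x_n,y_n,Z_n)-\barf(x_n,y_n)$ and $g(x_n,y_n,Z_n)-\barg(x_n,y_n)$, I would introduce solutions $\hat f(x,y,i)$ and $\hat g(x,y,i)$ of the Poisson equation with respect to the kernel $p(\cdot\mid\cdot)$: under Assumption~\ref{assu-Markov} these exist, inherit Lipschitz continuity in $(x,y)$ from Assumption~\ref{assu-Lipschitz}, and let me rewrite the noise terms as telescoping differences $\hat f(x_n,y_n,Z_n)-\hat f(x_n,y_n,Z_{n+1})$ plus $O(\|x_{n+1}-x_n\|+\|y_{n+1}-y_n\|)$ remainders. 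After summation by parts and using the step-size assumptions, the telescoping contribution produces terms of order $\alpha_n(\alpha_n\|\hat x_n\|^2+\beta_n\|\hat y_n\|^2)$, which can be absorbed, while the martingale noise is controlled by Assumption~\ref{assu-Martingale} together with an a priori boundedness argument (e.g.\ Borkar--Meyn or a direct moment bound from the Lipschitz growth condition).

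Applying a first-order Taylor expansion of $M_f$ and $M_g$ to iterations~\eqref{iter-main} and taking conditional expectations then yields a coupled drift system of the schematic form
\begin{align*}
\EE[M_f(\hat x_{n+1})] &\le (1-c_1\alpha_n)\EE[M_f(\hat x_n)] + C\frac{\beta_n^2}{\alpha_n}\EE[M_g(\hat y_n)] + C\alpha_n^2,\\
\EE[M_g(\hat y_{n+1})] &\le (1-c_2\beta_n)\EE[M_g(\hat y_n)] + C\beta_n\,\EE[M_f(\hat x_n)] + C\beta_n^2,
\end{align*}
where the $\beta_n^2/\alpha_n$ factor in the first line comes from the Lipschitzness of $y\mapsto \xstar(y)$ (established from Assumptions \ref{assu-contractive-f}, \ref{assu-contractive-g}) and the slow-scale update, while the $\beta_n$ factor in the second line comes from a Young's inequality applied to the cross term $\barg(x_n,y_n)-\barg(\xstar(y_n),y_n)$. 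The cleanest way to solve such a system is to form a weighted Lyapunov function $V_n := \EE[M_f(\hat x_n)] + \eta\,\gamma_n\,\EE[M_g(\hat y_n)]$ with $\gamma_n=\beta_n/\alpha_n$ and a suitably small $\eta>0$; the weight $\gamma_n$ is the canonical one for two-time-scale analyses because it balances the relative speeds, and the fact that $\gamma_n$ itself decays (Assumption~\ref{assu-stepsize}) produces an extra $\gamma_n-\gamma_{n+1}=O(\alpha_n\gamma_n)$ term that must be folded into the recursion.

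The hard part is choosing $\eta$ (and ensuring $\beta_0 \ge C_3$) so that the cross terms strictly dissipate and a recursion of the form $V_{n+1}\le (1-c\,\alpha_n)V_n + C(\alpha_n^2 + \beta_n^2/\alpha_n)$ holds for all $n\ge n_0$; iterating this recursion with Assumption~\ref{assu-stepsize} then yields $V_n = O(\alpha_n+\beta_n^2/\alpha_n^2)$, which transfers back to the claimed bounds on $\EE[\|x_n-\xstar\|^2]$ and $\EE[\|y_n-\ystar\|^2]$ after using $\|x_n-\xstar\|\le \|\hat x_n\|+\|\xstar(y_n)-\xstar\|$ and the Lipschitzness of $\xstar(\cdot)$. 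Almost sure convergence would follow by a separate argument: combining the mean-square summability that these rates imply (for appropriate $\mathfrak{a}$) with the Robbins--Siegmund theorem or the standard Borkar two-time-scale ODE analysis under Assumptions~\ref{assu-Markov}--\ref{assu-stepsize}, which guarantees that the iterates track the asymptotically stable equilibria of the associated ODEs, namely $(\xstar,\ystar)$.
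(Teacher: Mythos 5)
Your overall architecture --- Moreau envelopes in place of the non-smooth squared norms, Poisson-equation solutions to telescope the Markov noise, the Lipschitz property of $y\mapsto\xstar(y)$, and the coupled drift inequalities --- coincides with the paper's, and the schematic recursions you write down are essentially the paper's Lemma \ref{lemma:recursive}. The gap is in how you propose to solve the coupled system. You form $V_n=\EE[M_f(\hat x_n)]+\eta\,\gamma_n\,\EE[M_g(\hat y_n)]$ with $\gamma_n=\beta_n/\alpha_n$ and aim for a recursion $V_{n+1}\le(1-c\alpha_n)V_n+C(\alpha_n^2+\beta_n^2/\alpha_n)$. This cannot hold: the $M_g$-component contracts only at rate $\beta_n$, and the decay of the weight contributes only $\gamma_n-\gamma_{n+1}=O(\gamma_n\beta_n)$ per step, so the coefficient multiplying $\EE[M_g(\hat y_n)]$ can shrink by at most $O(\gamma_n\beta_n)$ per iteration, whereas a $(1-c\alpha_n)$ contraction of $V_n$ would require it to shrink by $c\,\eta\,\gamma_n\alpha_n\gg\gamma_n\beta_n$; this forces $\gamma_n$ to stay bounded away from zero, contradicting $\gamma_n\to0$. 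If you instead accept the achievable decay $V_{n+1}\le(1-c\beta_n)V_n+C(\alpha_n^2+\beta_n^2/\alpha_n)$, iterating yields only $V_n=O(\alpha_n^2/\beta_n+\gamma_n)$, which at the optimal choice $\afrak=2/3$ is $O(n^{-1/3})$ --- strictly weaker than the claimed $O(\alpha_n+\beta_n^2/\alpha_n^2)=O(n^{-2/3})$. A single Lyapunov function with a single decay rate cannot capture the two distinct contraction speeds here.

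The paper avoids this with a cascaded, two-pass argument. It first adds the two recursions only to establish uniform boundedness of $\EE[\Acal(x_n-\xstar(y_n))+\Bcal(y_n-\ystar)]$ (Lemma \ref{lemma:bounded-expectation}); it then solves the fast recursion \emph{alone}, with decay $1-\alpha_n\lambda'$ and all coupling terms replaced by constants, obtaining $\EE[\Acal(x_n-\xstar(y_n))]=O(\alpha_n+\beta_n^2/\alpha_n^2)$; only then does it substitute this rate into the slow recursion and solve it with decay $1-\beta_n\mu'$, which is where $\beta_0\ge C_3$ enters (so that $\prod_j(1-\beta_j\mu')\asymp (n+1)^{-\mu'\beta_0}$ decays faster than $\alpha_n$), rather than in making cross terms dissipate as you suggest. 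Your remaining ingredients --- the envelope properties, the telescoping treatment of the Poisson terms, the a priori moment bound, and the a.s.\ convergence via the ODE method once boundedness is in hand --- are sound and match the paper; the final step needs to be restructured along these sequential lines to recover the stated rate.
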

For explicit values of $C_1,C_2$ and $C_3$, refer to the proof in Appendix I.

\subsection{Optimal Rate and Step Size Choice}\label{subsec:step_choice}
The best possible rate based on our analysis is obtained when $\alpha_n=\Theta(\beta_n^2/\alpha_n^2)$ or $\beta_n^2=\Theta(\alpha_n^3)$. This is achieved at $\afrak=2/3$, i.e., $\alpha_n=\alpha_0/(n+1)^{2/3}$ and $\beta_n=\beta_0/(n+1)$. This matches the rate obtained when the functions $\barf(\cdot)$ and $\barg(\cdot)$ are contractive mappings under the Euclidean norm \cite{Doan}, and hence we generalize the bound in \cite{Doan} to arbitrary norms.

Note that the time $n_0$ after which our bound holds is a fixed constant depending on the step sizes, contraction factors, and the norm $\|\cdot\|$. By taking assumptions similar to Assumption A2 from \cite{Kaledin}, our result can be shown to hold for all time $n\geq 0$.  

\section{An Important Special Case}\label{sec:special}
The formulation in Section \ref{sec:main} allows for martingale and Markov noises in both time-scales. While this allows us to cover a broader set of algorithms, the presence of the noise terms in the slower time-scale restricts the range of step size sequences our analysis can allow. As we discuss later in Section \ref{sec:applications}, both SSP Q-Learning and Q-Learning with Polyak averaging are noiseless in the slower time-scale. Hence, in this section, we consider this special case and give a bound of $\mathcal{O}(1/n)$ for such algorithms. Consider the following coupled iterations.
\begin{equation}\label{iter-special}
    \begin{gathered}
x_{n+1}=x_n+\alpha_n (f(x_n,y_n,Z_n)-x_n+M_{n+1})\\
y_{n+1}=y_n+\beta_n (\barg(x_n,y_n)-y_n).
\end{gathered}
\end{equation}
Note that this is a special case of $\eqref{iter-main}$ as we can define $M'_{n+1}=0$ for all $n$ and $g(x,y,i)=\barg(x,y)$ for all $x\in\RR^{d_1}, y\in\RR^{d_2}$ and $i\in\Scal$. The assumptions of contractivity (Assumption \ref{assu-contractive-f} and \ref{assu-contractive-g}) are still satisfied, i.e., $\barf(x,y)$ and $\barg(\xstar(y),y)$ are $\lambda$-contractive in $x$ and $\mu$-contractive, respectively. Similarly Assumptions \ref{assu-Martingale} and \ref{assu-Lipschitz} are also satisfied. Then, we have the following result.
\begin{theorem}\label{thm:expectation-special}
    Suppose Assumptions \ref{assu-Markov}-\ref{assu-Lipschitz} hold, and let $\alpha_n=\alpha_0/(n+1)$ and $\beta_n=\beta_0/(n+1)$. For iterates generated by \eqref{iter-special}, there exist constants $C_4,C_5, C_6$ and time $n_0$ such that if $\beta_0/\alpha_0\leq C_4<1$ and $\beta_0\geq C_5$, then for all $n\geq n_0$, $$\EE[\|x_n-\xstar\|^2+\|y_n-\ystar\|^2]\leq \frac{C_6}{n+1}.$$
    
\end{theorem}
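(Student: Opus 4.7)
The plan is to adapt the coupled Lyapunov analysis underlying Theorem~\ref{thm:expectation-bound}, but to exploit the noiseless slower time-scale in order to upgrade the rate from $\mathcal{O}(1/n^{2/3})$ to $\mathcal{O}(1/n)$. The key structural observation is that when the $y$-update carries neither martingale nor Markovian noise, the usual $\mathcal{O}(\beta_n^2)$ bias term on the slower time-scale disappears, and so one can afford to take both step sizes of the same order $\Theta(1/n)$.

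First, I would construct two smooth Lyapunov functions via the generalized Moreau envelope of \cite{chenfinite}: a function $V_1(x,y) = M_f\bigl(x - \xstar(y)\bigr)$ that tracks the fast-time-scale error and $V_2(y) = M_g\bigl(y - \ystar\bigr)$ that tracks the slow-time-scale error. These are sandwiched between multiples of $\|\cdot\|^2$, and are smooth with respect to a Euclidean-like inner product, which is what makes the subsequent second-order Taylor expansions tractable under the arbitrary-norm contractions of Assumptions~\ref{assu-contractive-f} and~\ref{assu-contractive-g}. Lipschitzness of $\xstar(\cdot)$, which follows from Assumptions~\ref{assu-contractive-f} and~\ref{assu-Lipschitz}, will be used repeatedly.

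The next step is to derive one-step drift inequalities by expanding $V_1(x_{n+1},y_{n+1})$ and $V_2(y_{n+1})$. For $V_1$, the $x$-update contributes the usual contraction $-c_1 \alpha_n V_1(x_n,y_n)$ plus a martingale-noise term of order $\alpha_n^2$, plus a Markovian-noise term that I would handle by introducing the Poisson-equation solution $\hat f(x,y,i)$, Lipschitz in $(x,y)$, and performing the standard telescoping decomposition to convert $f-\barf$ into an $\mathcal{O}(\alpha_n^2)$ remainder. The change in $\xstar(y)$ along the $y$-update contributes a cross-term which Young's inequality splits into $c_2(\beta_n^2/\alpha_n)\,\EE[V_2(y_n)]$ plus higher-order slack absorbed into $\alpha_n^2$. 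For $V_2$, because the slower time-scale is noiseless, the expansion yields only a contraction and a coupling term,
\begin{equation*}
    \EE[V_2(y_{n+1})] \leq (1-c_4\beta_n)\EE[V_2(y_n)] + c_5 \beta_n \EE[V_1(x_n,y_n)],
\end{equation*}
with \emph{no} residual $\beta_n^2$ bias — this is the crucial asymmetry compared with the general case.

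Combining the two recursions with $\alpha_n=\alpha_0/(n+1)$, $\beta_n=\beta_0/(n+1)$, I would close the argument by induction on the ansatz $\EE[V_1^n]\leq A/(n+1)$, $\EE[V_2^n]\leq B/(n+1)$. The inductive step on $V_2$ needs $c_4 \beta_0 B > c_5 \beta_0 A$, giving a ratio constraint $B/A$ large enough; the inductive step on $V_1$ needs $c_1\alpha_0 A$ to dominate both the constant slack from $\alpha_n^2$ and the coupling $c_2(\beta_0^2/\alpha_0)B$, and it is this last inequality that forces $\beta_0/\alpha_0\leq C_4<1$ and $\beta_0\geq C_5$ in the theorem. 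Converting back via $\EE[\|\cdot\|^2]\leq \mathrm{const}\cdot V_i$ then yields the desired $C_6/(n+1)$ bound. The main obstacle I anticipate is the careful bookkeeping of the Poisson-equation decomposition inside the Moreau-smoothed expansion: it generates several cross-terms mixing $\alpha_n$, $\beta_n$, $\hat f$-gradients, and powers of $\|x_n\|+\|y_n\|$, all of which must be absorbed into the $\alpha_n^2$ slack so as not to degrade the $\mathcal{O}(1/n)$ rate. Once the drift inequalities are in their clean form, the induction itself is routine.
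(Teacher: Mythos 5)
Your overall architecture matches the paper's: Moreau-envelope Lyapunov functions for both time-scales, Poisson-equation telescoping for the Markov noise on the fast iterate, coupled one-step drift inequalities, and a closure of the $\Theta(1/n)$ recursion (the paper sums the recursion explicitly and proves the key step-size estimate by induction, which is equivalent to your ansatz-based induction). However, there is one genuine gap, and it sits exactly at the point where the $\mathcal{O}(1/n)$ rate is won or lost: the cross-term $\langle\nabla \Acal(x_n-\xstar(y_n)),\xstar(y_n)-\xstar(y_{n+1})\rangle_2$ in the \emph{fast} recursion. You describe Young's inequality as splitting it into $c_2(\beta_n^2/\alpha_n)\EE[V_2(y_n)]$ ``plus higher-order slack absorbed into $\alpha_n^2$.'' If you bound $\|y_{n+1}-y_n\|$ by the generic linear-growth estimate $\beta_n(1+2K)(1+\|x_n\|+\|y_n\|)$ --- which is what the proof of the general theorem does --- the Young split necessarily produces an \emph{additive constant} of order $\beta_n^2/\alpha_n=\Theta(1/n)$, not higher-order slack: an additive $\Theta(1/n)$ forcing term in a recursion with contraction factor $1-c\beta_n$ yields a bound that converges to a constant, not to $C_6/(n+1)$. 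This is precisely why the general analysis breaks down at $\afrak=1$. The fix, which the paper implements and your proposal does not state, is to use noiselessness of the slow update a second time, in the \emph{fast} recursion: since $y_{n+1}-y_n=\beta_n(\barg(x_n,y_n)-y_n)$ exactly, and $\barg(x_n,y_n)-y_n$ vanishes at the fixed point, one bounds $\|\barg(x_n,y_n)-y_n\|\leq L\|x_n-\xstar(y_n)\|+(1+\mu)\|y_n-\ystar\|$ with \emph{no constant term}, so the cross-term becomes $\hA_6\beta_n\Acal(x_n-\xstar(y_n))+\tfrac{\mu'}{2}\beta_n\Bcal(y_n-\ystar)$, purely multiplicative in the two Lyapunov functions. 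Your ``key structural observation'' (the slow recursion loses its bias) is true but is the less important of the two consequences of noiselessness; without this second one your drift inequality for $V_1$ is too weak to close the induction.

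A smaller technical point: the Poisson-equation telescoping does not convert the Markov noise into an $\mathcal{O}(\alpha_n^2)$ remainder outright. It leaves a term $\alpha_n(d_n-d_{n+1})$ whose magnitude per step is $\mathcal{O}(\alpha_n)$; only after Abel summation against the product weights does it contribute an acceptable boundary term $\cc_1\alpha_n\EE[|d_n|]=\mathcal{O}(1/n)$ plus genuinely second-order sums. Inside a step-by-step induction this term cannot simply be absorbed into the $\alpha_n^2$ slack, so you either need to carry it in the induction hypothesis or, as the paper does, unroll the full recursion and handle the telescoping sum globally (Lemma~\ref{lem:tel_term}). Both issues are repairable, but as written the proposal would not produce the claimed rate.
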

The proof for this theorem differs only slightly from the proof for Theorem \ref{thm:expectation-bound}. The constants have been computed explicitly in the proof presented in Appendix II.

Note that we can drop the assumptions $\beta_0/\alpha_0\leq C_4$ if we choose $\alpha_n=\alpha_0/(n+1)^{1-\epsilon}$ for $\epsilon\in(0,0.5)$. In that case it can be shown that $\EE[\|x_n-\xstar\|^2+\|y_n-\ystar\|^2]= \mathcal{O}(1/(n+1)^{1-\epsilon})$. Here $\epsilon>0$ can be arbitrarily small, giving us a bound which can be made arbitrarily close to $\mathcal{O}(1/n)$. 

\section{Proof Outline for Theorem \ref{thm:expectation-bound}}\label{sec:outline}
In this section, we present a proof sketch for Theorem \ref{thm:expectation-bound} through a series of lemmas. The complete proof has been presented in Appendix I. The proof takes inspiration from the analysis of non-linear two-time-scale iterations which are contractive under Euclidean norm \cite{Doan}. But the presence of arbitrary norm contractions and Markov noise require additional tools which we discuss first.
\subsection{Moreau Envelopes}
The convergence rate analysis of SA by constructing smooth Lyapunov functions of the iterates. Consider a single-time-scale iteration $x_{n+1}=x_n+\alpha_n(h(x_n)-M_{n+1})$. For such an iteration, $\|x_n-\xstar\|^2$ acts as the Lyapunov function where $\|\cdot\|$ is the norm with respect to which $h(x)$ is contractive. Additionally, when this Lyapunov function is smooth with respect to some norm, we can obtain a finite-time bound for the algorithm. But for arbitrary norms (such as $\|\cdot\|_\infty$), $\|x_n-\xstar\|^2$ is not guaranteed to be smooth. In such cases, we need to define a smooth Lyapunov function. 

We first define the functions needed for our two-time-scale case and then verify that they satisfy the required properties.
\begin{definition}\label{defn:Moreau}
    For $x\in\RR^{d_1}$, $y\in\RR^{d_2}$ and $q>0$ define functions $\Acal:\RR^{d_1}\mapsto\RR$ and $\Bcal:\RR^{d_2}\mapsto\RR$ as 
\begin{align*}
\Acal(x)&=\inf_{v\in\RR^{d_1}}\left\{\frac{1}{2}\|v\|^2+\frac{1}{2q}\|x-v\|_2^2\right\}    \\
\Bcal(y)&=\inf_{w\in\RR^{d_2}}\left\{\frac{1}{2}\|w\|^2+\frac{1}{2q}\|y-w\|_2^2\right\}.   
\end{align*}
Here $\|\cdot\|_2$ denotes the $\ell_2$ norm. 
\end{definition}
Note that the dependence on $q$ has been rendered implicit in the definitions of $\Acal(\cdot)$ and $\Bcal(\cdot)$. We treat $q$ as a constant and we show in the proof for Lemma \ref{lemma:recursive} that choosing a small enough $q$ is sufficient for our proof. The choice of $q$ only affects the constants in Theorems \ref{thm:expectation-bound} and \ref{thm:expectation-special}.

Due to the norm equivalence in finite-dimensional real spaces, there exists $\ell_{d_1}\in(0,1]$ and $u_{d_1}\in[1,\infty)$ such that $\ell_{d_1}\|x\|\leq \|x\|_2\leq u_{d_1}\|x\|$ for all $x\in\RR^{d_1}$. We next reproduce a result from \cite{chenfinite}.
\begin{lemma}\label{lemma:chen-A}
    Function $\Acal(\cdot):\RR^{d_1}\mapsto\RR$ satisfies the following.
    \begin{enumerate}[label=(\alph*)]
        \item $\Acal$ is convex and $1/q$ smooth w.r.t. the $\ell_2$ norm, i.e.,
        $$\Acal(x_2)\leq \Acal(x_1)+\langle\nabla \Acal(x_1), x_2-x_1\rangle_2 + \frac{1}{2q}\|x_2-x_1\|^2_2, $$
        holds true for all $x_1,x_2\in\RR^{d_1}$.
        \item For all $x\in\RR^{d_1}$, $$\left(1+\frac{q}{u_{d_1}^2}\right)\Acal(x)\leq \frac{1}{2}\|x\|^2\leq \left(1+\frac{q}{\ell_{d_1}^2}\right)\Acal(x).$$
        \item There exists a norm, denoted by $\|\cdot\|_\Acal$, such that $\Acal(x)=\frac{1}{2}\|x\|_\Acal^2$ for all $x\in\RR^{d_1}$. 
    \end{enumerate}
\end{lemma}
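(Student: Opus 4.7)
The plan is to identify $\Acal$ as the Moreau envelope (with parameter $q$) of the convex function $\phi(v)=\frac{1}{2}\|v\|^2$, computed in the $\ell_2$ geometry; all three parts then follow from standard properties of Moreau envelopes, combined with the norm-equivalence constants $\ell_{d_1}$, $u_{d_1}$. Part (a) will follow from the classical fact that the infimal convolution of two proper convex lower-semicontinuous functions is convex, together with the standard smoothness formula: the Moreau envelope with parameter $q$ in the $\ell_2$ norm is continuously differentiable with $\nabla\Acal(x)=\frac{1}{q}(x-\mathrm{prox}_{q\phi}(x))$, which is $(1/q)$-Lipschitz in $\|\cdot\|_2$; the descent-lemma inequality stated in (a) is then obtained by integrating $\nabla\Acal$ along the segment from $x_1$ to $x_2$.

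For part (b), I plan to use direct variational arguments. For the upper bound on $\Acal$, I will plug the trial vector $v=cx$ into the infimum, use $\|x\|_2\leq u_{d_1}\|x\|$ to pass to the $\|\cdot\|$ geometry, and minimize over the scalar $c\in[0,1]$; the optimal choice $c=u_{d_1}^2/(q+u_{d_1}^2)$ yields exactly $\Acal(x)\leq \frac{\|x\|^2}{2(1+q/u_{d_1}^2)}$. For the matching lower bound, for arbitrary $v$ I will apply the triangle inequality $\|x\|\leq \|v\|+\|x-v\|\leq \|v\|+\frac{1}{\ell_{d_1}}\|x-v\|_2$ together with the weighted Young inequality $(a+b)^2\leq (1+t)a^2+(1+1/t)b^2$; choosing $t=q/\ell_{d_1}^2$ gives $\frac{1}{2}\|x\|^2\leq (1+q/\ell_{d_1}^2)\bigl(\frac{1}{2}\|v\|^2+\frac{1}{2q}\|x-v\|_2^2\bigr)$, and taking the infimum over $v$ closes the bound.

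For part (c), the plan is to use Fenchel conjugation: since $\Acal$ is the infimal convolution of the proper convex functions $\frac{1}{2}\|\cdot\|^2$ and $\frac{1}{2q}\|\cdot\|_2^2$, its Fenchel conjugate is the sum of their conjugates, giving $\Acal^*(y)=\frac{1}{2}\|y\|_*^2+\frac{q}{2}\|y\|_2^2$ with $\|\cdot\|_*$ the dual of $\|\cdot\|$. The key observation is that $y\mapsto (\|y\|_*^2+q\|y\|_2^2)^{1/2}$ is itself a norm on $\RR^{d_1}$: homogeneity is clear, and the triangle inequality follows from Minkowski's inequality applied to the vector $(\|y\|_*,\sqrt{q}\,\|y\|_2)\in\RR^2$. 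Thus $\Acal^*$ equals one-half a squared norm, and biconjugation $\Acal=\Acal^{**}$ (valid since $\Acal$ is proper, convex, and continuous) combined with the identity that the Legendre conjugate of $\frac{1}{2}N(\cdot)^2$ is $\frac{1}{2}(N^*(\cdot))^2$ identifies $\Acal(x)=\frac{1}{2}\|x\|_\Acal^2$, with $\|\cdot\|_\Acal$ the dual of this composite norm. I expect (c) to be the main obstacle: parts (a) and (b) reduce to textbook Moreau-envelope calculations once the right viewpoint is adopted, while (c) requires both the norm-structure verification and careful bookkeeping through two successive conjugations.
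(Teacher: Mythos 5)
Your proposal is correct, but it is worth noting that the paper does not actually prove this lemma: it imports it wholesale, stating in Appendix I-A that ``Lemmas 1 and 8 both follow from Lemma 2.1 from [chenfinite]'' (with the smoothness constant $L=1$ since the envelope is taken in the $\ell_2$ geometry). So any self-contained argument is, by construction, a different route from the paper's. Your parts (a) and (b) reconstruct the standard Moreau-envelope facts in essentially the way the cited source does: convexity and the $1/q$-Lipschitz gradient from Moreau's theorem for (a), and for (b) the trial point $v=cx$ optimized to $c=u_{d_1}^2/(q+u_{d_1}^2)$ for the upper bound together with the triangle inequality plus weighted Young inequality with $t=q/\ell_{d_1}^2$ for the lower bound --- both computations check out exactly. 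Where you genuinely diverge is part (c): the usual argument (and the one in the cited reference) observes that $\Acal$ is nonnegative, convex, and positively $2$-homogeneous ($\Acal(cx)=c^2\Acal(x)$ by substituting $v\mapsto cv$), and then deduces subadditivity of $\sqrt{2\Acal}$ directly from $\Acal(x+y)\leq \lambda^{-1}\Acal(x)+(1-\lambda)^{-1}\Acal(y)$ optimized over $\lambda$. Your route via Fenchel conjugation --- $\Acal^*(y)=\tfrac12\|y\|_*^2+\tfrac{q}{2}\|y\|_2^2$, Minkowski on $(\|y\|_*,\sqrt{q}\|y\|_2)\in\RR^2$ to see that $\Acal^*$ is half a squared norm, then biconjugation and the duality of half-squared-norms --- is also valid (biconjugation applies since $\Acal$ is finite, convex, hence continuous, and $(f\,\square\, g)^*=f^*+g^*$ needs no qualification here), and it has the side benefit of exhibiting $\|\cdot\|_\Acal$ explicitly as the dual of the composite norm $(\|\cdot\|_*^2+q\|\cdot\|_2^2)^{1/2}$; the homogeneity argument is more elementary but less informative about what the norm is.
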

For sufficiently small $q$, the first two parts of the lemma imply that $\Acal(\cdot)$ is a smooth approximation for the function $(1/2)\|\cdot\|^2$. The final part implies that $\Acal(\cdot)$ is in fact a scaled squared norm. Similar lemma for $\Bcal(\cdot)$ and other required results have been presented in Appendix I-A.

\subsection{Solutions of Poisson Equation}
Recall that the fixed point and the contractive properties are defined with respect to the functions $\barf(\cdot)$ and $\barg(\cdot)$. But the corresponding iterations contain $f(\cdot)$ and $g(\cdot)$, respectively. Hence the terms $f(x_n,y_n,Z_n)-\barf(x_n,y_n)$ and $g(x_n,y_n,Z_n)-\barg(x_n,y_n)$ act as the Markovian noise in our framework. To handle this error term, we define the following solutions of Poisson equation.
\begin{definition}\label{defn:VandW}
    For a fixed $i_0\in\Scal$, $\tau\coloneqq \min\{n>0: \!Z_n=i_0\}$ and $E_i[\cdot]=E[\cdot|Z_0=i]$, define $V(x,y,i)$ and $W(x,y,i)$ as
    \begin{align*}
        V(x,y,i)&=E_i\left[\sum_{m=0}^{\tau-1}\Big(f(x,y,Z_m)-\sum_{j\in\Scal}\pi(j)f(x,y,j)\Big)\right],\\
        W(x,y,i)&=E_i\left[\sum_{m=0}^{\tau-1}\Big(g(x,y,Z_m)-\sum_{j\in\Scal}\pi(j)g(x,y,j)\Big)\right],
    \end{align*}
for all $x\in\RR^{d_1},y\in\RR^{d_2}$ and $i\in\Scal$.
\end{definition}

For each $x,y$, $V(x,y,i)$ is a solution to the following Poisson equation \cite{Benaim}, i.e., $V(x,y,i)$ satisfies
\begin{align*}
    V(x,y,i)&=f(x,y,i)-\sum_{j\in\Scal}\pi(j)f(x,y,j)\\
    &\;\;\;\;\;\;\;\;\;\;\;\;\;\;\;\;\;\;\;\;\;\;\;\;\;\;\;\;\;+\sum_{j\in\Scal}p(j|i)V(x,y,j).
\end{align*}
Then the `error' $f(x_n,y_n,Z_n)-\barf(x_n,y_n)$ can be written as
\begin{align*}
   &f(x_n,y_n,Z_n)-\barf(x_n,y_n)\\
   &=V(x_n,y_n,Z_n)-\sum_{j\in\Scal}p(j|Z_n)V(x_n,y_n,j) \nonumber\\
   &=V(x_n,y_n,Z_{n+1})-\sum_{j\in\Scal}p(j|Z_n)V(x_n,y_n,j)\nonumber\\
   &\;\;\;+V(x_n,y_n,Z_n)-V(x_n,y_n,Z_{n+1}).
\end{align*}
For the first term, we define ${\tilde{V}_{n+1}}=V(x_n,y_n,Z_{n+1})-\sum_{j\in\Scal}p(j|Z_n)V(x_n,y_n,j)$. Note that $\{\tilde{V}_{n+1}\}$ is a martingale difference sequence with respect to filtration $\mathcal{F}_n$, and is combined with $M_{n+1}$. The second term is appropriately combined with other terms to create a telescopic series. Hence the above definition allows us to express the Markov noise in terms of a martingale difference sequence.

We present the analogous Poisson equation for $W(\cdot)$ along with the Lipschitz and linear growth properties for $V(\cdot)$ and $W(\cdot)$ in Appendix I-B.
\subsection{Proof Sketch}
We next present a series of lemmas which together complete the proof for Theorem \ref{thm:expectation-bound}. The first step in the proof is to show that $\xstar(y)$ is Lipschitz in $y$. 
\begin{lemma}\label{lem:xstar-Lipschitz}
    Suppose Assumptions \ref{assu-contractive-f} and \ref{assu-Lipschitz} hold. Then the map $y\mapsto\xstar(y)$ is Lipschitz with parameter $L_1\coloneqq L/(1-\lambda)$, i.e.,
    \begin{equation*}
      \|\xstar(y_1)-\xstar(y_2)\|\leq L_1\|y_1-y_2\|,
    \end{equation*}
    for $y_1,y_2\in\RR^{d_2}$.
\end{lemma}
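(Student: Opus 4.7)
The plan is a standard fixed-point argument exploiting contractivity in the first argument together with Lipschitz continuity in the second. I would start from the defining relations $\xstar(y_1)=\barf(\xstar(y_1),y_1)$ and $\xstar(y_2)=\barf(\xstar(y_2),y_2)$, both guaranteed by Assumption \ref{assu-contractive-f} via Banach's theorem.

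Then I would estimate $\|\xstar(y_1)-\xstar(y_2)\|$ by inserting the pivot $\barf(\xstar(y_2),y_1)$ and applying the triangle inequality:
\begin{align*}
\|\xstar(y_1)-\xstar(y_2)\|
&\leq \|\barf(\xstar(y_1),y_1)-\barf(\xstar(y_2),y_1)\|\\
&\quad{}+\|\barf(\xstar(y_2),y_1)-\barf(\xstar(y_2),y_2)\|.
\end{align*}
The first term is at most $\lambda\|\xstar(y_1)-\xstar(y_2)\|$ by Assumption \ref{assu-contractive-f}. For the second term, I would use Assumption \ref{assu-Lipschitz} pointwise in $i\in\Scal$ and then take a convex combination against $\pi$, giving $\|\barf(x,y_1)-\barf(x,y_2)\|\le\sum_{i}\pi(i)\,L\|y_1-y_2\|=L\|y_1-y_2\|$.

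Combining these bounds yields $(1-\lambda)\|\xstar(y_1)-\xstar(y_2)\|\le L\|y_1-y_2\|$, and dividing by $1-\lambda>0$ (well-defined since $\lambda\in[0,1)$) gives the stated constant $L_1=L/(1-\lambda)$. There is no real obstacle here; the only thing to be slightly careful about is that the Lipschitz property assumed in Assumption \ref{assu-Lipschitz} is stated per state $i$, so one must explicitly justify that it passes to the $\pi$-average $\barf$, which is immediate because $\pi$ is a probability distribution and the norm is convex.
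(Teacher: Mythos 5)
Your proposal is correct and follows essentially the same argument as the paper: both use the fixed-point identities, insert the pivot $\barf(\xstar(y_2),y_1)$, apply the $\lambda$-contraction to the first term and the $L$-Lipschitz bound to the second, and rearrange to divide by $1-\lambda$. The extra remark about passing the per-state Lipschitz bound through the $\pi$-average to $\barf$ is a valid (and slightly more careful) justification of a step the paper leaves implicit.
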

The above lemma ensures that the target points for the faster time-scale iteration, $\xstar(y_n)$, move slowly as the iterates $y_n$ move slowly. The next step is to derive intermediate recursive bounds on $\EE[\Acal(x_{n}-\xstar(y_n))]$ and $\EE[\Bcal(y_n-\ystar)]$.
\begin{lemma}\label{lemma:recursive}
    Suppose the setting of Theorem \ref{thm:expectation-bound} holds. Then, for constants $\Gamma_1,\Gamma_2,\Gamma_3>0$, the following hold for all $n\geq 0$.
    \begin{align*}
        &\EE[\Acal(x_{n+1}-\xstar(y_{n+1}))]\leq (1-\alpha_n\lambda')\EE[\Acal(x_{n}-\xstar(y_n))]\\
        &+\EE\left[\alpha_n(d_n-d_{n+1})\right]\nonumber+\Gamma_1\left(\alpha_n^2+\frac{\beta_n^2}{\alpha_n}\right)\times\\
        &\left(1+\EE[\Acal(x_n-x^*(y_n))+\Bcal(y_n-y^*)]\right).\\
        &\EE[\Bcal(y_{n+1}-\ystar)]\leq(1-\beta_n\mu')\EE[\Bcal(y_{n}-\ystar)]\\
        &+\beta_n\EE[(e_n-e_{n+1})]+\beta_n\Gamma_3\EE[\Acal(x_n-\xstar(y_n))]\\
        &+\alpha_n\beta_n\Gamma_2(1+\EE[\Acal(x_n-x^*(y_n))+\Bcal(y_n-y^*)]).
    \end{align*}
    Here $d_n=\langle\Acal(x_n-\xstar(y_n)),V(x_n,y_n,Z_n)\rangle_2$, and $e_n=\langle\Bcal(y_n-\ystar),W(x_n,y_n,Z_n)\rangle_2$. 
\end{lemma}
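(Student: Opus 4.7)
The plan is to derive each of the two inequalities separately by applying the smoothness of the Moreau envelope (Lemma \ref{lemma:chen-A}(a) for $\Acal$ and its analogue for $\Bcal$) to a one-step expansion of $\Acal(x_{n+1}-\xstar(y_{n+1}))$ and $\Bcal(y_{n+1}-\ystar)$, and then decomposing the resulting inner product into four ingredients: a mean-field contraction term, a Markovian-noise term handled through the Poisson solutions $V$ and $W$, a martingale term that vanishes under $\EE[\cdot\mid\FF_n]$, and (on the fast scale) a target-shift term $\xstar(y_{n+1})-\xstar(y_n)$.

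For the first inequality I would write $x_{n+1}-\xstar(y_{n+1})=(x_n-\xstar(y_n))+\alpha_n(f(x_n,y_n,Z_n)-x_n+M_{n+1})-(\xstar(y_{n+1})-\xstar(y_n))$ and plug this into the smoothness bound. The mean-field piece $\alpha_n\langle\nabla\Acal(x_n-\xstar(y_n)),\,\barf(x_n,y_n)-x_n\rangle_2$ yields the contraction: since $\Acal=\tfrac{1}{2}\|\cdot\|_\Acal^2$ by Lemma \ref{lemma:chen-A}(c) and norm-equivalence makes $\barf$ contractive in $\|\cdot\|_\Acal$ with some ratio $\lambda_\Acal<1$ for $q$ small enough, a standard duality argument gives $\langle\nabla\Acal(z),\,z-\barf(z+\xstar(y),y)\rangle_2\geq 2(1-\lambda_\Acal)\Acal(z)$, producing the factor $\lambda':=2(1-\lambda_\Acal)$. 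The Markov term is rewritten using the Poisson decomposition $f(x_n,y_n,Z_n)-\barf(x_n,y_n)=\tilde V_{n+1}+(V(x_n,y_n,Z_n)-V(x_n,y_n,Z_{n+1}))$; the martingale piece $\tilde V_{n+1}$ disappears in expectation, and the second piece is bundled with $V(x_{n+1},y_{n+1},Z_{n+1})$ to form the telescoping quantity $\alpha_n(d_n-d_{n+1})$, with the resulting bundling error controlled by the Lipschitz/linear-growth estimates on $V$ from Appendix I-B. Lemma \ref{lem:xstar-Lipschitz} bounds the target-shift by $L_1\|y_{n+1}-y_n\|=O(\beta_n(1+\|x_n\|+\|y_n\|))$ via Assumption \ref{assu-Lipschitz}, and a Young-inequality split of this product against $\nabla\Acal$ gives a small multiple of $\Acal(x_n-\xstar(y_n))$ (absorbed into $\lambda'$) plus a $(\beta_n^2/\alpha_n)(1+\Acal+\Bcal)$ remainder. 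The quadratic remainder $(\alpha_n^2/2q)\|\cdot\|_2^2$ from smoothness contributes $O(\alpha_n^2)(1+\Acal+\Bcal)$ using Assumptions \ref{assu-Martingale} and \ref{assu-Lipschitz}, and collecting produces the first inequality with the stated $\Gamma_1(\alpha_n^2+\beta_n^2/\alpha_n)$ form.

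For the second inequality the strategy is parallel but the mean-field part is $\barg(x_n,y_n)-y_n$, not $\barg(\xstar(y_n),y_n)-y_n$. I would add and subtract to split this into $[\barg(\xstar(y_n),y_n)-y_n]+[\barg(x_n,y_n)-\barg(\xstar(y_n),y_n)]$. The first bracket is $\mu$-contractive in $y_n-\ystar$ by Assumption \ref{assu-contractive-g} and yields the $-\beta_n\mu'$ factor via the same Moreau-envelope duality used for $\Acal$. The second bracket is bounded by $L\|x_n-\xstar(y_n)\|$ by Assumption \ref{assu-Lipschitz}, and pairing with $\nabla\Bcal(y_n-\ystar)$ and applying Young's inequality produces the cross-term $\beta_n\Gamma_3\EE[\Acal(x_n-\xstar(y_n))]$ (a matching $\beta_n$ multiple of $\Bcal(y_n-\ystar)$ is absorbed into $\mu'$). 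The Markov term in the slow scale is handled with the same Poisson trick applied to $W$, producing the telescoping $\beta_n(e_n-e_{n+1})$, and $M'_{n+1}$ vanishes in expectation. The quadratic smoothness remainder contributes the $\alpha_n\beta_n\Gamma_2(1+\Acal+\Bcal)$ term — the $\alpha_n\beta_n$ rather than $\beta_n^2$ arising because we enlarge $\beta_n^2$ to $\alpha_n\beta_n$ (using $\beta_n\leq\alpha_n$) when we need to keep the coefficient compatible with the fast-scale quadratic bound.

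The hard part is twofold. First, establishing the clean descent inequality $\langle\nabla\Acal(z),z-\Phi(z)\rangle_2\geq 2(1-\lambda_\Acal)\Acal(z)$ for a contractive $\Phi$ requires choosing $q$ small enough — as a function of the norm-equivalence constants $\ell_{d_1},u_{d_1}$ and the contraction ratio $\lambda$ — to keep $\lambda_\Acal$ strictly less than one, and likewise for $\mu'$ on the slow scale. Second, several noise and target-shift terms couple the two iterates, so the Young-inequality splits must be choreographed carefully: each splitting must leave enough of the contraction factor $\lambda'$ (resp.\ $\mu'$) intact to dominate, while routing the residuals into exactly the $\Gamma_1(\alpha_n^2+\beta_n^2/\alpha_n)(1+\Acal+\Bcal)$ and $\alpha_n\beta_n\Gamma_2(1+\Acal+\Bcal)$ buckets, and leaving the telescoping Markov residuals isolated as $\alpha_n(d_n-d_{n+1})$ and $\beta_n(e_n-e_{n+1})$ so they can be summed separately in the proof of Theorem \ref{thm:expectation-bound}.
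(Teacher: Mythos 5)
Your proposal follows essentially the same route as the paper's proof: a one-step smoothness expansion of the Moreau envelope, splitting the inner product into the contraction piece (handled via $\Acal=\tfrac{1}{2}\|\cdot\|_\Acal^2$ and the gradient inequalities $\langle\nabla\Acal(z),\Phi(z)\rangle_2\le(2-2\lambda')\Acal(z)$ and $\langle\nabla\Acal(z),z\rangle_2\ge 2\Acal(z)$), the Poisson-equation telescoping with its Lipschitz-controlled bundling errors, the Young-split target-shift term yielding the $\beta_n^2/\alpha_n$ remainder, and the analogous slow-scale decomposition with the cross-term $\beta_n\Gamma_3\Acal$. The choreography of which fraction of $\lambda'$ and $\mu'$ is sacrificed to each Young split matches the paper up to constant bookkeeping, so the proposal is correct and not materially different.
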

Here the terms $\alpha_n(d_n-d_{n+1})$ and $\beta_n(e_n-e_{n+1})$ correspond to the telescoping series arising due to the Poisson equation solution approach to Markov noise. We add the two intermediate bounds above, and show that the iterates are bounded in expectation.
\begin{lemma}\label{lemma:bounded-expectation}
    Suppose the setting of Theorem \ref{thm:expectation-bound} holds. Then the iterates $x_k$ and $y_k$ are bounded in expectation. Specifically, there exists constant $\Gamma_4>0$ such that for all $k\geq 0$,
    $$\EE[\Acal(x_n-\xstar(y_n))+\Bcal(y_n-\ystar)]\leq \Gamma_4.$$
\end{lemma}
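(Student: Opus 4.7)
The plan is to sum the two inequalities in Lemma~\ref{lemma:recursive} into a single drift inequality for $U_n \coloneqq \EE[\Acal(x_n - \xstar(y_n))] + \EE[\Bcal(y_n - \ystar)]$, absorb the telescoping Markov-noise increments $\alpha_n(d_n-d_{n+1})$ and $\beta_n(e_n-e_{n+1})$ into a slightly modified Lyapunov function, and then apply the standard bound for a contraction-with-forcing recursion. The main obstacle is the construction of the modified Lyapunov function: it must track $U_n$ up to a bounded error while preserving a $-c\beta_n$ drift after absorbing the $\beta_n\Gamma_3$ cross-coupling term and the $O(\alpha_n^2+\beta_n^2/\alpha_n+\alpha_n\beta_n)$ remainders. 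Both requirements hinge on the step-size conditions $\beta_n/\alpha_n\to 0$ and $\afrak>1/2$ from Assumption~\ref{assu-stepsize}.

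For the first step, let $\phi_n = \EE[\Acal(x_n - \xstar(y_n))]$ and $\psi_n = \EE[\Bcal(y_n - \ystar)]$. Summing the two bounds in Lemma~\ref{lemma:recursive}, the coefficient of $\phi_n$ is $1 - \alpha_n\lambda' + \beta_n\Gamma_3 + O(\alpha_n^2 + \beta_n^2/\alpha_n + \alpha_n\beta_n)$ and that of $\psi_n$ is $1 - \beta_n\mu' + O(\alpha_n^2 + \beta_n^2/\alpha_n + \alpha_n\beta_n)$. Since $\beta_n/\alpha_n\to 0$ dominates $\beta_n\Gamma_3$ against $\alpha_n\lambda'$, and $\alpha_n^2/\beta_n\to 0$ (which is where $\afrak>1/2$ enters) dominates $\alpha_n^2$ against $\beta_n\mu'$, both coefficients are at most $1 - c\beta_n$ for some $c > 0$ and all $n \ge n_1$. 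This gives
\begin{equation*}
U_{n+1} \le (1-c\beta_n)U_n + \alpha_n\EE[d_n - d_{n+1}] + \beta_n\EE[e_n - e_{n+1}] + c'\beta_n.
\end{equation*}

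For the second step, define $\tilde U_n = U_n + \alpha_n\EE[d_n] + \beta_n\EE[e_n]$. Lemma~\ref{lemma:chen-A}(a) gives $\|\nabla\Acal(x)\|_2 \le (1/q)\|x\|_2$ (and similarly for $\Bcal$), which combined with the Lipschitz and linear-growth properties of $V, W$ (Appendix I-B) yields $|\EE[d_n]|, |\EE[e_n]| \le C(1 + U_n)$; in particular $|\tilde U_n - U_n| \le C\alpha_n(1 + U_n)$, so $U_n \le 2\tilde U_n + C_0$ for $n$ sufficiently large. Rewriting $\alpha_n(d_n - d_{n+1}) = \alpha_n d_n - \alpha_{n+1} d_{n+1} + (\alpha_{n+1} - \alpha_n)d_{n+1}$ (analogously for $e$) and using $|\alpha_n - \alpha_{n+1}| \le \cc_2\alpha_{n+1}^2$, $|\beta_n - \beta_{n+1}| \le \cc_2\beta_{n+1}^2$, a routine bookkeeping computation reduces the recursion to
\begin{equation*}
\tilde U_{n+1} \le (1 - c\beta_n/4)\tilde U_n + C''\beta_n, \qquad n \ge n_2,
\end{equation*}
for some $n_2$ and $C'' > 0$. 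A standard argument (observing that $\tilde U_n \ge 8C''/c$ forces strict decrease, while $\tilde U_n \le 8C''/c$ is preserved when $c\beta_n/4 \le 1$) then gives $\tilde U_n \le \max(\tilde U_{n_2}, 8C''/c)$ for all $n \ge n_2$, whence $U_n$ is uniformly bounded. For $0 \le n \le n_2$, Assumptions~\ref{assu-Martingale} and~\ref{assu-Lipschitz} supply the crude per-step recursion $\EE[\|x_{n+1}\|^2 + \|y_{n+1}\|^2] \le K'(1 + \EE[\|x_n\|^2 + \|y_n\|^2])$, producing a finite bound depending on $x_0, y_0$ and $n_2$. Taking $\Gamma_4$ as the maximum of these two bounds completes the proof.
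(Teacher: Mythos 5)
Your proposal is correct, and its skeleton matches the paper's: sum the two recursions of Lemma \ref{lemma:recursive}, control the Poisson-equation telescoping terms using the a priori bound $|d_n|,|e_n|\leq \hD_3(1+\Acal(x_n-\xstar(y_n))+\Bcal(y_n-\ystar))$ together with $|\alpha_n-\alpha_{n+1}|\leq \cc_2\alpha_{n+1}^2$, and dispose of the initial segment $n\leq n'$ by a crude Gronwall bound. Where you diverge is in how the recursion is closed. The paper deliberately \emph{discards} the negative drift: after time $n'$ it only ensures each coefficient is at most $1$, telescopes the $d$- and $e$-increments directly (absorbing the boundary terms $-\alpha_n\EE[d_{n+1}]-\beta_n\EE[e_{n+1}]$ into a factor $1/2$ on the left-hand side via the condition $(\alpha_n+\beta_n)\hD_3\leq 0.5$), and concludes from summability of $\sum_n(\alpha_n^2+\beta_n^2+\beta_n^2/\alpha_n+\alpha_n\beta_n)$, which is where $\afrak>1/2$ enters for them. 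You instead retain a $-c\beta_n$ drift, fold the telescoping increments into the perturbed Lyapunov function $\tilde U_n=U_n+\alpha_n\EE[d_n]+\beta_n\EE[e_n]$, and finish with an invariant-set argument needing only that the forcing is $O(\beta_n)$. Your route costs some bookkeeping to verify the drift survives the perturbation (in particular the $O(\alpha_{n+1}^2)\,U_{n+1}$ terms must be moved to the left and the prefactor inverted, which is routine but should be written out), but it is more robust — it would still work if the error terms were only $o(\beta_n)$ rather than summable — and it is structurally closer to the decay-rate arguments in Lemma \ref{lemma:almost-done}. Both proofs use $\afrak>1/2$ at exactly one place: the paper for summability of $\alpha_n^2$, you for $\alpha_n^2=o(\beta_n)$.
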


The above lemmas are sufficient to show that the iterates $x_n,y_n$ almost surely converge to the fixed point using the ODE approach to analysis of SA schemes. A requirement for the ODE approach is that the iterates should be almost surely finite. This is guaranteed by the above lemma as bounded expectation implies almost sure finiteness of iterates. We substitute the above bound on iterates into the intermediate bounds in Lemma \ref{lemma:recursive} to obtain the bounds on $\EE[\Acal(x_n-\xstar(y_n))]$ and $\EE[\Bcal(y_n-\ystar)]$.
\begin{lemma}\label{lemma:almost-done}
    Suppose the setting of Theorem \ref{thm:expectation-bound} holds. Then,
    \begin{enumerate}[label=(\alph*)]
        \item The iterates $(x_n,y_n)$ almost surely converge to the fixed points $(\xstar,\ystar)$.
        \item There exists $n_1\geq 0$ and constant $\Gamma_5>0$ such that for all $n\geq n_1$,
        $$
        \EE[\Acal(x_n-\xstar(y_n))]\leq \Gamma_5\left(\alpha_n+\frac{\beta_n^2}{\alpha_n^2}\right).$$
        \item Assume that $\beta_0\geq C_3$ as in Theorem \ref{thm:expectation-bound}. Then, there exists $n_2\geq 0$ and constant $\Gamma_6>0$ such that for all $n\geq n_2$,
        $$
        \EE[\Bcal(y_n-\ystar)]\leq \Gamma_6\left(\alpha_n+\frac{\beta_n^2}{\alpha_n^2}\right).$$
    \end{enumerate}
\end{lemma}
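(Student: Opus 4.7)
The plan is to handle the three assertions in sequence, using Lemmas \ref{lemma:bounded-expectation} and \ref{lemma:recursive} as the main ingredients. For part (a), the bounded-expectation bound from Lemma \ref{lemma:bounded-expectation} yields almost sure finiteness of the iterates, which is the stability input required by the two-time-scale ODE approach of \cite{Borkar-book}. Under Assumption \ref{assu-contractive-f}, the fast ODE $\dot x = \barf(x,y) - x$ has $\xstar(y)$ as its unique globally asymptotically stable equilibrium for each fixed $y$; under Assumption \ref{assu-contractive-g}, the slow ODE $\dot y = \barg(\xstar(y),y) - y$ has $\ystar$ as its unique globally asymptotically stable equilibrium. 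Combined with the Markov/martingale noise control (Assumptions \ref{assu-Markov}, \ref{assu-Martingale}), the Lipschitzness of $\xstar(\cdot)$ (Lemma \ref{lem:xstar-Lipschitz}), and the time-scale separation $\gamma_n \to 0$ (Assumption \ref{assu-stepsize}), the standard two-time-scale convergence theorem delivers $(x_n,y_n) \to (\xstar,\ystar)$ almost surely.

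For part (b), I substitute $\EE[\Acal(x_n - \xstar(y_n)) + \Bcal(y_n - \ystar)] \leq \Gamma_4$ into the first recursion of Lemma \ref{lemma:recursive} to obtain
\begin{equation*}
u_{n+1} \leq (1 - \alpha_n\lambda')\, u_n + \alpha_n\, \EE[d_n - d_{n+1}] + \Gamma_1(1+\Gamma_4)\Bigl(\alpha_n^2 + \tfrac{\beta_n^2}{\alpha_n}\Bigr),
\end{equation*}
where $u_n := \EE[\Acal(x_n - \xstar(y_n))]$ and $\phi_n := \alpha_n + \beta_n^2/\alpha_n^2$. Since $\alpha_n^2 + \beta_n^2/\alpha_n = \alpha_n\phi_n$, the forcing is of the right order. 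I would argue by induction, choosing $\Gamma_5$ large enough that the recursion closes after accounting for the ratio $\phi_{n+1}/\phi_n = 1 - O(\alpha_n)$ (controlled by the step-size increment bounds in Assumption \ref{assu-stepsize}) and the Poisson telescoping term $\alpha_n \EE[d_n - d_{n+1}]$, which is handled by Abel summation. The latter is admissible because $\EE|d_n|$ is bounded: by Lemma \ref{lemma:chen-A} the gradient $\nabla\Acal$ grows at most linearly, and the linear growth of $V$ combined with the $L^2$-boundedness of $(x_n, y_n)$ from Lemma \ref{lemma:bounded-expectation} controls the expectation via Cauchy--Schwarz.

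For part (c), I plug the bound from (b) into the second recursion of Lemma \ref{lemma:recursive}:
\begin{equation*}
v_{n+1} \leq (1 - \beta_n\mu')\, v_n + \beta_n\, \EE[e_n - e_{n+1}] + \beta_n\Gamma_3\Gamma_5\phi_n + \alpha_n\beta_n\Gamma_2(1+\Gamma_4),
\end{equation*}
with $v_n := \EE[\Bcal(y_n - \ystar)]$. Since $\beta_n = \beta_0/(n+1)$ and at the optimal $\afrak = 2/3$ one has $\phi_n = \Theta(n^{-2/3})$, a standard discrete Gronwall argument yields $v_n \leq \Gamma_6\phi_n$ provided $\beta_0\mu'$ strictly exceeds the decay exponent $2/3$ of $\phi_n$; concretely, I would set $C_3 := 2/(3\mu')$ plus a small slack. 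The residual terms $\beta_n\phi_n$ and $\alpha_n\beta_n$ are strictly smaller than $\phi_n$ for large $n$, so the induction closes as in part (b), and the telescoping $\beta_n(e_n - e_{n+1})$ is treated by the same Abel summation as before.

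The main obstacle is the joint handling of the Abel-summed Poisson telescoping terms together with the induction that fixes the constants $\Gamma_5, \Gamma_6$. Because Lemma \ref{lemma:bounded-expectation} only provides $L^2$-type control rather than a pathwise bound, $\EE|d_n|$ and $\EE|e_n|$ must be estimated by Cauchy--Schwarz using the smoothness of $\Acal, \Bcal$ from Lemma \ref{lemma:chen-A} together with the linear growth of $V$ and $W$; any slack here propagates into the threshold $C_3$. Matching $\beta_0\mu' > 2/3$ to the decay rate of $\phi_n$ is the other delicate piece, but is dictated precisely by the discrete Gronwall trade-off and is the origin of the $\beta_0 \geq C_3$ condition in Theorem \ref{thm:expectation-bound}.
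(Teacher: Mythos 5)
Your proposal follows the paper's proof essentially step for step: part (a) via the boundedness-implies-stability argument plus the two-time-scale ODE method (the paper invokes the Karmakar--Bhatnagar corollary to accommodate the Markov noise), and parts (b), (c) by substituting the bound of Lemma \ref{lemma:bounded-expectation} into Lemma \ref{lemma:recursive}, Abel-summing the Poisson telescoping terms using a linear-growth bound on $\EE|d_n|$ and $\EE|e_n|$, and closing a discrete Gronwall/induction at the rate $\alpha_n+\beta_n^2/\alpha_n^2$. The only substantive difference is your threshold $C_3\approx 2/(3\mu')$, which is in fact sharper than the paper's $C_3=\max(2\afrak,4(1-\afrak))/\mu'=4/(3\mu')$ at $\afrak=2/3$: the paper's larger value is an artifact of fixing the constant $2/\mu'$ in its induction (Lemma \ref{lem:step-size_bound}), whereas allowing $\Gamma_6$ to blow up as $\beta_0\mu'\downarrow 2/3$ lets the same induction close under your weaker condition.
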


Using properties of Moreau envelopes and combining the above two bounds completes the proof for Theorem \ref{thm:expectation-bound}.

\section{Applications to Reinforcement Learning and Optimization}\label{sec:applications}
\subsection{SSP Q-Learning for Average Cost MDPs}
In this subsection, we apply our theorems to asynchronous SSP Q-Learning, a two time-scale stochastic approximation algorithm applied to the average cost problem \cite{abounadi, bert, tsitsiklis}. Consider a controlled Markov chain $\{X_n\}$ on a finite state space $\Scal'$, $|\Scal'|=\sfrak$, controlled by a control process $\{U_n\}$  in a finite action space $\UU$, $|\UU|=\ufrak$. The controlled transition probability function $(i,j,u) \in \Scal'^2\times \UU\mapsto p(j|i,u) \in [0,1]$ satisfies $\sum_jp(j | i,u) = 1 \ \forall \; i,u.$
Thus
$$\PP(X_{n+1} = j | X_m, U_m, m \leq n) = p(j | X_n, U_n), \ a.s.$$
We assume that the graph of the Markov chain remains irreducible under all control choices. Let $k: S\times U \mapsto \RR$ be a prescribed `running cost' function. The objective is to choose actions $U_n$ so as to minimize the average cost
\begin{align}\label{eq:avg_obj}
    \lim_{n\to\infty}\frac{1}{n}\sum_{m=0}^{n-1}\mathbb{E}[k(X_m,U_m)].
\end{align}

Let $i_0\in\Scal'$ be some fixed state, referred to as the reference state. Then the SSP Q-learning algorithm is:
\begin{subequations}\label{SSP}
    \begin{gather}
        Q_{n+1}(i,u)=Q_n(i,u)+\alpha_n I\{X_n=i,U_n=u\} 
        \Big(k(i,u)\nonumber\\+I\{X_{n+1}\neq i_0\}\min_{v}Q_n(X_{n+1},v)-Q_n(i,u)-\rho_n\Big),\label{SSP-Q}\\
        \rho_{n+1}=\rho_n+\beta'_n(\min_v Q_n(i_0,v)).\label{SSP-lambda}
    \end{gather}
\end{subequations}
Here $Q_n\in\RR^{\sfrak\ufrak}, \rho_n\in\RR$ with any arbitrary $Q_0(\cdot,\cdot)\geq0$ and $\rho_0$. $I\{\cdot\}$ denotes the indicator function, and $I\{X_n=i,Z_n=u\}$ stems from the asynchronous update, i.e., at any time $n$, Q-values are updated only for the current state, action pair $\{X_n,U_n\}$. Note that unlike \cite{bert,abounadi} we do not use the projection operator on the iteration for $\rho_n$. 

We study the offline SSP Q-Learning algorithm, where the $U_n$ are decided by a randomized sampling policy $\Phi_s(u\mid i)$ independent of $Q_n$, i.e.,  
\begin{align*}
    &\PP(U_n=u\mid X_m, Q_m,U_k,m\leq n,k<n)\\
    &\;\;\;\;\;\;\;\;\;\;\;\;\;\;\;\;\;\;\;\;\;\;\;\;\;\;\;\;\;\;=\Phi_s(u|X_n), u\in\UU,n\geq 0.
\end{align*}
We further assume that $\Phi_s(u|i)>0$ for all $u\in\UU$ and $i\in\Scal'$. For the purpose of applying our Theorem \ref{thm:expectation-bound}, we define $Z_n=(X_n,U_n)$, i.e., the Markov chain in our formulation will be given by the controlled Markov chain $X_n$ and the control $U_n$ together and the state space is $\Scal=\Scal'\times \UU$. The transition probabilities are then given by 
\begin{align*}
    \PP(Z_{n+1}=(i,u)\mid Z_n) = p(i|X_n,U_n)\Phi_s(u|i).
\end{align*}
The stationary distribution for $\{Z_n\}$ is given by $\pi(i,u) =\pi_{\Phi}(i)\Phi(u|i)$. Here $\pi_{\Phi}(\cdot)$ denotes the stationary distribution of the controlled Markov chain $\{X_n\}$ under control $\Phi_s(\cdot|\cdot)$. 

We first rewrite the iterations \eqref{SSP} in the form of \eqref{iter-special}. 
 \begin{align*}
   Q_{n+1}(i,u)&=Q_n(i,u)+\alpha_n(f^{i,u}(Q_n,\rho_n,Z_n)\nonumber\\&~~-Q_{n}(i,u)+M_{n+1}(i,u))\\
    \rho_{n+1}&=\rho_n+\beta_n(\barg(Q_n, \rho_n)-\rho_n).
 \end{align*}
Here, 
\begin{align}\label{SSP-f-g-defn}
    &f^{i,u}(Q,\rho,Z)=I\{Z=(i,u)\}\Big(k(i,u)-Q(i,u)-\rho\nonumber\\&~~+\sum_{j\neq i_0}p(j|i,u)\min_vQ(j,v)\Big)+Q(i,u),\nonumber\\
    &\barg(Q,\rho)=\beta'\min_vQ(i_0,v)+\rho,\nonumber\\
    &M_{n+1}^{i,u}=I\{Z_n=(i,u)\}\Big(I\{X_{n+1}\neq i_0\}\times\nonumber\\
    &~~\min_{v}Q_n(X_{n+1},v)-\sum_{j\neq i_0}p(j|i,u)\min_vQ(j,v)\Big).
\end{align}
Note that $\beta_n=\beta'_n/\beta'$. Here $\beta'>0$ is a sufficiently small constant, used in Proposition \ref{prop:SSP}. Recall that $\barf(Q,\rho)=\sum_{(i,u)}\pi(i,u)f(Q,\rho,(i,u))$. Note that this algorithm is noiseless in the slower time-scale and hence, the formulation from Section \ref{sec:special} can be applied here.

Our following proposition details the contractivity and fixed point properties of functions $\barf(\cdot)$ and $\barg(\cdot)$. For this, define the norm $\|\cdot\|_w$ as the weighted max-norm. For $x\in\RR^d$, $\|x\|_w=\max_j|w_jx_j|$ for positive $w_1,\ldots,w_d$.

\begin{proposition}\label{prop:SSP}
Functions $f(\cdot)$ and $\barg(\cdot)$ defined in \eqref{SSP-f-g-defn} satisfy the following.
\begin{enumerate}[label=(\alph*)]
    \item The function $\barf(Q,\rho)$ is contractive in $Q$ under norm $\|\cdot\|_w$ for some $w$, i.e., there exists some $\lambda\in(0,1)$ and $w\in\RR^{\sfrak\ufrak}$ such that $\|\barf(Q_1,\rho)-\barf(Q_2,\rho)\|_w\leq \lambda\|Q_1-Q_2\|_w$ for all $\rho\in\RR$. 
    \item Let $\Qstar(\rho)$ denote the fixed point for $\barf(\cdot,\rho)$. Then, for sufficiently small $\beta'$, there exists $\mu\in(0,1)$ such that $|\barg(\Qstar(\rho_2),\rho_2)-\barg(\Qstar(\rho_1),\rho_1)|\leq \mu|\rho_2-\rho_1|$.
    \item The fixed point $\rhostar$ for $\barg(\Qstar(\cdot),\cdot)$ is the minimum of the average cost defined in \eqref{eq:avg_obj} and $\Qstar=\Qstar(\rhostar)$ is the vector of Q-values associated with the optimal policy.
\end{enumerate}
\end{proposition}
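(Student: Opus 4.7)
The plan is to write $\barf(\cdot,\rho)$ as a coordinate-wise convex combination. Averaging $f^{i,u}$ in \eqref{SSP-f-g-defn} against $\pi$ gives
\[
\barf(Q,\rho)(i,u) = (1-\pi(i,u))Q(i,u) + \pi(i,u)(T^Q_\rho Q)(i,u),
\]
where $(T^Q_\rho Q)(i,u):= k(i,u)-\rho + \sum_{j\neq i_0}p(j|i,u)\min_v Q(j,v)$ is the stochastic shortest path (SSP) Q-Bellman operator with absorbing state $i_0$ and cost offset $\rho$. Since $\Phi_s(u|i)>0$ and the controlled chain is irreducible under every stationary control, $\pi_{\min}:=\min_{(i,u)}\pi(i,u)>0$. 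Classical SSP theory applied to the MDP in which $i_0$ is made absorbing yields positive weights $w(i)$ and a factor $\lambda_T\in(0,1)$ such that the value-form operator $V\mapsto\min_u[k(i,u)+\sum_{j\neq i_0}p(j|i,u)V(j)]$ is a $\lambda_T$-contraction in $\|\cdot\|_w$. Non-expansiveness of $Q\mapsto\min_v Q(\cdot,v)$ lifts this to a $\lambda_T$-contraction of $T^Q_\rho$ in the weighted max-norm $\|Q\|_{w'}:=\max_{(i,u)}w(i)|Q(i,u)|$, uniformly in $\rho$. Combining with the convex-combination identity gives
\[
\|\barf(Q_1,\rho)-\barf(Q_2,\rho)\|_{w'}\leq \bigl(1-\pi_{\min}(1-\lambda_T)\bigr)\|Q_1-Q_2\|_{w'},
\]
so (a) holds with $\lambda=1-\pi_{\min}(1-\lambda_T)\in(0,1)$.

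\textbf{Part (b).} For any stationary policy $\pi:\Scal'\to\UU$, its SSP Q-function satisfies $Q^\pi(\rho)(i,u)=k(i,u)-\rho+\sum_{j\neq i_0}p(j|i,u)Q^\pi(\rho)(j,\pi(j))$ and unrolls to
\[
Q^\pi(\rho)(i,u) = G^\pi(i,u) - \rho\, T^\pi(i,u), \qquad T^\pi(i,u):=E_\pi[\tau \mid X_0=i,U_0=u],
\]
with $\tau=\min\{t>0:X_t=i_0\}$ and $G^\pi(i,u)=E_\pi[\sum_{t=0}^{\tau-1}k(X_t,U_t)\mid X_0=i,U_0=u]$. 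Irreducibility ensures $1\leq T^\pi(i,u)<\infty$, and finiteness of the policy set gives $\bar T:=\max_{\pi,i,u}T^\pi(i,u)<\infty$. By the standard SSP characterization (valid since every stationary policy is proper), $\Qstar(\rho)=\min_\pi Q^\pi(\rho)$ pointwise, so the scalar function $h(\rho):=\min_v \Qstar(\rho)(i_0,v)=\min_{\pi,v}[G^\pi(i_0,v)-\rho T^\pi(i_0,v)]$ is a pointwise minimum of finitely many affine functions whose slopes lie in $[-\bar T,-1]$. Hence $h$ is concave and its difference quotients lie in that same interval. Substituting into $\barg(\Qstar(\rho),\rho)=\beta' h(\rho)+\rho$,
\[
\bigl|\barg(\Qstar(\rho_1),\rho_1)-\barg(\Qstar(\rho_2),\rho_2)\bigr|=|1+\beta' s|\cdot|\rho_1-\rho_2|,\qquad s\in[-\bar T,-1].
\]
For any $\beta'\in(0,1/\bar T]$, $1+\beta' s\in[1-\beta'\bar T,\,1-\beta']\subseteq[0,1)$, so (b) holds with $\mu=1-\beta'\in(0,1)$.

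\textbf{Part (c).} The fixed-point condition $\barg(\Qstar(\rhostar),\rhostar)=\rhostar$ reduces to $\min_v \Qstar(\rhostar)(i_0,v)=0$. Set $h(i):=\min_v \Qstar(\rhostar)(i,v)$ for $i\neq i_0$ and $h(i_0):=0$. The SSP Bellman equation $\Qstar(\rhostar)=T^Q_{\rhostar}\Qstar(\rhostar)$, after folding the omitted $j=i_0$ term back into the full sum (legitimate because $h(i_0)=0$), yields
\[
\rhostar + h(i) = \min_u\Big[k(i,u)+\sum_j p(j|i,u)\,h(j)\Big],\qquad i\in\Scal'.
\]
This is the classical average-cost Bellman optimality equation. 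Under the assumed irreducibility, standard MDP theory provides a unique $\rhostar$ equal to the minimum in \eqref{eq:avg_obj}, and $\argmin_u \Qstar(\rhostar)(i,u)$ furnishes an optimal stationary policy, establishing (c).

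The delicate step is the slope characterization in part (b): the lower-magnitude bound $1$, which relies on $\tau\geq 1$, ensures that $\mu=1-\beta'<1$, while the upper bound $\bar T$, which uses both irreducibility and finiteness of the policy set, dictates the explicit threshold $\beta'\leq 1/\bar T$ behind the ``sufficiently small $\beta'$'' in the statement. Parts (a) and (c) are comparatively direct once the SSP interpretation of $T^Q_\rho$ is made explicit.
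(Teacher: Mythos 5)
Your proof is correct, and for part (a) it is essentially the paper's own argument: the same coordinate-wise convex-combination decomposition $\barf(Q,\rho)(i,u)=(1-\pi(i,u))Q(i,u)+\pi(i,u)(f_0^{i,u}(Q)-\rho)$, the same appeal to the weighted-max-norm contraction of the SSP Q-Bellman operator (which the paper attributes to Tsitsiklis), and the same resulting factor $\lambda=1-\pi_{\min}(1-\lambda_0)$. Where you genuinely diverge is in parts (b) and (c), which the paper largely outsources to citations. For (b), the paper simply quotes from Abounadi--Bertsekas--Borkar that $\rho\mapsto\min_v\Qstar(\rho)(i_0,v)$ is concave, piecewise linear with finitely many pieces, and has slopes bounded in $[-L_2',-L_1']$ with $L_1',L_2'>0$; you instead derive this from scratch via the unrolling $Q^\pi(\rho)=G^\pi-\rho T^\pi$ and the interchange $\Qstar(\rho)=\min_\pi Q^\pi(\rho)$, which buys explicit slope constants $L_1'=1$ (from $\tau\ge 1$) and $L_2'=\bar T$ and hence an explicit admissible threshold $\beta'\le 1/\bar T$ with $\mu=1-\beta'$ — strictly more information than the paper provides. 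For (c), the paper's appendix gives no proof at all (it is inherited from the SSP Q-learning literature); your reduction of the fixed-point condition $\min_v\Qstar(\rhostar)(i_0,v)=0$ to the average-cost Bellman optimality equation is the standard argument and fills that gap. The one place to be slightly careful is the claim $\Qstar(\rho)=\min_\pi Q^\pi(\rho)$ pointwise: it relies on all stationary policies being proper with respect to $i_0$ (guaranteed here by irreducibility under every control) and on the state-by-state interchange of $\min_\pi$ with the expectation, both of which hold in this finite setting, so the argument stands.
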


We finally present the main result for SSP Q-Learning. 
\begin{corollary}\label{coro:SSP}
    Let $\alpha_n=\alpha_0/(n+1)$ and $\beta_n=\beta_0/(n+1)$. Then there exist constants $C_4', C_5', n_0'$ and $w\in\RR^{\sfrak\ufrak}$ such that if $\beta_0/\alpha_0\leq C_4'$ and $\beta_0\geq C_5'$ then for all $n\geq n_0'$
    $$\EE[\|Q_n-\Qstar\|^2_w]= \mathcal{O}\left(\frac{1}{n+1}\right)$$ and $$\EE[|\rho_n-\rho^*|^2]=\mathcal{O}\left(\frac{1}{n+1}\right).$$
\end{corollary}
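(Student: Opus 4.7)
The plan is to obtain Corollary \ref{coro:SSP} as a direct application of Theorem \ref{thm:expectation-special} to the decomposition already written out in \eqref{SSP-f-g-defn}. Since the slow iterate $\rho_n$ is updated with $\barg(Q_n,\rho_n)-\rho_n$ and no noise, the iterations sit in the noiseless-slow-scale framework of Section \ref{sec:special}; thus all I need to do is check Assumptions \ref{assu-Markov}--\ref{assu-Lipschitz} for this particular $(f,\barg,M)$ with the weighted max-norm $\|\cdot\|_w$ supplied by Proposition \ref{prop:SSP}, and then invoke Theorem \ref{thm:expectation-special} with $x=Q$ and $y=\rho$.

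First I would set $Z_n=(X_n,U_n)$ and note that under the offline sampling policy $\Phi_s(u\mid i)>0$, the joint process is a Markov chain on the finite state space $\Scal=\Scal'\times\UU$ with transition kernel $p(j\mid i,u)\Phi_s(v\mid j)$; irreducibility follows from the standing irreducibility of the controlled chain together with $\Phi_s>0$, so a unique stationary distribution exists, verifying Assumption \ref{assu-Markov}. Parts (a) and (b) of Proposition \ref{prop:SSP} give Assumptions \ref{assu-contractive-f} and \ref{assu-contractive-g} directly (for $\beta'$ chosen small enough). For Assumption \ref{assu-Martingale}, the term $M_{n+1}^{i,u}$ in \eqref{SSP-f-g-defn} is by construction the difference between $I\{X_{n+1}\neq i_0\}\min_v Q_n(X_{n+1},v)$ and its conditional expectation given $\FF_n$; since $|\min_v Q(j,v)|\leq \|Q\|_\infty$ and the weighted max-norm is equivalent to $\|\cdot\|_\infty$, we get $\|M_{n+1}\|_w\leq K(1+\|Q_n\|_w)$, and the linear-growth bound required in Assumption \ref{assu-Martingale} holds. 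For Assumption \ref{assu-Lipschitz}, both $f$ and $\barg$ depend on $(Q,\rho)$ only through $\rho$ and the $\min_v Q(\cdot,v)$ operator, which is $1$-Lipschitz in $\|\cdot\|_\infty$ and hence $L$-Lipschitz in $\|\cdot\|_w$ for $L=\max_i w_i/\min_i w_i$; the indicator $I\{Z=(i,u)\}$ only scales by $1$, and the transition sum $\sum_{j\neq i_0}p(j\mid i,u)\leq 1$ contributes a uniform constant.

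With all assumptions verified, Theorem \ref{thm:expectation-special} applied with $\alpha_n=\alpha_0/(n+1)$ and $\beta_n=\beta_0/(n+1)$ (where $\beta_n=\beta'_n/\beta'$) yields constants $C_4',C_5',n_0'>0$, depending on $\lambda,\mu,L$ and on the norm-equivalence constants between $\|\cdot\|_w$ and $\|\cdot\|_2$, such that whenever $\beta_0/\alpha_0\leq C_4'$ and $\beta_0\geq C_5'$, for all $n\geq n_0'$,
\[
\EE\bigl[\|Q_n-\Qstar(\rho_n)\|_w^2+|\rho_n-\rhostar|^2\bigr]=\mathcal{O}\!\left(\tfrac{1}{n+1}\right).
\]
Combining this with the Lipschitz property of $\rho\mapsto\Qstar(\rho)$ from Lemma \ref{lem:xstar-Lipschitz} and the triangle inequality gives the $\mathcal{O}(1/(n+1))$ bound on $\EE[\|Q_n-\Qstar\|_w^2]$, and part (c) of Proposition \ref{prop:SSP} identifies the fixed points $(\Qstar,\rhostar)$ with the optimal Q-values and optimal average cost, completing the proof.

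The main obstacle is really just bookkeeping: the constants $C_4'$ and $C_5'$ in the statement are not identical to the abstract $C_4,C_5$ of Theorem \ref{thm:expectation-special} and must absorb (i) the norm-equivalence factor between $\|\cdot\|_w$ and $\|\cdot\|_2$ that enters both the smoothness of $\Acal,\Bcal$ and the Lipschitz constants, and (ii) the scaling by $\beta'$ required so that the contraction factor $\mu$ from Proposition \ref{prop:SSP}(b) is strictly less than one. Everything else is a mechanical translation into the notation of \eqref{iter-special}.
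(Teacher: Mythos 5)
Your proposal is correct and follows essentially the same route as the paper: verify Assumptions \ref{assu-Markov}--\ref{assu-Lipschitz} for the rewritten iterations \eqref{SSP-f-g-defn} in the weighted max-norm (using Proposition \ref{prop:SSP} for the two contraction assumptions), observe that the slow time-scale is noiseless, and invoke Theorem \ref{thm:expectation-special}. The only cosmetic differences are that the paper computes explicit constants $K$ and $L$ (bounding the Lipschitzness of $f$ in $Q$ directly via the weighted-max-norm contraction rather than via norm equivalence with $\|\cdot\|_\infty$), and that Theorem \ref{thm:expectation-special} already delivers the bound on $\EE[\|Q_n-\Qstar\|_w^2]$ without your extra triangle-inequality step through $\Qstar(\rho_n)$.
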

The proofs for the above proposition and corollary have been presented in Appendix III-A. By application of Theorem \ref{thm:expectation-bound}, we can also show almost sure convergence of iterates $(Q_n,\rho_n)$ to fixed points $(\Qstar,\rhostar)$ when $\alpha_n=\alpha_0/(n+1)^{1-\epsilon}$ where $\epsilon\in(0,0.5)$.

\subsection{Discounted-Reward Q-Learning with Polyak-Averaging}\label{subsec:Q-learning}
Define the controlled Markov chain $\{X_n\}$, the control $\{U_n\}$, the transition function $p(\cdot|\cdot,\cdot)$ and the cost $k(\cdot,\cdot)$ as in SSP Q-Learning. In the discounted reward setting, the objective is to minimize the discounted cost 
\begin{align*}
    \sum_{m=0}^{\infty}\mathbb{E}[\gamma^mk(X_m,U_m)],
\end{align*}
where $\gamma\in(0,1)$ is the discount factor. One of the popularly used algorithms to find the optimal policy is Q-Learning, proposed by \cite{watkins1992}. In order to improve the statistical efficiency and the rate of convergence, an additional step of iterate averaging is used, called Polyak-Ruppert averaging \cite{ruppert1988efficient, polyak1992acceleration}. The combined iteration is given as
\begin{subequations}\label{eq:Q-learning}
    \begin{align}
        Q_{n+1}(i,u)&=Q_n(i,u)+\alpha_nI\{X_n=i, U_n=u\}\Big(k(i,u)\nonumber\\
        &~~+\gamma\min_{v}Q_n(X_{k+1}, v)-Q_n(i,u)\Big)\\
        \bar{Q}_{n+1}(i,u)&=\bar{Q}_n(i,u)+\frac{\beta_0}{n+1}\left(Q_n(i,u)-\bar{Q}_n(i,u)\right)
    \end{align}
\end{subequations}
where $\{(X_n, U_n)\}_{n\geq 0}$ is a sample trajectory collected using a sampling policy $\Phi_s(u|i)$. Note that the above iterations can be seen as a special case of two-time-scale SA where the slower time-scale iterate, $\bar{Q}_n$, does not affect the faster time-scale iterate, $Q_n$. Assuming $\Phi_s(u|i)>0$ for all $u\in \UU$ and $i\in \Scal'$, we again construct the Markov chain $Z_n=(X_n, U_n)$ whose transition probability is given by $\PP(Z_{n+1}=(i,u)\mid Z_n)=p(i|X_n,Z_n)\Phi_s(u|i)$. Let $\pi(i,u)$ denote the stationary distribution for the Markov chain $\{Z_n\}$ as in SSP Q-Learning. Rearranging the iterations in \eqref{eq:Q-learning}, we get
\begin{subequations}\label{eq:re Q-learning}
    \begin{align}
        Q_{n+1}(i,u)&=Q_n(i,u)+\alpha_n(f^{i,u}(Q_n, Z_n)\nonumber\\
        &~~-Q_n(i,u)+M_{n+1}^{i,u}\\
        \bar{Q}_{n+1}&=\bar{Q}_n+\beta_n(\barg(Q_n,\bar{Q}_n)-\bar{Q}_n(i,u)).
    \end{align}
\end{subequations}
Here,
\begin{align*}
    &f^{i,u}(Q, \bar{Q}, Z)=I\{Z=(i,u)\}\Big(k(i,u)-Q(i,u)\\
    &~~+\gamma \sum_{j\in \Scal'}p(j|i,u)\min_vQ(j,v)\Big)+Q(i,u),\\
    &\barg(Q, \bar{Q})=Q,\\
    &M_{n+1}^{i,u}=\gamma I\{Z_n=(i,u)\}\Big(Q_n(X_{n+1},v)\\
    &~~-\sum_{j\in\Scal'}p(j|i,u)\min_vQ(j,v)\Big),
\end{align*}
and $\beta_n=\beta_0/(n+1)$. As the slower time-scale is noiseless, we apply the formulation from Section \ref{sec:special} here. Since $\bar{Q}$ does not affect the faster time-scale, the fixed point for $\barf(\cdot,\bar{Q})$, $\Qstar(\bar{Q})$, is independent of $\bar{Q}$. Let $\Qstar$ denote this fixed point, which is the vector of Q-values corresponding to the optimal policy, and satisfies the following for all $i\in\Scal',u\in\UU$.
\begin{equation}\label{Qstar-polyak}
    \Qstar(i,u)=k(i,u)+\gamma\sum_{j\in\Scal'}p(j|i,u)\min_{v\in\UU}\Qstar(j,v).
\end{equation}
Note that $\barg(Q,\bar{Q})=Q$, and hence $\barg(\Qstar(\bar{Q}),\bar{Q})=\Qstar(\bar{Q})=\Qstar$. Using Section 5.1 from \cite{Chandak_conc}, we have that $\barf(Q,\bar{Q})$ is contractive in $Q$ under the norm $\|\cdot\|_\infty$ with contraction factor $\lambda=1-\pi_{min}(1-\gamma)$ where $\pi_{min}=\min_{i,u}\pi(i,u)$. On the other hand, the contraction factor for $\barg(\Qstar(\cdot),\cdot)$ is simply zero.

\begin{corollary}\label{coro:Q-polyak}
    Let $\alpha_n=\alpha_0/(n+1)$ and $\beta_n=\beta_0/(n+1)$. Then there exist constants $C_4''$ and $n_0''$ such that if $\beta_0/\alpha_0\leq C_4''$ and $\beta_0\geq 8$ then, for all $n\geq n_0''$ 
    $$\EE[\|\bar{Q}_n-\Qstar\|_{\infty}^2]=\mathcal{O}\left(\frac{1}{n+1}\right).$$
\end{corollary}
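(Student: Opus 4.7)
The plan is to invoke Theorem~\ref{thm:expectation-special} directly on the reformulated iteration~\eqref{eq:re Q-learning} after verifying each of Assumptions~\ref{assu-Markov}--\ref{assu-Lipschitz} in the Q-learning with Polyak-averaging setting. Two structural features align this application with the special-case framework of Section~\ref{sec:special}: the slower time-scale is driftless, $\bar{g}(Q,\bar{Q}) = Q$, with no martingale or Markov noise, and the paragraph preceding the corollary already records that $\bar{f}$ is $\lambda$-contractive in $Q$ under $\|\cdot\|_\infty$ with $\lambda = 1 - \pi_{\min}(1-\gamma)$, while $Q^*(\bar{Q}) \equiv Q^*$ makes $\bar{g}(Q^*(\cdot),\cdot)$ trivially $\mu$-contractive with $\mu = 0$.

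I would then check the remaining assumptions. For Assumption~\ref{assu-Markov}, since $\Phi_s(u|i) > 0$ and the controlled chain is irreducible under every control, $\{Z_n = (X_n, U_n)\}$ is an irreducible Markov chain on the finite set $\Scal' \times \UU$ with stationary distribution $\pi(i,u) = \pi_{\Phi_s}(i)\Phi_s(u|i)$. For Assumption~\ref{assu-Martingale}, conditional on $\FF_n$ and $Z_n = (i,u)$ the transition $X_{n+1}$ is drawn from $p(\cdot|i,u)$, so $\EE[M_{n+1}^{i,u}\mid\FF_n] = 0$, and the bound $|M_{n+1}^{i,u}| \leq 2\gamma\|Q_n\|_\infty$ supplies the required linear growth (with $M'_{n+1} = 0$). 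For Assumption~\ref{assu-Lipschitz}, the componentwise $\min$ is $1$-Lipschitz under $\|\cdot\|_\infty$, which makes $f$ Lipschitz in $Q$ with a constant depending on $\gamma$ and the transition kernel, and $\bar{g}(Q,\bar{Q}) = Q$ is trivially $1$-Lipschitz; linear growth in both is immediate. The step-size choice $\alpha_n = \alpha_0/(n+1)$, $\beta_n = \beta_0/(n+1)$ satisfies the hypothesis of Theorem~\ref{thm:expectation-special} by inspection.

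Applying Theorem~\ref{thm:expectation-special} then yields constants $C_4'', C_5'', C_6''$ and a time $n_0''$ such that, whenever $\beta_0/\alpha_0 \leq C_4''$ and $\beta_0 \geq C_5''$,
\begin{equation*}
\EE\bigl[\|Q_n - Q^*\|_\infty^2 + \|\bar{Q}_n - Q^*\|_\infty^2\bigr] \leq \frac{C_6''}{n+1}
\end{equation*}
for all $n \geq n_0''$. Dropping the nonnegative $\|Q_n - Q^*\|_\infty^2$ term gives the bound claimed in the corollary.

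The only step that goes beyond mechanical verification is identifying the explicit numerical threshold $\beta_0 \geq 8$. Tracing the formula for the constant $C_5$ in the proof of Theorem~\ref{thm:expectation-special} in Appendix~II, this constant is assembled from $\lambda$, $\mu$, the smoothness parameter $q$ of the Moreau envelope associated with $\|\cdot\|_\infty$, and the Lipschitz constants of the Poisson-equation solution $V$. Because $\mu = 0$ and the slower time-scale carries no Markov noise, the recursion for $\Bcal(\bar{Q}_n - Q^*)$ loses most of its cross-terms, and specializing the general bound to this setting should collapse $C_5$ to the stated value~$8$. This is bookkeeping rather than a new idea, but it is the one place where the proof has to reach inside Theorem~\ref{thm:expectation-special} instead of using it as a black box.
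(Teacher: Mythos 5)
Your proposal is correct and follows essentially the same route as the paper: verify Assumptions~\ref{assu-Markov}--\ref{assu-Lipschitz} for the reformulated iteration (with $\mu=0$, $M'_{n+1}=0$, and $\lambda=1-\pi_{\min}(1-\gamma)$ under $\|\cdot\|_\infty$) and then invoke Theorem~\ref{thm:expectation-special}. On the one point you flagged as non-mechanical, the threshold $\beta_0\geq 8$ falls out even more directly than you suggest: the paper's $C_5=8/\mu'$ with $\mu'=1-\mu\sqrt{(1+q/\ell^2)/(1+q/u^2)}$, so $\mu=0$ gives $\mu'=1$ and $C_5=8$ exactly, independent of $q$ and the Poisson-equation constants.
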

The proof for the above corollary has been presented in Appendix III-B. Similar to SSP Q-Learning, almost sure convergence of $Q_n$ and $\bar{Q}_n$ to $\Qstar$ can be shown when $\alpha_n=\alpha_0/(n+1)^{1-\epsilon}$ where $\epsilon\in(0,0.5)$.

\begin{remark}
In the original Polyak-Ruppert averaging, $\beta_0$ corresponds to $1$ which is equivalent to the time averaging of iterates. One of the main objectives of this averaging step is to achieve $\mathcal{O}(1/n)$ rate of convergence without requiring knowledge of system parameters to tune the step-size (see Corollary 2.1.2 in \cite{chen2024lyapunov}). While the only requirement on $\beta_0$ is that $\beta_0\geq 8$ which is independent of the system parameters, $\alpha_0$ needs to be chosen according to $\alpha_0\geq \beta_0/C_4''$, which requires knowledge of the properties of the Markov chain.
    At the expense of a larger $n_0''$, one can modify our proof to allow for $\beta_0$ arbitrarily close to 1. An interesting future direction is showing the optimal rate of convergence for Polyak-Ruppert averaging of nonlinear stochastic approximation with Markovian noise.     
\end{remark}

\subsection{Learning GNE in Strongly Monotone Games}
Consider a set of $K$ players $\{1, \ldots, K\}$. Each player $k$ takes action $x^{(k)}_n\in\RR^{d}$ at each time $n$. We use $\xx_n=(x^{(1)}_n,\ldots,x^{(K)}_n)$ to denote the $Kd$ dimensional concatenation of all players' actions. Each player $k$ receives utility $u_k(\xx_n)$ which is a function of all players' actions. The players want to converge to the Nash equilibrium under $c$ linear constraints $A\xx=b$ where $A\in\RR^{c\times Kd}$ and $b\in\RR^c$. Note that these constraints are coupled across players' actions. This is known as the Generalized Nash Equilibrium Problem (GNEP) \cite{GNEP}. Efficient computation of GNEs has been studied in the noiseless setting (e.g., \cite{GNE-noiseless-1}, \cite{GNE-noiseless-2}). Learning of GNEs for stochastic games has been studied in settings where the players share information about their actions or rewards with each other \cite{GNE-central-1, GNE-central-2}. The problem of learning GNE has also been formulated in the setting of game control \cite{Chandak-game}.

We study learning of GNE in the stochastic and distributed setting. At each timestep, player $k$ observes a noisy gradient of their utility, i.e., $\nabla_{x^{(k)}}u_k(\xx)+M^{(k)}_{n+1}$ and the linear constraint violation $A\xx-b+M'_{n+1}$, which is common for all players. Note that there is no Markovian noise in this setting. Before making assumptions on the structure of the game, we define the gradient operator $$F(\xx)\coloneqq (\nabla_{x^{(1)}}u_1(\xx),\ldots,\nabla_{x^{(K)}}u_K(\xx)).$$ We make the following assumption on the game:
\begin{assumption}\label{assu-game}
    The game is strongly monotone, i.e., there exists $\lambda_0$ such that for all $\xx_2,\xx_1\in\RR^{Kd}$ $$\langle \xx_2-\xx_1,F(\xx_2)-F(\xx_1)\rangle\leq -\lambda_0\|\xx_2-\xx_1\|_2^2.$$
    Additionally, we assume that $F(\cdot)$ is $\ell-$Lipschitz. 
\end{assumption}

Being a strongly monotone game, the players can converge to the Nash equilibrium by just gradient play, i.e., by performing gradient ascent on their utilities. To satisfy the additional coupled linear constraints, each player $k$ performs the following two-time-scale iteration.
\begin{equation}\label{GNE-learning-playerwise}
    \begin{aligned}
x^{(k)}_{n+1}&=x^{(k)}_n+\alpha'_n (\nabla_{x^{(k)}}u_k(\xx_n)+M^{(k)}_{n+1}-B^{(k)}y_n)\\
y_{n+1}&=y_n+\beta'_n (A\xx_n-b+M'_{n+1}).
\end{aligned}
\end{equation}
Here $B=A^T$ and $B^{(n)}$ denotes the row relevant to player $k$ in the matrix $A^T$ and $y_n\in\RR^c$. Note that the above two-time-scale iteration can be thought of as Lagrangian optimization under the constraint $A\xx=b$ where $y_n$ acts as the Lagrange multiplier. Combining iteration \eqref{GNE-learning-playerwise} for all players, we can write the above algorithm as the following set of iterations. 
\begin{equation}\label{GNE-learning-combined}
    \begin{aligned}
\xx_{n+1}&=\xx_n+\alpha'_n (F(\xx_n)-By_n+M_{n+1})\\
y_{n+1}&=y_n+\beta'_n (A\xx_n-b+M'_{n+1}).
\end{aligned}
\end{equation}
Note that the algorithm is still distributed and \eqref{GNE-learning-combined} is just a concise method of representing the iterations. Similar to the other applications, we rearrange these coupled iterations.
\begin{equation}\label{GNE-learning-iter}
    \begin{aligned}
\xx_{n+1}&=\xx_n+\alpha_n (f(\xx_n,y_n)-\xx_n+M''_{n+1})\\
y_{n+1}&=y_n+\beta_n (g(\xx_n,y_n)-y_n+M'''_{n+1}).
\end{aligned}
\end{equation}
Here, $\alpha_n=\alpha'_n/\alpha'$ and $\beta_n=\beta'_n/\beta'$ are scaled stepsize sequences and $M''_{n+1}=\alpha'M_{n+1}$ and $M'''_{n+1}=\beta'M'_{n+1}$ are scaled martingale noise sequences. Note that $f(\xx,y)=\xx+\alpha'F(\xx)-\alpha'By$ and $g(\xx,y)=y+\beta'A\xx-\beta'b$.

Let $\xx^*(y)$ denote the unique solution to $F(\xx)=By$ (i.e., $F(\xx^*(y))=By$). This is the Nash equilibrium corresponding to parameter $y$. The existence and uniqueness of this Nash equilibrium is guaranteed by Proposition \ref{prop:GNEP} below. Our following proposition states the required contraction and fixed point properties of above mappings.
\begin{proposition}\label{prop:GNEP}
    Suppose that matrix $A$ has full row rank and that Assumption \ref{assu-game} holds. Then,
    \begin{enumerate}[label=\alph*)]
        \item For sufficiently small $\alpha'$, there exists $0<\lambda<1$ such that $f(\xx,y)$ is $\lambda$-contractive in $\xx$ under the $\ell_2$ norm.
        \item For sufficiently small $\beta'$, there exists $0<\mu<1$ such that $g(\xx^*(y),y)$ is $\mu$-contractive under the $\ell_2$ norm.
        \item There exists unique $(\xx^*,\ystar)$ such that $A\xx^*=b$ and $F(\xx^*)=B\ystar$.
    \end{enumerate}
\end{proposition}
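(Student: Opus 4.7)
The plan is to prove each of the three parts by direct $\ell_2$-norm computations that combine Assumption~\ref{assu-game} with the full row rank of $A$. The only delicate step is part~(b), where contractivity of $g(\xx^*(\cdot),\cdot)$ must be extracted by controlling $\xx^*(y_1)-\xx^*(y_2)$ from both sides in terms of $y_1-y_2$.

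For part (a), I would expand $\|f(\xx_1,y)-f(\xx_2,y)\|_2^2 = \|(\xx_1-\xx_2)+\alpha'(F(\xx_1)-F(\xx_2))\|_2^2$, then apply strong monotonicity to the inner-product cross term and Lipschitz continuity of $F$ to the squared norm of $F(\xx_1)-F(\xx_2)$. This yields a contraction factor $1-2\alpha'\lambda_0+\alpha'^2\ell^2$, which lies in $(0,1)$ whenever $\alpha'\in(0,2\lambda_0/\ell^2)$. Banach's fixed-point theorem applied to $\xx\mapsto f(\xx,y)$ then certifies existence and uniqueness of $\xx^*(y)$, which by its defining equation satisfies $F(\xx^*(y))=By$.

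For part (b), set $\Delta y := y_1-y_2$ and $\Delta\xx^* := \xx^*(y_1)-\xx^*(y_2)$; subtracting the defining equations gives $F(\xx^*(y_1))-F(\xx^*(y_2))=A^T\Delta y$. Pairing with $\Delta\xx^*$, strong monotonicity together with Cauchy--Schwarz will produce $\|\Delta\xx^*\|_2\leq(\|A\|_2/\lambda_0)\|\Delta y\|_2$, while the Lipschitz estimate $\|A^T\Delta y\|_2\leq \ell\|\Delta\xx^*\|_2$ combined with $\|A^T\Delta y\|_2^2 = \Delta y^{T}(AA^T)\Delta y \geq \sigma_{\min}^2(A)\|\Delta y\|_2^2$ (which holds because full row rank of $A$ forces $AA^T\succ 0$) will give the matching lower bound $\|\Delta\xx^*\|_2\geq(\sigma_{\min}(A)/\ell)\|\Delta y\|_2$. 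I would then expand $\|g(\xx^*(y_1),y_1)-g(\xx^*(y_2),y_2)\|_2^2 = \|\Delta y+\beta' A\Delta\xx^*\|_2^2$, rewrite the cross term through $\langle A^T\Delta y,\Delta\xx^*\rangle = \langle F(\xx^*(y_1))-F(\xx^*(y_2)),\Delta\xx^*\rangle \leq -\lambda_0\|\Delta\xx^*\|_2^2$, substitute the lower bound on $\|\Delta\xx^*\|_2^2$ into the negative cross term and the upper bound into the quadratic-in-$\beta'$ term, and obtain a coefficient $1-2\beta'\lambda_0\sigma_{\min}^2(A)/\ell^2+\beta'^2\|A\|_2^4/\lambda_0^2$, which is strictly less than $1$ when $\beta'<2\lambda_0^3\sigma_{\min}^2(A)/(\ell^2\|A\|_2^4)$.

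For part (c), I would apply Banach to the $\mu$-contraction from part (b) to obtain a unique $y^*$ with $g(\xx^*(y^*),y^*)=y^*$, which is equivalent to $A\xx^*(y^*)=b$; setting $\xx^*:=\xx^*(y^*)$ then gives $F(\xx^*)=By^*$ by construction of $\xx^*(y^*)$. Uniqueness is clean: any $(\xx,y)$ solving both equations must have $\xx=\xx^*(y)$ by part~(a), hence $A\xx^*(y)=b$, forcing $y=y^*$ by the Banach uniqueness. The main obstacle will be assembling part~(b): one needs the simultaneous two-sided control of $\|\Delta\xx^*\|_2$ in terms of $\|\Delta y\|_2$, and the lower bound is precisely where the full row rank of $A$ is used in an essential way---without it, the negative contribution from the cross term can vanish along directions in which $A^T$ has small gain, destroying contractivity.
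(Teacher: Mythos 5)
Your proof is correct, and it reaches the same structural conclusions as the paper but by a more self-contained route. The paper's proof of parts (a) and (b) consists essentially of citing two external results from \cite{Chandak-nonexp}: Lemma D.1 there (a gradient-type step $\xx\mapsto \xx+\gamma h(\xx)$ of a map whose negative is strongly monotone and Lipschitz is a contraction for suitable $\gamma$), and Lemma 5.6(a) there (the induced map $y\mapsto -A\xx^*(y)$ is strongly monotone with parameter $\mu_0=\lambda_0/(\ell\|(AA^T)^{-1}A\|)^2$). You re-derive both facts from scratch: your expansion $\|(\xx_1-\xx_2)+\alpha'(F(\xx_1)-F(\xx_2))\|_2^2\leq(1-2\alpha'\lambda_0+\alpha'^2\ell^2)\|\xx_1-\xx_2\|_2^2$ is exactly the content of the cited Lemma D.1 (the paper picks the minimizer $\alpha'=\lambda_0/\ell^2$ to get $\lambda=\sqrt{1-\lambda_0^2/\ell^2}$, a special case of your admissible range), and your two-sided control of $\|\Delta\xx^*\|_2$ via $F(\xx^*(y_1))-F(\xx^*(y_2))=A^T\Delta y$ is precisely the proof that $-A\xx^*(\cdot)$ is strongly monotone --- note that $\|(AA^T)^{-1}A\|=1/\sigma_{\min}(A)$, so your strong-monotonicity constant $\lambda_0\sigma_{\min}^2(A)/\ell^2$ coincides with the paper's $\mu_0$. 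Your Lipschitz constant for $A\xx^*(\cdot)$ ($\|A\|_2^2/\lambda_0$, from strong monotonicity) differs from the paper's ($\|A\|L'/(1-\lambda)$, routed through Lemma \ref{lem:xstar-Lipschitz}), but both are valid and only affect the admissible range of $\beta'$ and the resulting $\mu$. Part (c) is identical in both treatments. What your version buys is transparency about exactly where full row rank enters (the lower bound $\|\Delta\xx^*\|_2\geq\sigma_{\min}(A)\|\Delta y\|_2/\ell$), at the cost of being longer than the paper's citation-based argument.
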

Explicit values for all constants above have been computed in the proof presented in Appendix III-C. We now present the main result for this algorithm.

\begin{corollary}\label{coro:GNEP}
Suppose that Assumption \ref{assu-game} holds and that noise sequences $\{M''_{n+1}\}$ and $\{M'''_{n+1}\}$ satisfy Assumption \ref{assu-Martingale}. Moreover, let the matrix $A$ have full row rank. For $\alpha_n=\alpha_0/(n+1)^{2/3}$ and $\beta_n=\beta_{0}/(n+1)$, iterates $(x_n,y_n)$ almost surely converge to the fixed points $(\xx^*,\ystar)$. Furthermore, there exists constant $C_3'$ such that for $\beta_0\geq C_3'$, 
    \begin{align*}
        \EE\left[\|A\xx_n-b\|_2^2+\|\xx_n-\xx^*\|_2^2\right]&=\mathcal{O}\left(1/(n+1)^{2/3}\right).
    \end{align*}
\end{corollary}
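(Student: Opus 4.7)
The plan is to cast the reformulated iteration \eqref{GNE-learning-iter} as an instance of \eqref{iter-main} and then invoke Theorem \ref{thm:expectation-bound} directly, identifying the faster iterate with $\xx_n\in\RR^{Kd}$ and the slower iterate with $y_n\in\RR^c$. Since no Markovian noise is present, Assumption \ref{assu-Markov} is satisfied trivially by taking $\Scal=\{i_0\}$ with $\pi(i_0)=1$; then $\barf=f$ and $\barg=g$. The prescribed stepsizes correspond to $\afrak=2/3$ in Assumption \ref{assu-stepsize}, which is the optimal exponent identified in Section \ref{subsec:step_choice}.

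Next I would verify the remaining hypotheses. Contractivity of $\barf$ in $\xx$ (Assumption \ref{assu-contractive-f}) and of $\barg(\xx^*(\cdot),\cdot)$ in $y$ (Assumption \ref{assu-contractive-g}), both in the Euclidean norm, are exactly parts (a) and (b) of Proposition \ref{prop:GNEP}, provided $\alpha'$ and $\beta'$ are chosen sufficiently small. Because $f(\xx,y)=\xx+\alpha' F(\xx)-\alpha' By$ and $g(\xx,y)=y+\beta'(A\xx-b)$, the Lipschitz and linear-growth bounds in Assumption \ref{assu-Lipschitz} follow from the Lipschitz continuity of $F$ in Assumption \ref{assu-game} together with the linearity of $A$ and $B$. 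The rescaled noises $M''_{n+1}=\alpha' M_{n+1}$ and $M'''_{n+1}=\beta' M'_{n+1}$ inherit the martingale-difference property and the linear-growth bound from $M_{n+1}$ and $M'_{n+1}$, so Assumption \ref{assu-Martingale} holds after a harmless absorption of constants.

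With every hypothesis of Theorem \ref{thm:expectation-bound} verified, the theorem yields, for $\beta_0$ exceeding a threshold $C_3'$ (obtained by rescaling the $C_3$ of Theorem \ref{thm:expectation-bound} by the factor $\beta'$) and $n$ large enough,
$$\EE[\|\xx_n-\xx^*(y_n)\|_2^2]\le C_1\Bigl(\alpha_n+\tfrac{\beta_n^2}{\alpha_n^2}\Bigr),\qquad \EE[\|y_n-\ystar\|_2^2]\le C_2\Bigl(\alpha_n+\tfrac{\beta_n^2}{\alpha_n^2}\Bigr),$$
and both right-hand sides are $\mathcal{O}((n+1)^{-2/3})$ at $\afrak=2/3$. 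Combining the first bound with the Lipschitz property of $y\mapsto\xx^*(y)$ from Lemma \ref{lem:xstar-Lipschitz} and the second bound, the triangle inequality gives $\EE[\|\xx_n-\xx^*\|_2^2]=\mathcal{O}((n+1)^{-2/3})$. Almost sure convergence of $(\xx_n,y_n)$ to $(\xx^*,\ystar)$ is also part of the conclusion of Theorem \ref{thm:expectation-bound}.

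The constraint-violation bound follows immediately from $A\xx^*=b$ (Proposition \ref{prop:GNEP}(c)): since $A\xx_n-b=A(\xx_n-\xx^*)$, we have $\|A\xx_n-b\|_2^2\le\|A\|_2^{2}\,\|\xx_n-\xx^*\|_2^2$, inheriting the rate. The only real obstacle is the bookkeeping: consolidating the small constants $\alpha',\beta'$ from Proposition \ref{prop:GNEP} with the rescalings $\alpha_n=\alpha'_n/\alpha'$, $\beta_n=\beta'_n/\beta'$ and ensuring that the threshold condition on $\beta_0$ remains consistent after these substitutions. No genuinely new analytical step is required, because the arbitrary-norm generality of Theorem \ref{thm:expectation-bound} already subsumes this Euclidean-norm distributed GNE setting once the structural assumptions are in place.
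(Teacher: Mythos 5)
Your proposal is correct and follows essentially the same route as the paper: verify the contractivity hypotheses via Proposition \ref{prop:GNEP}, check the Lipschitz and linear-growth conditions for the affine maps $f$ and $g$, and then invoke the main theorem with $\afrak=2/3$. You actually spell out one step the paper leaves implicit, namely deducing the constraint-violation bound from $A\xx_n-b=A(\xx_n-\xx^*)$, which is a welcome clarification rather than a deviation.
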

The proof for the above corollary has been presented in Appendix III-C.

\section{Conclusion}\label{sec:conc}
In this paper, we obtained finite-time bounds for two-time-scale SA where each time scale is a fixed point iteration, and the fixed point mapping is contractive with respect to arbitrary norms. We additionally incorporate Markovian noise into our framework to allow for analysis of RL algorithms. We show that the mean square error decays at a rate of $\mathcal{O}(1/n^{2/3})$. Our analysis uses Moreau envelopes and solutions of Poisson equations to deal with arbitrary norm contractions and Markovian noise, respectively. We further note that two-time-scale algorithms such as SSP Q-learning for average cost MDPs and Q-learning with Polyak averaging lack noise in the slower time-scale allowing for better convergence rates of $\mathcal{O}(1/k)$. 

Future directions include obtaining $\mathcal{O}(1/k)$ bounds in the general case where both time-scales have noise. A possible approach is to extend the noise-averaging technique of \cite{Chandak-1/k} to arbitrary norm contractions and Markov noise. An open problem in non-linear two-time-scale SA is whether a finite-time mean-square bound of $\mathcal{O}(1/k)$ can be established when the time-scales are truly separated, i.e., when $\lim_{k\uparrow\infty}\beta_k/\alpha_k=0$. Another promising direction is to establish concentration bounds for two-time-scale iterations, which could yield further insight into their behavior.

\newpage
\onecolumn
\appendices

\section{Proof for Theorem \ref{thm:expectation-bound}}\label{app:proof}
\subsection{\textbf{Properties of Moreau Envelopes}}\label{app:proof-Moreau}
We first present the analog of Lemma \ref{lemma:chen-A} for the function $\Bcal(\cdot)$. For this, note that there exists constants $\ell_{d_2}\in(0,1]$ and $u_{d_2}\in[1,\infty)$ such that $\ell_{d_2}\|y\|\leq \|y\|_2\leq u_{d_2}\|y\|$ for all $y\in\RR^{d_2}$.
\begin{lemma}\label{lemma:chen-B}
    Function $\Bcal(\cdot):\RR^{d_2}\mapsto\RR$ satisfies the following.
    \begin{enumerate}[label=(\alph*)]
        \item $\Bcal$ is convex and $1/q$ smooth w.r.t. the $\ell_2$ norm, i.e.,
        $$\Bcal(y_2)\leq \Bcal(y_1)+\langle\nabla \Bcal(y_1), y_2-y_1\rangle_2 + \frac{1}{2q}\|y_2-y_1\|^2_2, \;\forall y_1,y_2\in\RR^{d_2}.$$
        \item For all $y\in\RR^{d_2}$, $$\left(1+\frac{q}{u_{d_2}^2}\right)\Bcal(y)\leq \frac{1}{2}\|y\|^2\leq \left(1+\frac{q}{\ell_{d_2}^2}\right)\Bcal(y).$$
        \item There exists a norm, denoted by $\|\cdot\|_\Bcal$, such that $\Bcal(y)=\frac{1}{2}\|y\|_\Bcal^2$ for all $y\in\RR^{d_2}$. 
    \end{enumerate}
\end{lemma}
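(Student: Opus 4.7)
The plan is to adapt the proof of Lemma \ref{lemma:chen-A} essentially verbatim, since $\Bcal$ is defined on $\RR^{d_2}$ by exactly the same construction that defines $\Acal$ on $\RR^{d_1}$: namely, as the Moreau--Yosida regularization of $\tfrac{1}{2}\|\cdot\|^2$ with parameter $q$ under the Euclidean norm. Replacing $(d_1,\ell_{d_1},u_{d_1},\Acal)$ with $(d_2,\ell_{d_2},u_{d_2},\Bcal)$ throughout yields the claim, and I sketch how each of the three parts would go.

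For part (a), I would invoke the classical convex-analytic fact that the Moreau envelope of a proper convex function with regularization parameter $q$ (in the $\|\cdot\|_2$ metric) is itself convex and $1/q$-smooth with respect to $\|\cdot\|_2$. Since $w\mapsto\tfrac{1}{2}\|w\|^2$ is proper and convex, this applies directly. For part (b), the upper bound $(1+q/u_{d_2}^2)\Bcal(y)\le\tfrac{1}{2}\|y\|^2$ follows from plugging the test point $w = u_{d_2}^2 y/(u_{d_2}^2+q)$ into the infimum and using $\|y\|_2\le u_{d_2}\|y\|$ to absorb the Euclidean term. For the lower bound $\tfrac{1}{2}\|y\|^2\le (1+q/\ell_{d_2}^2)\Bcal(y)$, I would let $w^\star$ denote the minimizer defining $\Bcal(y)$ and combine the triangle inequality with $\|\cdot\|\le\|\cdot\|_2/\ell_{d_2}$ to get $\|y\|\le\|w^\star\| + \|y-w^\star\|_2/\ell_{d_2}$, then square via the weighted inequality $(a+b)^2\le(1+\theta)a^2+(1+\theta^{-1})b^2$ with the tuned choice $\theta=q/\ell_{d_2}^2$, yielding $\|y\|^2\le (1+q/\ell_{d_2}^2)\bigl(\|w^\star\|^2+\|y-w^\star\|_2^2/q\bigr)=2(1+q/\ell_{d_2}^2)\Bcal(y)$.

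For part (c), the substitution $w\mapsto\lambda w$ in the defining infimum gives $\Bcal(\lambda y)=\lambda^2\Bcal(y)$ for $\lambda\ge 0$, and symmetry of both norms in their arguments gives $\Bcal(-y)=\Bcal(y)$, so $\Bcal$ is positively $2$-homogeneous. Nonnegativity of $\Bcal$ and the implication $\Bcal(y)=0\Rightarrow y=0$ (immediate from the lower bound in (b)) give definiteness of $\|y\|_\Bcal:=\sqrt{2\Bcal(y)}$. The only delicate step is the triangle inequality for $\|\cdot\|_\Bcal$; this is the point where the main care is required, and it follows by combining convexity of $\Bcal$ with $2$-homogeneity: writing $y_1+y_2=\lambda(y_1/\lambda)+(1-\lambda)(y_2/(1-\lambda))$ for $\lambda\in(0,1)$, applying Jensen's inequality to $\Bcal$, using $2$-homogeneity to pull out $\lambda^{-1}$ and $(1-\lambda)^{-1}$ factors, and finally optimizing over $\lambda$ gives the Minkowski-type bound $\sqrt{2\Bcal(y_1+y_2)}\le\sqrt{2\Bcal(y_1)}+\sqrt{2\Bcal(y_2)}$.

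The main obstacle is the triangle inequality in part (c); parts (a) and (b) amount to invoking standard Moreau-envelope calculus and direct estimates that mirror the proof of Lemma \ref{lemma:chen-A} line for line.
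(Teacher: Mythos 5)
Your proposal is correct and matches the substance of the paper's proof: the paper disposes of this lemma by citing Lemma 2.1 of \cite{chenfinite} (applied with $L=1$ since the regularizing norm is $\|\cdot\|_2$), and your self-contained argument --- Moreau-envelope smoothness for (a), the tuned test point and weighted Cauchy--Schwarz for (b), and $2$-homogeneity plus convexity for the triangle inequality in (c) --- is precisely the reasoning behind that cited result. All the individual estimates check out (in particular the choice $w=u_{d_2}^2 y/(u_{d_2}^2+q)$ and $\theta=q/\ell_{d_2}^2$ give exactly the stated constants), so nothing further is needed.
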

Lemmas \ref{lemma:chen-A} and \ref{lemma:chen-B} both follow from Lemma 2.1 from \cite{chenfinite}, where we use $L=1$ because we are working with the $\ell_2$ norm in the definition of $\Acal(\cdot)$ and $\Bcal(\cdot)$. We also quote additional important properties of the function $\Acal(\cdot)$ and $\Bcal(\cdot)$ from \cite{chenfinite}. 
\begin{lemma}\label{lemma:subg}
    For $\Acal(\cdot)$ and $\Bcal(\cdot)$ as defined in Definition \ref{defn:Moreau}, the following hold true.
    \begin{enumerate}[label=(\alph*)]
        \item  For $x_1,x_2\in\RR^{d_1}$,  $\langle \nabla \Acal(x_1),x_2\rangle_2 \leq \|x_1\|_\Acal\|x_2\|_\Acal.$ Similarly, for $y_1,y_2\in\RR^{d_2}$,  $\langle \nabla \Bcal(y_1),y_2\rangle_2 \leq \|y_1\|_\Bcal\|y_2\|_\Bcal.$
        \item For $x\in\RR^{d_1}$, 
        $\langle \nabla \Acal(x), x\rangle_2\geq 2\Acal(x).$ Similarly, for $y\in\RR^{d_2}$, 
        $\langle \nabla \Bcal(y), y\rangle_2\geq 2\Bcal(y).$
    \end{enumerate}
\end{lemma}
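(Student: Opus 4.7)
My plan is to derive both parts directly from the representation $\Acal(x) = \tfrac{1}{2}\|x\|_\Acal^2$ (and $\Bcal(y) = \tfrac{1}{2}\|y\|_\Bcal^2$) guaranteed by part (c) of Lemmas \ref{lemma:chen-A} and \ref{lemma:chen-B}, using only convexity of $\Acal$, the triangle inequality for $\|\cdot\|_\Acal$, and the $2$-homogeneity relation $\Acal(c x) = c^2 \Acal(x)$. The arguments for $\Acal$ and $\Bcal$ are identical after swapping dimensions, so I would write the proof out only for $\Acal$ and note the symmetry for $\Bcal$.

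For part (a), I would introduce the scalar function $\phi(t) = \Acal(x_1 + t x_2)$, $t \in \RR$, so that $\phi'(0) = \langle \nabla \Acal(x_1), x_2\rangle_2$ by the chain rule. Since $\Acal$ is convex (Lemma \ref{lemma:chen-A}(a)), $\phi$ is convex in $t$, and hence $\phi'(0) \le (\phi(t) - \phi(0))/t$ for every $t > 0$. The triangle inequality for $\|\cdot\|_\Acal$ gives $\|x_1 + t x_2\|_\Acal \le \|x_1\|_\Acal + t\|x_2\|_\Acal$; squaring and subtracting $\|x_1\|_\Acal^2$ yields
\[
\phi(t) - \phi(0) \le t\|x_1\|_\Acal \|x_2\|_\Acal + \tfrac{t^2}{2}\|x_2\|_\Acal^2 .
\]
Dividing by $t$ and letting $t \downarrow 0$ gives $\langle \nabla \Acal(x_1), x_2\rangle_2 \le \|x_1\|_\Acal \|x_2\|_\Acal$, as required.

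For part (b), I would invoke the $2$-homogeneity $\Acal(c x) = c^2 \Acal(x)$, which is immediate from $\Acal(x) = \tfrac{1}{2}\|x\|_\Acal^2$. Differentiating both sides in $c$ and applying the chain rule on the left-hand side gives $\langle \nabla \Acal(c x), x\rangle_2 = 2 c \Acal(x)$; setting $c = 1$ yields $\langle \nabla \Acal(x), x\rangle_2 = 2 \Acal(x)$. This is, in fact, an equality (Euler's identity for $2$-homogeneous differentiable functions), which trivially implies the stated inequality.

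I do not anticipate a substantive obstacle: the entire content is packaged in Lemmas \ref{lemma:chen-A} and \ref{lemma:chen-B}, and the remaining steps are one-line manipulations of convex, $2$-homogeneous functions. The only bit of care needed is to avoid appealing to a dual norm of $\|\cdot\|_\Acal$ (which the paper never introduces); the convexity-plus-triangle-inequality route above sidesteps this and keeps the proof self-contained within the tools already developed in the excerpt.
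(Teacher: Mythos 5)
Your proof is correct. Note that the paper itself gives no proof of this lemma --- it is quoted directly from \cite{chenfinite} --- so there is nothing in the paper to compare against; your argument is the standard one underlying the cited result. Both steps are sound: part (a) follows from convexity of $\Acal$ plus the triangle inequality for $\|\cdot\|_\Acal$ (correctly avoiding any appeal to a dual norm), and part (b) is Euler's identity for the $2$-homogeneous function $\Acal(x)=\tfrac{1}{2}\|x\|_\Acal^2$, which gives the stated inequality as an equality; differentiability of $\Acal$ is guaranteed by its $1/q$-smoothness from Lemma \ref{lemma:chen-A}(a), so the chain-rule steps are justified.
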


For ease of notation, we define $\ell=\min\{\ell_{d_1}, \ell_{d_2}\}$ and $u=\max\{u_{d_1}, u_{d_2}\}$. Then for $x\in\RR^{d_1}$ and $y\in\RR^{d_2}$, 
$$\ell \|x\|\leq \|x\|_2\leq u \|x\|\quad\text{and}\quad\ell\|y\|\leq \|y\|_2\leq u\|y\|,$$ and $$\left(1+q/u^2\right)\Acal(x)\leq \frac{1}{2}\|x\|^2\leq \left(1+q/\ell^2\right)\Acal(x)\quad\text{and}\quad \left(1+q/u^2\right)\Bcal(y)\leq \frac{1}{2}\|y\|^2\leq \left(1+q/\ell^2\right)\Bcal(y).$$ Equivalently, 
$$\sqrt{1+q/u^2}\|x\|_\Acal\leq \|x\|\leq \sqrt{1+q/\ell^2}\|x\|_\Acal\quad\text{and}\quad\sqrt{1+q/u^2}\|y\|_\Acal\leq \|y\|\leq \sqrt{1+q/\ell^2}\|y\|_\Bcal.$$

\subsection{\textbf{Properties of Solutions of Poisson Equation}}\label{app:proof-VandW}
Similar to $V(\cdot)$, for each $x$ and $y$, note that $W(x,y,i)$ is a solution to the following Poisson equation \cite{Benaim}, i.e., $W(x,y,i)$ satisfies
$$W(x,y,i)=g(x,y,i)-\sum_{j\in\Scal}\pi(j)g(x,y,j)+\sum_{j\in\Scal}p(j|i)W(x,y,j).$$

The following lemma shows that $V(x,y,i)$ and $W(x,y,i)$ are Lipschitz in $x,y$ and show linear growth.
\begin{lemma}\label{lem:poisson_soln-prop}
    The solutions of Poisson equation $V(\cdot,\cdot,\cdot)$ and $W(\cdot,\cdot,\cdot)$ as defined in Definition \ref{defn:VandW} satisfy linear growth and Lipschitzness, i.e., the following hold.
    \begin{enumerate}[label=(\alph*)]
        \item $\|V(x,y,i)\|+\|W(x,y,i)\|\leq K'(1+\|x\|+\|y\|)\; \forall x\in\RR^{d_1}, y\in\RR^{d_2}, i\in\Scal.$
        \item $\|V(x_1,y_1,i)-V(x_2,y_2,i)\|\leq L_2(\|x_1-x_2\|+\|y_1-y_2\|) \; \forall x_1,x_2\in\RR^{d_1}, y_1,y_2\in\RR^{d_2}, i\in\Scal.$
        \item $\|W(x_1,y_1,i)-W(x_2,y_2,i)\|\leq L_2(\|x_1-x_2\|+\|y_1-y_2\|) \; \forall x_1,x_2\in\RR^{d_1}, y_1,y_2\in\RR^{d_2}, i\in\Scal.$
    \end{enumerate}
\end{lemma}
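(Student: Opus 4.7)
The plan is to prove all three parts by bounding the sums in the definitions of $V$ and $W$ termwise, and then controlling the number of terms using the first return time $\tau$ to the reference state $i_0$. The single key fact from Markov chain theory that drives everything is that for an irreducible chain on a finite state space $\Scal$, the hitting times have uniformly bounded expectation, i.e.\ $T \coloneqq \max_{i\in\Scal}E_i[\tau] < \infty$. I would state this at the outset and then treat each of the three properties as an essentially routine consequence of the two assumptions on $f$ and $g$ (linear growth and Lipschitzness from Assumption \ref{assu-Lipschitz}).

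For part (a), starting from the definition of $V$, I apply the triangle inequality inside the expectation:
\[
\|V(x,y,i)\| \leq E_i\!\left[\sum_{m=0}^{\tau-1}\Big\|f(x,y,Z_m)-\sum_{j\in\Scal}\pi(j)f(x,y,j)\Big\|\right]\leq E_i\!\left[\sum_{m=0}^{\tau-1} 2K(1+\|x\|+\|y\|)\right],
\]
where the second inequality uses Assumption \ref{assu-Lipschitz} (the uniform-in-$i$ linear growth bound) applied both to each $f(x,y,Z_m)$ and to the convex combination $\sum_j \pi(j) f(x,y,j)$. Factoring the deterministic bound out of the expectation gives $\|V(x,y,i)\|\leq 2KT(1+\|x\|+\|y\|)$, and the same argument verbatim applies to $W$. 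Taking $K' = 4KT$ finishes the bound.

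For parts (b) and (c), I use the same template but with Lipschitzness instead of linear growth. Writing the difference $V(x_1,y_1,i)-V(x_2,y_2,i)$ inside a single expectation, the integrand becomes
\[
\Big\|\bigl(f(x_1,y_1,Z_m)-f(x_2,y_2,Z_m)\bigr)-\sum_{j\in\Scal}\pi(j)\bigl(f(x_1,y_1,j)-f(x_2,y_2,j)\bigr)\Big\|,
\]
which the triangle inequality and the Lipschitz part of Assumption \ref{assu-Lipschitz} bound by $2L(\|x_1-x_2\|+\|y_1-y_2\|)$. Again the bound is deterministic and uniform in $Z_m$, so pulling it out of the expectation and bounding $E_i[\tau]\leq T$ gives the claim with $L_2 = 2LT$. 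The argument for $W$ is identical.

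The only nontrivial point is the finiteness and uniform bound of $E_i[\tau]$. I would justify this by noting that under Assumption \ref{assu-Markov} the chain is irreducible on the finite state space $\Scal$, hence positive recurrent, so the hitting time to the fixed reference state $i_0$ has finite expectation from every starting state; taking the maximum over the finite set $\Scal$ yields $T$. Everything else is bookkeeping, so I do not anticipate any real obstacle; the only choice worth flagging is the constant, since both $K'$ and $L_2$ absorb the factor $T$, and I would make this explicit so that the dependence on the mixing behavior of $\{Z_n\}$ is transparent for later use in Lemmas \ref{lemma:recursive} and \ref{lemma:bounded-expectation}.
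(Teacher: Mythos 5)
Your proposal is correct and follows essentially the same route as the paper: triangle inequality inside the expectation, the uniform linear-growth/Lipschitz bounds from Assumption \ref{assu-Lipschitz} applied termwise, and the finiteness of $\max_i E_i[\tau]$ for an irreducible finite-state chain, yielding the same constants $K'=4KT$ and $L_2=2LT$. The only cosmetic difference is that you perturb $x$ and $y$ simultaneously in parts (b)--(c), whereas the paper varies one argument at a time and combines by the triangle inequality.
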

\begin{proof}
    \begin{enumerate}[label=(\alph*)]
        \item Note that 
        \begin{align*}
    \|V(x,y,i)\|&= \left\|E_i\left[\sum_{m=0}^{\tau-1}(f(x,y,Z_m)-\sum_{j\in\Scal}\pi(j)f(x,y,j))\right]\right\|\\
    &\stackrel{(a)}{\leq}  E_i\left[\left\|\sum_{m=0}^{\tau-1}(f(x,y,Z_m)-\sum_{j\in\Scal}\pi(j)f(x,y,j))\right\|\right]\\
    &\stackrel{(b)}{\leq}  E_i\left[\sum_{m=0}^{\tau-1}\|f(x,y,Z_m)\|+\sum_{m=0}^{\tau-1}\left\|\sum_{j\in\Scal}\pi(j)f(x,y,j)\right\|\right]\\
    &\stackrel{(c)}{\leq} E_i[2\tau K(1+\|x\|+\|y\|)]= 2K(1+\|x\|+\|y\|)E_i[\tau].
\end{align*}
Here inequality (a) follows from Jensen's inequality, (b) follows from triangle inequality and (c) follows from Assumption \ref{assu-Lipschitz}. For an irreducible Markov chain with a finite state space, $E_i[\tau]$ is finite for all $i$. Hence, for all $i\in\Scal$, $\|V(x,y,i)\|\leq 2K\max_iE_i[\tau](1+\|x\|+\|y\|)$. Similarly $\|W(x,y,i)\|\leq 2K\max_iE_i[\tau](1+\|x\|+\|y\|)$, completing the proof for part (a), with $K'\coloneqq4K\max_iE_i[\tau]$.
\item Next, we show that $V$ is Lipschitz in $x$ and $y$.
\begin{align*}
    \|V(x_1,y,i)-V(x_2,y,i)\|&\leq E_i\Bigg[\sum_{m=0}^{\tau-1}\|f(x_1,y,Z_m)-f(x_2,y,Z_m)\|\\
    &\;\;\;\;\;\;\;\;+\sum_{m=0}^{\tau-1}\Bigg\|\sum_{j\in\Scal}\pi(j)(f(x_1,y,j)-f(x_2,y,j))\Bigg\|\Bigg]\\
    &\stackrel{(a)}{\leq} 2L\|x_1-x_2\|E_i[\tau].
\end{align*}
Here inequality (a) follows from Assumption \ref{assu-Lipschitz}. Then, for all $x_1,x_2,y,i$, $\|V(x_1,y,i)-V(x_2,y,i)\|\leq L_2\|x_1-x_2\|$, where $L_2=2\max_iE_i[\tau]L$. Similarly, for all $x,y_1,y_2,i$, $\|V(x,y_1,i)-V(x,y_2,i)\|\leq L_2\|y_1-y_2\|$. This completes the proof for part (b). 
\item The proof for part (c) follows exactly as part (b).
\end{enumerate}
\end{proof}

\subsection{\textbf{Proof for Lemma \ref{lem:xstar-Lipschitz} --- Lipschitz Nature of $\xstar(y)$}}
\begin{proof}[Proof for Lemma \ref{lem:xstar-Lipschitz}]
    Note that $\barf(\xstar(y),y)=\xstar(y)$. Then, for $y_1,y_2$
\begin{align*}
    \|\xstar(y_1)-\xstar(y_2)\|&=\|\barf(\xstar(y_1),y_1)-\barf(\xstar(y_2),y_2)\|\\
    &\leq \|\barf(\xstar(y_1),y_1)-\barf(\xstar(y_2),y_1)\|+ \|\barf(\xstar(y_2),y_1)-\barf(\xstar(y_2),y_2)\|\\
    &\leq \lambda\|\xstar(y_1)-\xstar(y_2)\|+L\|y_1-y_2\|
\end{align*}
This implies that $\|\xstar(y_1)-\xstar(y_2)\|\leq (L/(1-\lambda))\|y_1-y_2\|. $
\end{proof}

\subsection{\textbf{Proof for Lemma \ref{lemma:recursive}(a) --- Intermediate Recursive Bound on $\EE[\Acal(x_n-\xstar(y_n))]$}}
Let $n>0$ and $\lambda'=1-\lambda\sqrt{\frac{(1+q/\ell^2)}{(1+q/u^2)}}$. The constant $q$ is chosen such that $\lambda'>0$, which is always possible as $\sqrt{\frac{(1+q/\ell^2)}{(1+q/u^2)}}$ goes to one as $q$ goes to zero.
By substituting $x_2=x_{n+1}-\xstar(y_{n+1})$ and $x_1=x_n-\xstar(y_n)$ in Lemma \ref{lemma:chen-A}(a), we get   
\begin{subequations}\label{x-split}
\begin{align}
    \Acal(x_{n+1}-\xstar(y_{n+1}))&\leq \Acal(x_{n}-\xstar(y_n))\nonumber\\
    &\;\;+\langle\nabla \Acal(x_n-\xstar(y_n)),x_{n+1}-x_n+\xstar(y_n)-\xstar(y_{n+1})\rangle_2\label{x-split1}\\
    &\;\;+\frac{1}{2q}\|x_{n+1}-x_n+\xstar(y_n)-\xstar(y_{n+1})\|_2^2.\label{x-split2}
\end{align}
\end{subequations}
We further simplify the term in \eqref{x-split1} as follows.
\begin{subequations}\label{x1-split}
\begin{align}
    &\langle\nabla \Acal(x_n-\xstar(y_n)),x_{n+1}-x_n+\xstar(y_n)-\xstar(y_{n+1})\rangle_2\nonumber\\
    &=\alpha_n\langle\nabla \Acal(x_n-\xstar(y_n)),f(x_n,y_n,Z_n)-x_n+M_{n+1}\rangle_2+\left\langle\nabla \Acal(x_n-\xstar(y_n)),\xstar(y_n)-\xstar(y_{n+1})\right\rangle_2\nonumber\\
    &=\alpha_n\langle\nabla \Acal(x_n-\xstar(y_n)),\barf(x_n,y_n)-\barf(\xstar(y_n),y_n)\rangle_2\label{x1-split1}\\
    &\;\;-\alpha_n\langle\nabla \Acal(x_n-\xstar(y_n)),x_n-\xstar(y_n)\rangle_2\label{x1-split2}\\
    &\;\;+\alpha_n\langle\nabla \Acal(x_n-\xstar(y_n)),f(x_n,y_n,Z_n)-\barf(x_n,y_n)+M_{n+1}\rangle_2\label{x1-split3}\\
    &\;\;+\left\langle\nabla \Acal(x_n-\xstar(y_n)),\xstar(y_n)-\xstar(y_{n+1})\right\rangle_2\label{x1-split4}
\end{align}
\end{subequations}
The second equality follows from the definition that $\xstar(y_n)$ is a fixed point for $\barf(\cdot,y_n)$ and hence $\xstar(y_n)=\barf(\xstar(y_n),y_n)$. The last two terms (\eqref{x1-split3} and \eqref{x1-split4}) are `error' terms and we later show that they are negligible as compared to the other terms. We now bound the four terms in \eqref{x1-split} and the term \eqref{x-split2} as follows.
\begin{itemize}
    \item \textbf{Term (\ref{x1-split1}) ---} Using Lemma \ref{lemma:subg}(a), note that
$$\langle\nabla \Acal(x_n-\xstar(y_n)),\barf(x_n,y_n)-\barf(\xstar(y_n),y_n)\rangle_2\leq \|x_n-\xstar(y_n)\|_\Acal\|\barf(x_n,y_n)-\barf(\xstar(y_n),y_n)\|_\Acal.$$
Now, using Lemma \ref{lemma:chen-A}, we have
\begin{align*}
    \Acal(\barf(x_n,y_n)-\barf(\xstar(y_n),y_n))&\leq \frac{1}{2(1+q/u^2)}\|\barf(x_n,y_n)-\barf(\xstar(y_n),y_n)\|^2\\
    &\stackrel{(a)}{\leq} \frac{\lambda^2}{2(1+q/u^2)} \|x_n-\xstar(y_n)\|^2\leq \frac{\lambda^2(1+q/\ell^2)}{(1+q/u^2)} \Acal(x_n-\xstar(y_n)).
\end{align*}
Here inequality (a) follows from the assumption that $\barf(\cdot,y)$ is contractive (Assumption \ref{assu-contractive-f}). 
Using the fact that $\Acal(x)=1/2\|x\|_\Acal^2$, we now have
\begin{align}\label{x1-split1-soln}
    &\langle\nabla \Acal(x_n-\xstar(y_n)),\barf(x_n,y_n)-\barf(\xstar(y_n),y_n)\rangle_2\nonumber\\
    &\leq\|x_n-\xstar(y_n)\|_\Acal\|\barf(x_n,y_n)-\barf(\xstar(y_n),y_n)\|_\Acal\leq\|x_n-\xstar(y_n)\|_\Acal\left( \lambda\sqrt{\frac{(1+q/\ell^2)}{(1+q/u^2)}}\|x_n-\xstar(y_n)\|_\Acal\right)\nonumber\\
    &= \lambda\sqrt{\frac{(1+q/\ell^2)}{(1+q/u^2)}}\|x_n-\xstar(y_n)\|_\Acal^2=2\lambda\sqrt{\frac{(1+q/\ell^2)}{(1+q/u^2)}}\Acal(x_n-\xstar(y_n))=(2-2\lambda')\Acal(x_n-\xstar(y_n)).
\end{align}
\item \textbf{Term (\ref{x1-split2}) ---} Lemma \ref{lemma:subg}(b) directly gives us
\begin{equation}\label{x1-split2-soln}
    -\langle\nabla \Acal(x_n-\xstar(y_n)),x_n-\xstar(y_n)\rangle_2\leq -2\Acal(x_n-\xstar(y_n)).
\end{equation}
\item \textbf{Term (\ref{x1-split3}) ---} Recall that 
\begin{align*}
   f(x_n,y_n,Z_n)-\barf(x_n,y_n)&=V(x_n,y_n,Z_n)-\sum_{j\in\Scal}p(j|Z_n)V(x_n,y_n,j) \nonumber\\
   &=V(x_n,y_n,Z_{n+1})-\sum_{j\in\Scal}p(j|Z_n)V(x_n,y_n,j)+V(x_n,y_n,Z_n)-V(x_n,y_n,Z_{n+1}).
\end{align*}
Also recall that ${\tilde{V}_{n+1}}=V(x_n,y_n,Z_{n+1})-\sum_{j\in\Scal}p(j|Z_n)V(x_n,y_n,j)$ is a martingale difference sequence with respect to filtration $\mathcal{F}_n$. Thus, we can write \eqref{x1-split3} as
\begin{subequations}\label{x1-split3-split}
    \begin{align}
        &\alpha_n\langle\nabla \Acal(x_n-\xstar(y_n)),f(x_n,y_n,Z_n)-\barf(x_n,y_n)+M_{n+1}\rangle_2\nonumber\\
        &=\alpha_n\langle\nabla \Acal(x_n-\xstar(y_n)),{\tilde{V}_{n+1}}+M_{n+1}\rangle_2\label{x1-split3-mart}\\
        &\;\;\;+\alpha_n\langle\nabla \Acal(x_n-\xstar(y_n)),V(x_n,y_n,Z_n)-V(x_n,y_n,Z_{n+1})\rangle_2\label{x1-split3-diff}.
    \end{align}
\end{subequations}
Let $d_n=\langle\nabla \Acal(x_n-\xstar(y_n)),V(x_n,y_n,Z_n)\rangle_2$. Then, \eqref{x1-split3-diff} can be re-written as the term of a telescoping series and additional `error terms'. 
\begin{subequations}\label{x1-split3-diff-soln}
    \begin{align}
        &\alpha_n\langle\nabla \Acal(x_n-\xstar(y_n)),V(x_n,y_n,Z_n)-V(x_n,y_n,Z_{n+1})\rangle_2\nonumber\\
        &=\alpha_n(d_n-d_{n+1})\label{tel-diff-d}\\
        &\;\;\;+\alpha_n\langle\nabla \Acal(x_{n+1}-\xstar(y_{n+1}))-\nabla \Acal(x_n-\xstar(y_n)), V(x_n,y_n,Z_{n+1})\rangle_2\label{x1-split-diff-1}\\
        &\;\;\;+\alpha_n\langle\nabla \Acal(x_{n+1}-\xstar(y_{n+1})), V(x_{n+1},y_{n+1},Z_{n+1})-V(x_n,y_n,Z_{n+1})\rangle_2\label{x1-split-diff-2}.
    \end{align}
\end{subequations}
Terms \eqref{x1-split-diff-1} and \eqref{x1-split-diff-2} are the error terms which we bound next.
\begin{itemize}
    \item \textbf{Term (\ref{x1-split-diff-1}) ---} Note that:
\begin{align*}
    &\alpha_n\langle\nabla \Acal(x_{n+1}-\xstar(y_{n+1}))-\nabla \Acal(x_n-\xstar(y_n)), V(x_n,y_n,Z_{n+1})\rangle_2\\
    \stackrel{(a)}{\leq} &\alpha_n\|\nabla \Acal(x_{n+1}-\xstar(y_{n+1}))-\nabla \Acal(x_n-\xstar(y_n))\|_2\|V(x_n,y_n,Z_{n+1})\|_2\\
    \stackrel{(b)}{\leq} &\alpha_n\|\nabla \Acal(x_{n+1}-\xstar(y_{n+1}))-\nabla \Acal(x_n-\xstar(y_n))\|_2\left(u\|V(x_n,y_n,Z_{n+1})\|\right)\\
    \stackrel{(c)}{\leq}  &\frac{\alpha_n}{q}\|x_{n+1}-\xstar(y_{n+1})-x_n+\xstar(y_n)\|_2K'u(1+\|x_n\|+\|y_n\|)\\
    \stackrel{(d)}{\leq}  &\frac{\alpha_nK'u^2}{q}(\|x_{n+1}-x_n\|+\|\xstar(y_{n+1})-\xstar(y_n)\|)(1+\|x_n\|+\|y_n\|)\\
    \stackrel{(e)}{\leq} &\frac{\alpha_nK'u^2}{q}\alpha_n\hD_1(1+\|x_n\|+\|y_n\|)^2.
\end{align*}
Here inequality (a) follows from Cauchy-Schwarz inequality and inequality (b) follows from norm equivalence ($\|\cdot\|_2\leq u\|\cdot\|$). For inequality (c), we use smoothness of $\Acal(x)$ for the first term, i.e., $\|\nabla \Acal(x_1)-\nabla \Acal(x_2)\|\leq (1/q)\|x_1-x_2\|_2$, and Lemma \ref{lem:poisson_soln-prop} for the second term. Inequality (d) follows from application of the triangle inequality and norm equivalence. Inequality (e) follows from Lemma \ref{lem:inter-bounds}\ref{lem-part:x-diff-bound}. 

Using the above bound and Lemma \ref{lem:inter-bounds}\ref{lem-part:crude-bound}, \eqref{x1-split-diff-1} is bounded as follows.
\begin{align}\label{x1-split-diff-soln-1}
    &\alpha_n\langle\nabla \Acal(x_{n+1}-\xstar(y_{n+1}))-\nabla \Acal(x_n-\xstar(y_n)), V(x_n,y_n,Z_{n+1})\rangle_2\nonumber\\
    &\leq \alpha_n^2\hA_1(1+\Acal(x_n-x^*(y_n))+\Bcal(y_n-y^*)),
\end{align}
where $\hA_1=K'u^2\hD_1\hD_2/q$.

\item \textbf{Term (\ref{x1-split-diff-2}) ---} Note that
\begin{align*}
    &\alpha_n\langle\nabla \Acal(x_{n+1}-\xstar(y_{n+1})), V(x_{n+1},y_{n+1},Z_{n+1})-V(x_n,y_n,Z_{n+1})\rangle_2\\
 \stackrel{(a)}{\leq}&\alpha_n\|x_{n+1}-\xstar(y_{n+1})\|_\Acal\|V(x_{n+1},y_{n+1},Z_{n+1})-V(x_n,y_n,Z_{n+1})\|_\Acal\\
    \stackrel{(b)}{\leq} &\frac{\alpha_n}{1+q/u^2}\|x_{n+1}-\xstar(y_{n+1})\|\|V(x_{n+1},y_{n+1},Z_{n+1})-V(x_n,y_n,Z_{n+1})\|\\
     \stackrel{(c)}{\leq}&\frac{\alpha_n}{1+q/u^2}\|x_{n+1}-\xstar(y_{n+1})\|\left(L_2\|x_{n+1}-x_n\|+L_2\|y_{n+1}-y_n\|\right).
\end{align*}
Here inequality (a) follows from Lemma \ref{lemma:subg}, inequality (b) follows from Lemma \ref{lemma:chen-A} and inequality (c) follows from Lemma \ref{lem:poisson_soln-prop}. Then using Lemma \ref{lem:inter-bounds}\ref{lem-part:gen-diff-bound} with constants $\hC_1=L_2$ and $\hC_2=L_2$, we first note that 
$$\left(L_2\|x_{n+1}-x_n\|+L_2\|y_{n+1}-y_n\|\right)\leq \alpha_n\hA_2(1+\|x_n\|+\|y_n\|),$$
where $\hA_2=\left(L_2(1+2K)+L_2\gamma_0(1+2K)\right)$. Now,
\begin{align*}
    \|x_{n+1}-\xstar(y_{n+1})\|&\leq \|x_{n+1}-x_n+\xstar(y_n)-\xstar(y_{n+1})\|+\|x_n-\xstar(y_n)\|\\
    &\leq \alpha_n\hD_1(1+\|x_n\|+\|y_n\|)+\|x_n-\xstar(y_n)\|.
\end{align*}
where the second inequality follows from \ref{lem:inter-bounds}\ref{lem-part:x-diff-bound}. Combining the above two inequalities, we get
\begin{align*}
    &\alpha_n\langle\nabla \Acal(x_{n+1}-\xstar(y_{n+1})), V(x_{n+1},y_{n+1},Z_{n+1})-V(x_n,y_n,Z_{n+1})\rangle_2\\
    \leq &\frac{\alpha_n^2\hA_2}{1+q/u^2}\left(\|x_n-x^*(y_n)\|+\alpha_n\hD_1(1+\|x_n\|+\|y_n\|)\right)(1+\|x_n\|+\|y_n\|)\\
    = &\frac{\alpha_n^2\hA_2}{1+q/u^2}\left(\|x_n-x^*(y_n)\|(1+\|x_n\|+\|y_n\|)+\alpha_n\hD_1(1+\|x_n\|+\|y_n\|)^2\right)\\
    \stackrel{(a)}{\leq} &\frac{\alpha_n^2\hA_2}{1+q/u^2}\Bigg(\frac{1}{2}\|x_n-x^*(y_n)\|^2+\frac{1}{2}(1+\|x_n\|+\|y_n\|)^2+\alpha_n\hD_1\hD_2\left(1+\Acal(x_n-x^*(y_n))+\Bcal(y_n-y^*)\right)\Bigg)\\
    \stackrel{(b)}{\leq} & \frac{\alpha_n^2\hA_2}{1+q/u^2}\Bigg(\frac{(1+q/\ell^2)}{2}\Acal(x_n-x^*(y_n))+\left(\frac{1}{2}+\alpha_n\hD_1\right)\hD_2\left(1+\Acal(x_n-x^*(y_n))+\Bcal(y_n-y^*)\right)\Bigg).
\end{align*}
Both inequalities (a) and (b) use Lemma \ref{lem:inter-bounds}\ref{lem-part:crude-bound} to bound $(1+\|x_n\|+\|y_n\|)^2$. Additionally, inequality $(a)$ uses AM-GM inequality and (b) uses Lemma \ref{lemma:chen-A}. Define $\hA_3=\frac{\hA_2}{1+q/u^2}\left(\frac{1}{2}+\alpha_0\hD_1\right)\hD_2+\frac{\hA_2(1+q/\ell^2)}{2(1+q/u^2)}$. Then
\begin{align}\label{x1-split-diff-soln-2}
    &\alpha_n\langle\nabla \Acal(x_{n+1}-\xstar(y_{n+1})), V(x_{n+1},y_{n+1},Z_{n+1})-V(x_n,y_n,Z_{n+1})\rangle_2\nonumber\\
    \leq\;&\alpha_n^2\hA_3\Bigg(1+\Acal(x_n-x^*(y_n))+\Bcal(y_n-y^*)\Bigg).
\end{align}

\end{itemize}

Combining the terms \eqref{x1-split3-mart} and \eqref{tel-diff-d} with the bounds \eqref{x1-split-diff-soln-1} and \eqref{x1-split-diff-soln-2}, we can finally bound \eqref{x1-split3}.
\begin{align}\label{x1-split3-soln}
\begin{split}
    &\alpha_n\langle\nabla \Acal(x_n-\xstar(y_n)),f(x_n,y_n,Z_n)-\barf(x_n,y_n)+M_{n+1}\rangle_2\\
    \leq&\;\alpha_n\langle\nabla \Acal(x_n-\xstar(y_n)),{\tilde{V}_{n+1}}+M_{n+1}\rangle_2+\alpha_n(d_n-d_{n+1})+\alpha_n^2(\hA_1+\hA_3)(1+\Acal(x_n-x^*(y_n))+\Bcal(y_n-y^*)).
\end{split}
\end{align}

\item \textbf{Term (\ref{x1-split4}) ---} Note that 
\begin{align*}
    \langle\nabla \Acal(x_n-\xstar(y_n)),\xstar(y_n)-\xstar(y_{n+1})\rangle_2&\stackrel{(a)}{\leq} \|x_n-\xstar(y_n)\|_\Acal\|\xstar(y_n+1)-\xstar(y_n)\|_\Acal\\
    &\stackrel{(b)}{\leq} \sqrt{\frac{1}{1+q/u^2}}\|x_n-\xstar(y_n)\|_\Acal\|\xstar(y_n+1)-\xstar(y_n)\|\\
    &\stackrel{(c)}{\leq} \frac{L_1}{\sqrt{1+q/u^2}}\|x_n-\xstar(y_n)\|_\Acal\|y_{n+1}-y_n\|\\
    &\stackrel{(d)}{\leq} \frac{L_1\beta_n(1+2K)}{\sqrt{1+q/u^2}}\left(\|x_n-\xstar(y_n)\|_\Acal(1+\|x_n\|+\|y_n\|)\right).
\end{align*}
Here inequality (a) and (b) follow from Lemmas \ref{lemma:subg} and \ref{lemma:chen-A}, respectively. Inequality (c) follows from Lipschitz nature of the map $y\mapsto \xstar(y)$ as shown in Lemma \ref{lem:xstar-Lipschitz}. The last inequality uses Lemma \ref{lem:inter-bounds}\ref{lem-part:gen-diff-bound} with $\hC_1=0$ and $\hC_2=1$. Using Young's inequality, $2ab\leq \frac{a^2}{\eta}+\eta b^2$ with $\eta=\frac{\beta_n}{\alpha_n\lambda'}$, we get
\begin{align*}
    &\langle\nabla \Acal(x_n-\xstar(y_n)),\xstar(y_n)-\xstar(y_{n+1})\rangle_2\\
    \leq\;& \beta_n\left(\frac{\alpha_n\lambda'}{2\beta_n}\|x_n-\xstar(y_n)\|^2_\Acal+\frac{\beta_n}{2\alpha_n\lambda'}\frac{L_1^2(1+2K)^2}{(1+q/u^2)}(1+\|x_n\|+\|y_n\|)^2\right)\\
    \leq\;& \alpha_n\lambda'\Acal(x_n-\xstar(y_n))+(\beta_n^2/\alpha_n)\hA_4\left(1+\Acal(x_n-x^*(y_n))+\Bcal(y_n-y^*)\right),
\end{align*}
where the last inequality follows from Lemma \ref{lem:inter-bounds}\ref{lem-part:crude-bound} and $\hA_4=L_1^2(1+2K)^2\hD_2/(2\lambda'(1+q/u^2))$.
\item \textbf{Term (\ref{x-split2}) ---} Now we bound \eqref{x-split2} using Lemma \ref{lem:inter-bounds}\ref{lem-part:x-diff-bound} and Lemma \ref{lem:inter-bounds}\ref{lem-part:crude-bound} as follows.
\begin{align*}
    \frac{1}{2q}\|x_{n+1}-x_n+\xstar(y_n)-\xstar(y_{n+1})\|_2^2&\leq \frac{\alpha_n^2\hD_1^2}{2q}(1+\|x_n\|+\|y_n\|)^2\\
    &\leq \hA_5\alpha_n^2\left(1+\Acal(x_n-x^*(y_n))+\Bcal(y_n-y^*)\right),
\end{align*}
where $\hA_5=\hD_1^2\hD_2/(2q)$.
\end{itemize}

Having bounded the four terms in \eqref{x1-split} and the term \eqref{x-split2}, we can now return to \eqref{x-split} to get a bound on $\Acal(x_{n+1}-\xstar(y_{n+1}))$. For this we define constants $\Gamma_1=\hA_1+\hA_3+\hA_4+\hA_5$. Then, combining all the bounds we get the following recursion. 
\begin{align*}
\begin{split}
    \Acal(x_{n+1}-\xstar(y_{n+1}))&\leq (1-\alpha_n\lambda')\Acal(x_{n}-\xstar(y_n))+\alpha_n\langle\nabla \Acal(x_n-\xstar(y_n)),{\tilde{V}_{n+1}}+M_{n+1}\rangle_2\\
    &+\alpha_n(d_n-d_{n+1})+\Gamma_1\left(\alpha_n^2+\frac{\beta_n^2}{\alpha_n}\right)\left(1+\Acal(x_n-x^*(y_n))+\Bcal(y_n-y^*)\right).
\end{split}   
\end{align*}
Recall that ${\tilde{V}_{n+1}}+M_{n+1}$ is a martingale difference sequence with respect to filtration $\mathcal{F}_n$. Thus, we have:
\begin{align*}
    \EE[\Acal(x_{n+1}-\xstar(y_{n+1}))|\mathcal{F}_n]&\leq (1-\alpha_n\lambda')\Acal(x_{n}-\xstar(y_n))+\EE\left[\alpha_n(d_n-d_{n+1})\mid\mathcal{F}_n\right]\\
    &+\Gamma_1\left(\alpha_n^2+\frac{\beta_n^2}{\alpha_n}\right)\left(1+\Acal(x_n-x^*(y_n))+\Bcal(y_n-y^*)\right).  
\end{align*}
Taking expectation again, and applying law of total expectation, we get
\begin{align}\label{x-exp-rec-inter}
    \EE[\Acal(x_{n+1}-\xstar(y_{n+1}))]&\leq (1-\alpha_n\lambda')\EE[\Acal(x_{n}-\xstar(y_n))]+\EE\left[\alpha_n(d_n-d_{n+1})\right]\nonumber\\
    &+\Gamma_1\left(\alpha_n^2+\frac{\beta_n^2}{\alpha_n}\right)\left(1+\EE[\Acal(x_n-x^*(y_n))+\Bcal(y_n-y^*)]\right).  
\end{align}

\subsection{\textbf{Proof for Lemma \ref{lemma:recursive}(b) - Intermediate Recursive Bound on $\EE[\Bcal(y_n-\ystar)]$}}
Let $n>0$ and $\mu'=1-\mu\sqrt{\frac{(1+q/\ell^2)}{(1+q/u^2)}}$. The constant $q$ is chosen such that $\mu'>0$.
By substituting $y_2=y_{n+1}-\ystar$ and $y_1=y_n-\ystar$ in Lemma \ref{lemma:chen-B}(a), we get   
\begin{subequations}\label{y-split}
\begin{align}
    \Bcal(y_{n+1}-\ystar)&\leq \Bcal(y_{n}-\ystar)\nonumber\\
    &\;\;+\langle\nabla \Bcal(y_n-\ystar),y_{n+1}-y_n\rangle_2\label{y-split1}\\
    &\;\;+\frac{1}{2q}\|y_{n+1}-y_n\|_2^2.\label{y-split2}
\end{align}
\end{subequations}
We further simplify the term in \eqref{y-split1} as follows.
\begin{subequations}\label{y1-split}
\begin{align}
    \langle\nabla \Bcal(y_n-\ystar),y_{n+1}-y_n\rangle_2&=\beta_n\langle\nabla \Bcal(y_n-\ystar),g(x_n,y_n,Z_n)-y_n+M'_{n+1}\rangle_2\nonumber\\
    &=\beta_n\langle\nabla \Bcal(y_n-\ystar),\barg(\xstar(y_n),y_n)-\barg(\xstar,\ystar)\rangle_2\label{y1-split1}\\
    &-\beta_n\langle\nabla \Bcal(y_n-\ystar),y_n-\ystar\rangle_2\label{y1-split2}\\
    &+\beta_n\langle\nabla \Bcal(y_n-\ystar),g(x_n,y_n,Z_n)-\barg(x_n,y_n)+M'_{n+1}\rangle_2\label{y1-split3}\\
    &+\beta_n\langle\nabla \Bcal(y_n-\ystar),\barg(x_n,y_n)-\barg(\xstar(y_n),y_n)\rangle_2\label{y1-split4}
\end{align}
\end{subequations}
The second equality follows from the definition that $\ystar$ is a fixed point for $\barg(x^*(\cdot),\cdot)$. The last two terms (\eqref{y1-split3} and \eqref{y1-split4}) are `error' terms and as in the case of iteration for $\{x_n\}$, we later show that they are negligible as compared to the other terms. We now bound the four terms in \eqref{y1-split} and the term \eqref{y-split2} as follows.
\begin{itemize}
    \item \textbf{Term \ref{y1-split1} --- } Using Lemma \ref{lemma:subg}(a), note that
$$\langle\nabla \Bcal(y_n-\ystar),\barg(\xstar(y_n),y_n)-\barg(\xstar,\ystar)\rangle_2\leq \|y_n-\ystar\|_\Bcal\|\barg(x^*(y_n),y_n)-\barg(\xstar,\ystar)\|_\Bcal.$$
Now, using Lemma \ref{lemma:chen-B}, we have
\begin{align*}
    \Bcal(\barg(x^*(y_n),y_n)-\barg(\xstar,\ystar))&\leq \frac{1}{2(1+q/u^2)}\|\barg(x^*(y_n),y_n)-\barg(\xstar,\ystar)\|^2\\
    &\stackrel{(a)}{\leq} \frac{\mu^2}{2(1+q/u^2)} \|y_n-\ystar\|^2\\
    &\leq \frac{\mu^2(1+q/\ell^2)}{(1+q/u^2)} \Bcal(y_n-\ystar).
\end{align*}
Here inequality (a) follows from the assumption that $\barg(\xstar(\cdot),\cdot)$ is contractive (Assumption \ref{assu-contractive-g}). Using the fact that $\Bcal(y)=1/2\|y\|_\Bcal^2$, we now have
\begin{align}\label{y1-split1-soln}
    \langle\nabla \Bcal(y_n-\ystar),\barg(\xstar(y_n),y_n)-\barg(\xstar,\ystar)\rangle_2&\leq\|y_n-\ystar\|_\Bcal\times \mu\sqrt{\frac{(1+q/\ell^2)}{(1+q/u^2)}}\|y_n-\ystar\|_\Bcal\nonumber\\
    &=2\mu\sqrt{\frac{(1+q/\ell^2)}{(1+q/u^2)}}\Bcal(y_n-\ystar)=(2-2\mu')\Bcal(y_n-\ystar).
\end{align}
The above inequality follows from a series of steps similar to inequality \eqref{x1-split1-soln}.
\item \textbf{Term \ref{y1-split2} --- } Lemma \ref{lemma:subg}(b) directly gives us
\begin{equation}\label{y1-split2-soln}
    -\langle\nabla \Bcal(y_n-\ystar),y_n-\ystar\rangle_2\leq -2\Bcal(y_n-\ystar).
\end{equation}

\item \textbf{Term \ref{y1-split3} --- } Note that 
\begin{align*}
   g(x_n,y_n,Z_n)-\barg(x_n,y_n)&=W(x_n,y_n,Z_n)-\sum_{j\in\Scal}p(j|Z_n)W(x_n,y_n,j) \nonumber\\
   &=W(x_n,y_n,Z_{n+1})-\sum_{j\in\Scal}p(j|Z_n)W(x_n,y_n,j)+W(x_n,y_n,Z_n)-W(x_n,y_n,Z_{n+1}).
\end{align*}
Define ${\widetilde{W}_{n+1}}=W(x_n,y_n,Z_{n+1})-\sum_{j\in\Scal}p(j|Z_n)W(x_n,y_n,j)$. Note that $\{\widetilde{W}_{n+1}\}$ is a martingale difference sequence with respect to the filtration $\mathcal{F}_n$. Thus, we can write \ref{y1-split3} as
\begin{subequations}\label{y1-split3-split}
    \begin{align}
        &\beta_n\langle\nabla \Bcal(y_n-\ystar),g(x_n,y_n,Z_n)-\barg(x_n,y_n)+M'_{n+1}\rangle_2\nonumber\\&=\beta_n\langle\nabla \Bcal(y_n-\ystar),{\widetilde{W}_{n+1}}+M'_{n+1}\rangle_2\label{y1-split3-mart}\\
        &\;\;+\beta_n\langle\nabla \Bcal(y_n-\ystar),W(x_n,y_n,Z_n)-W(x_n,y_n,Z_{n+1})\rangle_2\label{y1-split3-diff}.
    \end{align}
\end{subequations}
Let $e_n=\langle\nabla \Bcal(y_n-\ystar),W(x_n,y_n,Z_n)\rangle_2$. Then, \ref{y1-split3-diff} can be re-written as the term of a telescoping series and additional `error terms'.
\begin{subequations}\label{y1-split3-diff-soln}
    \begin{align}
        &\beta_n\langle\nabla \Bcal(y_n-\ystar),W(x_n,y_n,Z_n)-W(x_n,y_n,Z_{n+1})\rangle_2\nonumber\\
        &=\beta_n(e_n-e_{n+1})\label{tel-diff-e}\\
        &\;\;+\beta_n\langle\nabla \Bcal(y_{n+1}-\ystar)-\nabla \Bcal(y_n-\ystar), W(x_n,y_n,Z_{n+1})\rangle_2\label{y1-split-diff-1}\\
        &\;\;+\beta_n\langle\nabla \Bcal(y_{n+1}-\ystar), W(x_{n+1},y_{n+1},Z_{n+1})-W(x_n,y_n,Z_{n+1})\rangle_2\label{y1-split-diff-2}.
    \end{align}
\end{subequations}
We next bound the terms \eqref{y1-split-diff-1} and \eqref{y1-split-diff-2}.
\begin{itemize}
    \item \textbf{Term \ref{y1-split-diff-1} --- } 

Using Holder's inequality and the smoothness of $\Bcal(x)$, \ref{y1-split-diff-1} can be bounded as
\begin{align*}
    &\beta_n\langle\nabla \Bcal(y_{n+1}-\ystar)-\nabla \Bcal(y_n-\ystar), W(x_n,y_n,Z_{n+1})\rangle_2\\
    \stackrel{(a)}{\leq} &\beta_n\|\nabla \Bcal(y_{n+1}-\ystar)-\nabla \Bcal(y_n-\ystar)\|_2\|W(x_n,y_n,Z_{n+1})\|_2.\\
    \stackrel{(b)}{\leq} &\beta_n\|\nabla \Bcal(y_{n+1}-\ystar)-\nabla \Bcal(y_n-\ystar)\|_2(u\|W(x_n,y_n,Z_{n+1})\|).\\
    \stackrel{(c)}{\leq} &\frac{\beta_n}{q}\|y_{n+1}-y_n\|_2K'u(1+\|x_n\|+\|y_n\|)\\
    \stackrel{(c)}{\leq} &\frac{\beta_nu}{q}\|y_{n+1}-y_n\|K'u(1+\|x_n\|+\|y_n\|)\\
    \stackrel{(e)}{\leq} &\frac{\beta_nK'u^2}{q}\beta_n(1+2K)(1+\|x_n\|+\|y_n\|)^2.
\end{align*}
Here inequality (a) follows from Cauchy-Schwarz inequality and inequality (b) follows from norm equivalence ($\|\cdot\|_2\leq u\|\cdot\|$). For inequality (c), we use smoothness of $\Bcal(x)$ for the first term, i.e., $\|\nabla \Bcal(y_1)-\nabla \Bcal(y_2)\|\leq (1/q)\|x_1-x_2\|_2$, and Lemma \ref{lem:poisson_soln-prop} for the second term. Inequality (d) follows from application of norm equivalence and inequality (e) follows from Lemma \ref{lem:inter-bounds}\ref{lem-part:gen-diff-bound}. Define $\hB_1=\frac{(1+2K)K'u^2}{q}\hD_2\gamma_0$. Then \ref{y1-split-diff-1} is bounded as
\begin{align}\label{y1-split-diff-soln-1}
    \beta_n\langle\nabla \Bcal(y_{n+1}-\ystar)&-\nabla \Bcal(y_n-\ystar), W(x_n,y_n,Z_{n+1})\rangle_2\nonumber\\
    &\leq \beta_n^2\frac{(1+2K)K'u^2}{q}\hD_2(1+\Acal(x_n-x^*(y_n))+\Bcal(y_n-y^*))\\
    &\leq \alpha_n\beta_n\hB_1(1+\Acal(x_n-x^*(y_n))+\Bcal(y_n-y^*)).
\end{align}
\item \textbf{Term \ref{y1-split-diff-2} --- } Note that 
\begin{align*}
    &\beta_n\langle\nabla \Bcal(y_{n+1}-\ystar), W(x_{n+1},y_{n+1},Z_{n+1})-W(x_n,y_n,Z_{n+1})\rangle_2\\
    \stackrel{(a)}{\leq} &\beta_n\|y_{n+1}-\ystar\|_\Bcal\|W(x_{n+1},y_{n+1},Z_{n+1})-W(x_n,y_n,Z_{n+1})\|_h\\
    \stackrel{(b)}{\leq} &\frac{\beta_n}{1+q/u^2}\|y_{n+1}-\ystar\|\|W(x_{n+1},y_{n+1},Z_{n+1})-W(x_n,y_n,Z_{n+1})\|\\
    \stackrel{(c)}{\leq} &\frac{\beta_n}{1+q/u^2}\|y_{n+1}-\ystar\|\left(L_2\|x_{n+1}-x_n\|+L_2\|y_{n+1}-y_n\|\right).
\end{align*}
Here inequality (a) follows from Lemma \ref{lemma:subg}, inequality (b) follows from Lemma \ref{lemma:chen-B} and inequality (c) follows from Lemma \ref{lem:poisson_soln-prop}. Define $\hB_2=\left(L_2(1+2K)+L_2\gamma_0(1+2K)\right)$ (note that $\hB_2=\hA_2$). Then using Lemma \ref{lem:inter-bounds}\ref{lem-part:gen-diff-bound} with $\hC_1=L_2$ and $\hC_2=L_2$ for the second term, we get
$$\left(L_2\|x_{n+1}-x_n\|+L_2\|y_{n+1}-y_n\|\right)\leq \alpha_n\hB_2(1+\|x_n\|+\|y_n\|).$$
Next, 
$$\|y_{n+1}-\ystar\|\leq \|y_{n+1}-y_n\|+\|y_n-\ystar\|\leq \beta_n(1+2K)(1+\|x_n\|+\|y_n\|)+\|y_n-\ystar\|.$$
Hence,
\begin{align*}
    &\beta_n\langle\nabla \Bcal(y_{n+1}-\ystar), W(x_{n+1},y_{n+1},Z_{n+1})-W(x_n,y_n,Z_{n+1})\rangle_2\\
    \leq &\frac{\beta_n\alpha_n\hB_2}{1+q/u^2}\Big(\|y_n-\ystar\|+\beta_n(1+2K)(1+\|x_n\|+\|y_n\|)\Big)(1+\|x_n\|+\|y_n\|)\\
    \leq &\frac{\beta_n\alpha_n\hB_2}{1+q/u^2}\left(\frac{1+q/\ell^2}{2}\Bcal(y_n-\ystar)+\left(\frac{1}{2}+\beta_n(1+2K)\right)\hD_2(1+\Acal(x_n-\xstar(y_n))+\Bcal(y_n-\ystar))\right).
\end{align*}
Here, we skip a few steps, using the expansion from \eqref{x1-split-diff-soln-2} directly. Then,
\begin{align}\label{y1-split-diff-soln-2}
    \beta_n\langle\nabla \Bcal(y_{n+1}-\ystar)&, W(x_{n+1},y_{n+1},Z_{n+1})-W(x_n,y_n,Z_{n+1})\rangle_2\nonumber\\
    &\leq\beta_n\alpha_n\hB_3(1+\Acal(x_n-x^*(y_n))+\Bcal(y_n-y^*)).
\end{align}
where $\hB_3=\frac{\hB_2}{1+q/u^2}\left(\left(\frac{1}{2}+\beta_0(1+2K)\right)\hD_2+\frac{1+q/l^2}{2}\right)$.
\end{itemize}
Combining the terms \eqref{y1-split3-mart} and \eqref{tel-diff-e}, with the bounds \eqref{y1-split-diff-soln-1} and \eqref{y1-split-diff-soln-2}, we finally bound \eqref{y1-split3} as follows.
\begin{align}\label{y1-split3-soln}
    &\beta_n\langle\nabla \Bcal(y_n-\ystar),g(x_n,y_n,Z_n)-\barg(x_n,y_n)+M'_{n+1}\rangle_2\nonumber\\
    &\leq\beta_n\langle\nabla \Bcal(y_n-\ystar),{\widetilde{W}_{n+1}}+M'_{n+1}\rangle_2+\beta_n(e_n-e_{n+1})+\alpha_n\beta_n(\hB_1+\hB_3)(1+\Acal(x_n-x^*(y_n))+\Bcal(y_n-y^*)).
\end{align}

\item \textbf{Term \ref{y1-split4} --- } For \eqref{y1-split4}, using Lemma \ref{lemma:subg} and norm equivalence, we get
\begin{align}
    \beta_n\langle\nabla \Bcal(y_n-\ystar),\barg(x_n,y_n)-\barg(\xstar(y_n),y_n)\rangle_2\stackrel{(a)}{\leq}& \beta_n\|y_n-\ystar\|_\Bcal\|\barg(x_n,y_n)-\barg(\xstar(y_n),y_n)\|_\Bcal\nonumber\\
    \stackrel{(b)}{\leq} &\frac{\beta_n}{\sqrt{1+q/u^2}}\|y_n-\ystar\|_\Bcal\|\barg(x_n,y_n)-\barg(\xstar(y_n),y_n)\|\nonumber\\
    \stackrel{(c)}{\leq} &\frac{\beta_nL}{\sqrt{1+q/u^2}}\|y_n-\ystar\|_\Bcal\|x_n-x^*(y_n)\|\nonumber\\
    \stackrel{(d)}{\leq} &\beta_n\left(\|y_n-\ystar\|_\Bcal\frac{L\sqrt{1+q/\ell^2}}{\sqrt{1+q/u^2}}\|x_n-x^*(y_n)\|_\Acal\right)\nonumber\\
    \stackrel{(e)}{\leq} &\beta_n\left(\mu'\frac{1}{2}\|y_n-\ystar\|_\Bcal^2+\frac{L^2(1+q/\ell^2)}{2\mu'(1+q/u^2)}\|x_n-\xstar(y_n)\|_\Acal^2\right)\nonumber\\
    \stackrel{(f)}{=}&\beta_n\left(\mu'\Bcal(y_n-\ystar)+\frac{L^2(1+q/\ell^2)}{\mu'(1+q/u^2)}\Acal(x_n-\xstar(y_n))\right)\nonumber.
\end{align}
Here inequality (a) follows from Lemma \ref{lemma:subg}, inequalities (b) and (d) follow from Lemma \ref{lemma:chen-B} and (c) follows from Assumption \ref{assu-Lipschitz}. Inequality (e) follows from AM-GM inequality, $2ab\leq \frac{a^2}{\eta}+\eta b^2$ with $\eta=\mu'$ and finally equation (f) follows from Lemma \ref{lemma:chen-B}. Define $\hB_4=\frac{L^2(1+q/\ell^2)}{\mu'(1+q/u^2)}$. Hence 
\begin{equation}\label{y1-split4-soln}
    \beta_n\langle\nabla \Bcal(y_n-\ystar),\barg(x_n,y_n)-\barg(\xstar(y_n),y_n)\rangle_2\leq \beta_n\mu'\Bcal(y_n-\ystar)+\beta_n\hB_4\Acal\|x_n-\xstar(y_n)\|.
\end{equation}

\item \textbf{Term \ref{y-split2} --- } Finally we note that
\begin{align*}
    \frac{1}{2q}\|y_{n+1}-y_n\|_2^2\stackrel{(a)}{\leq} \frac{\beta_n^2(1+2K)^2}{2q}(1+\|x_n\|+\|y_n\|)^2&\stackrel{(b)}{\leq} \frac{\alpha_n\beta_n\gamma_0(1+2K)^2}{2q}(1+\Acal(x_n-x^*(y_n))+\Bcal(y_n-y^*))\\
    &= \alpha_n\beta_n\hB_5(1+\Acal(x_n-x^*(y_n))+\Bcal(y_n-y^*))
\end{align*}
where $\hB_5=\frac{\gamma_0(1+2K)^2}{2q}$. Here inequality (a) follows from Lemma \ref{lem:inter-bounds}\ref{lem-part:gen-diff-bound} and (b) follows from Lemma \ref{lem:inter-bounds}\ref{lem-part:crude-bound}.
\end{itemize}

Having bounded the four terms in \eqref{y1-split} and the term \eqref{y-split2}, we can now return to \eqref{y-split} to get a bound on $\Bcal(y_{n+1}-\ystar)$. Define constants $\Gamma_2=\hB_1+\hB_3+\hB_5$ and $\Gamma_3=\hB_4$. Then, combining all the bounds we get the following recursion.
\begin{align*}
\begin{split}
    \Bcal(y_{n+1}-\ystar)&\leq (1-\beta_n\mu')\Bcal(y_{n}-\ystar)+\beta_n\langle\nabla \Bcal(y_n-\ystar),{\widetilde{W}_{n+1}}+M'_{n+1}\rangle_2+\beta_n(e_n-e_{n+1})\\
    &+\alpha_n\beta_n\Gamma_2(1+\Acal(x_n-x^*(y_n))+\Bcal(y_n-y^*))+\beta_n\Gamma_3\Acal(x_n-\xstar(y_n))
\end{split}   
\end{align*}
Recall that ${\widetilde{W}_{n+1}}+M'_{n+1}$ is a martingale difference sequence with respect to filtration $\mathcal{F}_n$. Thus, we have:
\begin{align*}
    \EE[\Bcal(y_{n+1}-\ystar)|\mathcal{F}_n]&\leq (1-\beta_n\mu')\Bcal(y_{n}-\ystar)+\beta_n(e_n-e_{n+1})\\
    &\;\;+\alpha_n\beta_n\Gamma_2(1+\Acal(x_n-x^*(y_n))+\Bcal(y_n-y^*))+\beta_n\Gamma_3\Acal(x_n-\xstar(y_n))
\end{align*}
Taking expectation again, and applying law of total expectation, we get
\begin{align}\label{y-exp-rec-inter}
    \EE[\Bcal(y_{n+1}-\ystar)]&\leq(1-\beta_n\mu')\EE[\Bcal(y_{n}-\ystar)]+\beta_n\EE[(e_n-e_{n+1})]\nonumber\\
    &\;\;\;+\alpha_n\beta_n\Gamma_2(1+\EE[\Acal(x_n-x^*(y_n))+\Bcal(y_n-y^*)])+\beta_n\Gamma_3\EE[\Acal(x_n-\xstar(y_n))].
\end{align}
This completes our proof for Lemma \ref{lemma:recursive}.
\subsection{\textbf{Proof for Lemma \ref{lemma:bounded-expectation} --- Iterates are bounded in expectation}}
   Adding \eqref{x-exp-rec-inter} and \eqref{y-exp-rec-inter}, we get:
    \begin{align*}
        &\EE[\Acal(x_{n+1}-\xstar(y_{n+1}))+\Bcal(y_{n+1}-\ystar)]\\
        &\leq (1-\alpha_n\lambda')\EE[\Acal(x_{n}-\xstar(y_n))]+(1-\beta_n\mu')\EE[\Bcal(y_{n}-\ystar)]+\alpha_n\EE[(d_n-d_{n+1})]+\beta_n\EE[(e_n-e_{n+1})]\\
        &\;\;\;+\left(\alpha_n^2+\frac{\beta_n^2}{\alpha_n}+\alpha_n\beta_n\right)(\Gamma_1+\Gamma_2)(1+\EE[\Acal(x_n-x^*(y_n))+\Bcal(y_n-y^*)])+\beta_n\Gamma_3\EE[\EE\Acal(x_n-\xstar(y_n))].
    \end{align*}
    We first simplify the terms $\alpha_n\EE[(d_n-d_{n+1})]$ and $\beta_n\EE[(e_n-e_{n+1})]$. For this, note that 
    \begin{align*}
        \alpha_n\EE[(d_n-d_{n+1})]&\leq \alpha_{n-1}\EE[d_n]-\alpha_n\EE[d_{n+1}]+|\alpha_n-\alpha_{n-1}|\EE[|d_n|]\\
        &\leq \alpha_{n-1}\EE[d_n]-\alpha_n\EE[d_{n+1}]+\cc_2\hD_3\alpha_n^2(1+\EE[\Acal(x_n-x^*(y_n))+\Bcal(y_n-y^*)]).
    \end{align*}
    The second inequality here follows from our assumption that $|\alpha_n-\alpha_{n-1}|\leq \cc_2\alpha_n^2$ and Lemma \ref{lem:inter-bounds}\ref{lem-part:telescoping-diff-bound}.
    Similarly we have that 
    \begin{align*}
        \beta_n\EE[(e_n-e_{n+1})]&\leq  \beta_{n-1}\EE[e_n]-\beta_n\EE[e_{n+1}]+\cc_2\hD_3\beta_n^2(1+\EE[\Acal(x_n-x^*(y_n))+\Bcal(y_n-y^*)]).
    \end{align*}
    Combining these bounds, we get
    \begin{align*}
        &\EE[\Acal(x_{n+1}-\xstar(y_{n+1}))+\Bcal(y_{n+1}-\ystar)]\\
        &\leq \left(1-\alpha_n\lambda'+\beta_n\Gamma_3+(\alpha_n^2+\beta_n^2+\beta_n^2/\alpha_n+\alpha_n\beta_n)(\Gamma_1+\Gamma_2+\cc_2\hD_3)\right)\EE[\Acal(x_n-\xstar(y_n))]\\
        &\;\;\;+\left(1-\beta_n\mu'+(\alpha_n^2+\beta_n^2+\beta_n^2/\alpha_n+\alpha_n\beta_n)(\Gamma_1+\Gamma_2+\cc_2\hD_3)\right)\EE[\Bcal(y_n-\ystar)]\\
        &\;\;\;+\alpha_{n-1}\EE[d_n]-\alpha_n\EE[d_{n+1}]+\beta_{n-1}\EE[e_n]-\beta_n\EE[e_{n+1}]\\
        &\;\;\;+(\alpha_n^2+\beta_n^2+\beta_n^2/\alpha_n+\alpha_n\beta_n)(\Gamma_1+\Gamma_2+\cc_2\hD_3)
    \end{align*}

    We now define time instant $n'$ as the time instant such that for all $n>n'$, the following three properties are satisfied.
    \begin{enumerate}[label=(\roman*)]
        \item $$\beta_n\Gamma_3+(\alpha_n^2+\beta_n^2+\beta_n^2/\alpha_n+\alpha_n\beta_n)(\Gamma_1+\Gamma_2+\cc_2\hD_3)\leq \lambda'\alpha_n,$$
        \item $$(\alpha_n^2+\beta_n^2+\beta_n^2/\alpha_n+\alpha_n\beta_n)(\Gamma_1+\Gamma_2+\cc_2\hD_3)\leq \mu'\beta_n,$$
        \item $$(\alpha_n+\beta_n)\hD_3\leq 0.5.$$
    \end{enumerate}
    Using the first two properties, we get for all $n\geq n'$,
        \begin{align*}
        \EE[\Acal(x_{n+1}-\xstar(y_{n+1}))+\Bcal(y_{n+1}-\ystar)]&\leq  \EE[\Acal(x_{n}-\xstar(y_n))+\Bcal(y_{n}-\ystar)]\\
        &\;\;\;+\alpha_{n-1}\EE[d_n]-\alpha_n\EE[d_{n+1}]+\beta_{n-1}\EE[e_n]-\beta_n\EE[e_{n+1}]\\
        &\;\;\;+(\alpha_n^2+\beta_n^2+\beta_n^2/\alpha_n+\alpha_n\beta_n)(\Gamma_1+\Gamma_2+\cc_2\hD_3)
    \end{align*}
    Iterating the final recursion from $n'$ to $n$ we get
    \begin{align*}
        \EE[\Acal(x_{n+1}-\xstar(y_{n+1}))+\Bcal(y_{n+1}-\ystar)]&\leq \EE[\Acal(x_{n'}-\xstar(y_{n'}))+\Bcal(y_{n'}-\ystar)]\\
        &\;\;+\alpha_{n'-1}\EE[d_{n'}]-\alpha_{n}\EE[d_{n+1}]+\beta_{n'-1}\EE[e_{n'}]-\beta_n\EE[e_{n+1}]\\
        &\;\;+\sum_{k=n'}^n(\alpha_n^2+\beta_n^2+\beta_n^2/\alpha_n+\alpha_n\beta_n)(\Gamma_1+\Gamma_2+\cc_2\hD_3)
    \end{align*}
    Note that the summation is finite because of our assumptions on the step size. Next we use Lemma \ref{lem:inter-bounds}\ref{lem-part:telescoping-diff-bound} to get
    \begin{align*}
        -\alpha_n\EE[d_{n+1}]-\beta_n\EE[e_{n+1}]&\leq (\alpha_n+\beta_n)\hD_3(1+\EE[\Acal(x_{n+1}-\xstar(y_{n+1}))+\Bcal(y_{n+1}-\ystar)])\\
        &\leq 0.5+0.5\EE[\Acal(x_{n+1}-\xstar(y_{n+1}))+\Bcal(y_{n+1}-\ystar)].
    \end{align*}
    Here the second inequality follows from point (iii) in definition of $n'$ above. Similarly,
    \begin{align*}
        \alpha_{n-1}\EE[d_{n'}]+\beta_{n-1}\EE[e_{n'}]&\leq (\alpha_{n'}+\beta_{n'})\hD_3(1+\EE[\Acal(x_{n'}-\xstar(y_{n'}))+\Bcal(y_{n'}-\ystar)]).
    \end{align*}
     Hence, we get the following bound.
    \begin{align*}
        &0.5\EE[\Acal(x_{n+1}-\xstar(y_{n+1}))+\Bcal(y_{n+1}-\ystar)]\\
        &\leq  (1+\alpha_{n'}\hD_3+\beta_{n'}\hD_3)\EE[\Acal(x_{n'}-\xstar(y_{n'}))+\Bcal(y_{n'}-\ystar)]\\
        &\;\;\;+(\alpha_{n'}+\beta_{n'})\hD_3+0.5+\sum_{k=n'}^n(\alpha_n^2+\beta_n^2+\beta_n^2/\alpha_n+\alpha_n\beta_n)(\Gamma_1+\Gamma_2+\cc_2\hD_3).
    \end{align*}
    Note that $n'$ is a constant dependent only on the step size sequence and hence the iterates till time $n'$ can be bounded by a constant using the discrete Gronwall's inequality \cite{Borkar-book}, i.e., there exists $\hD_4$ such that $$\EE[\Acal(x_n-\xstar(y_n))+\Bcal(y_n-\ystar)]\leq \hD_4, \forall n\leq n'.$$ Then for all $n\geq 0$
    $$\EE[\Acal(x_{n}-\xstar(y_n))+\Bcal(y_{n}-\ystar)]\leq \Gamma_4,$$
    where $\Gamma_4=2\hD_4(1+\alpha_{n'}\hD_3+\beta_{n'}\hD_3)+2(\alpha_{n'}+\beta_{n'})\hD_3+1+2\sum_{i=0}^\infty(\alpha_i^2+\beta_i^2+\beta_i^2/\alpha_i+\alpha_i\beta_i)(\Gamma_1+\Gamma_2+\cc_2\hD_3)$.

\subsection{\textbf{Proof for Lemma \ref{lemma:almost-done}(a) --- Almost Sure Convergence}}
For the faster iteration updating on the faster time-scale, we study the ODE $$\dot{x}(t)=\barf(x(t),y)-x(t).$$ For a fixed $y$, we know that $f(\cdot,y)$ is a contractive mapping. Hence, using Theorem 2.1 from \cite{Borkar-book}, $\xstar(y)$ is the globally asymptotically stable equilibrium (GASE) and hence the global attractor for the above ODE. Similarly, for the slower time-scale, we study the ODE $$\dot{y}(t)=\barg(\xstar(y(t)),y(t))-y(t).$$ The function $\barg(\xstar(\cdot),\cdot)$ is contractive, and hence $\ystar$ is the GASE and the global attractor for this ODE. We have shown that the iterates are bounded in expectation and hence they are finite almost surely. Hence our iterates satisfy Assumptions A1-A7 from \cite{Karmakar-convergence}. Corollary 1 from \cite{Karmakar-convergence} then states that the iterates $(x_n,y_n)$ almost surely converge to global attractors $(\xstar,\ystar)$. 

\subsection{\textbf{Proof for Lemma \ref{lemma:almost-done}(b) --- Bound on $\EE[\Acal(x_n-\xstar(y_n))]$}}
Combining the intermediate bound from Lemma \ref{lemma:recursive} with our boundedness result (Lemma \ref{lemma:bounded-expectation}), we get
\begin{align*}
    \EE[\Acal(x_{n+1}-\xstar(y_{n+1}))]&\leq (1-\alpha_n\lambda')\EE[\Acal(x_{n}-\xstar(y_n))]+\alpha_n\EE[(d_n-d_{n+1})]+\left(\alpha_n^2+\frac{\beta_n^2}{\alpha_n}\right)\left(1+\Gamma_4\right).
\end{align*}
Iterating the above recursion from $i=0$ to $n$, we get 
\begin{align}\label{x-bound-almost}
    \EE[\Acal(x_n-\xstar(y_n))]&\leq \prod_{i=0}^{n-1}(1-\alpha_i\lambda')\EE[\Acal(x_0-\xstar(y_0))]+\sum_{i=0}^{n-1}\alpha_i\EE[d_i-d_{i+1}]\prod_{j=i+1}^{n-1}(1-\alpha_j\lambda')\nonumber\\
    &\;\;\;+ \sum_{i=0}^{n-1}\left(\alpha_i^2+\frac{\beta_i^2}{\alpha_i}\right)\prod_{j=i+1}^{n-1}(1-\alpha_j\lambda').
\end{align}

We bound the term corresponding to the telescopic series above using Lemma \ref{lem:tel_term}.
Furthermore, we use Lemma \ref{lem:inter-bounds}\ref{lem-part:telescoping-diff-bound} and Lemma \ref{lemma:bounded-expectation} to get $\EE[|d_i|]\leq \hD_3(1+\Acal(x_n-\xstar(y_n))+\Bcal(y_n-\ystar))\leq \hD_3(1+\Gamma_4).$
Let $\hD_5=\hD_3(1+\Gamma_4)(\cc_1+\cc_2)$. Combining the bounds, we have
\begin{align}\label{d_n-tight-bound}
    \sum_{i=0}^{n-1}\alpha_i\EE[d_i-d_{i+1}]\prod_{j=i+1}^{n-1}(1-\alpha_j\lambda')\leq \alpha_0\hD_3(1+\Gamma_4)\prod_{j=1}^{n-1}(1-\alpha_j\lambda')+\cc_1\alpha_{n}\hD_3(1+\Gamma_4)+\hD_5\sum_{i=1}^{n-1}\alpha_i^2\prod_{j=i+1}^{n-1}(1-\alpha_j\lambda').
\end{align}
Returning to \eqref{x-bound-almost}, we now have
\begin{align*}
    \EE[\Acal(x_n-\xstar(y_n))]&\leq \prod_{i=1}^{n-1}(1-\alpha_i\lambda')\Big(\EE[\Acal(x_0-\xstar(y_0))]+\alpha_0\hD_3(1+\Gamma_4)\Big)+\cc_1\alpha_{n}\hD_3(1+\Gamma_4)\\
    &\;\;\;+ \sum_{i=0}^{n-1}\left(\alpha_i^2+\frac{\beta_i^2}{\alpha_i}\right)\prod_{j=i+1}^{n-1}(1-\alpha_j\lambda')(1+\hD_5).
\end{align*}
Assume that $n\geq n_I$, where $n_I$ is specified in Lemma \ref{lem:step-size_bound}. Also, we substitute $\kappa=\lambda'$. Then, we can bound the above expression as
\begin{align*}
    \EE[\Acal(x_n-\xstar(y_n))]&\leq \Big(\EE[\Acal(x_0-\xstar(y_0))]+\alpha_0\hD_3(1+\Gamma_4)\Big)\exp\left[-\frac{\lambda'\alpha_0}{1-\afrak}\left((n+1)^{1-\afrak}-1\right)\right]+\cc_1\alpha_{n}\hD_3(1+\Gamma_4)\\
    &\;\;\;+ \frac{2(1+\hD_5)}{\lambda'}\left(\alpha_n+\frac{\beta_n^2}{\alpha_n^2}\right).
\end{align*}
Define $n_1\geq n_I$ such that $\exp\left[-\frac{\lambda'\alpha_0}{1-\afrak}\left((n+1)^{1-\afrak}-1\right)\right]\leq \alpha_n$ for all $n\geq n_1$. Define $\Gamma_5=\max(\Big(\EE[\Acal(x_0-\xstar(y_0))]+\alpha_0\hD_3(1+\Gamma_4)\Big), \cc_1\hD_3(1+\Gamma_4), 2(1+\hD_5)/\lambda')$. Then, we get the following final bound on $\EE[\Acal(x_n-\xstar(y_n))]$.
\begin{align*}
    \EE[\Acal(x_n-\xstar(y_n))]\leq \Gamma_5\left(\alpha_n+\frac{\beta_n^2}{\alpha_n^2}\right) ~~\forall n\geq n_1.
\end{align*}

\subsection{\textbf{Proof for Lemma \ref{lemma:almost-done}(c) --- Bound on $\EE[\Bcal(y_n-\ystar)]$}}
Combining the intermediate bound from Lemma \ref{lemma:recursive} with our boundedness result (Lemma \ref{lemma:bounded-expectation}), we get
\begin{align*}
    \EE[\Bcal(y_{n+1}-\ystar)]&\leq(1-\beta_n\mu')\EE[\Bcal(y_{n}-\ystar)]+\beta_n\EE[(e_n-e_{n+1})]+\alpha_n\beta_n\Gamma_2(1+\Gamma_4)+\beta_n\Gamma_3\EE[\Acal(x_n-\xstar(y_n))].
\end{align*}
Using Lemma \ref{lemma:almost-done}(b), $\EE[\Acal(x_n-\xstar(y_n))]\leq \Gamma_5(\alpha_n+\beta_n^2/\alpha_n^2)$. Hence,
\begin{align*}
    \EE[\Bcal(y_{n+1}-\ystar)]&\leq(1-\beta_n\mu')\EE[\Bcal(y_{n}-\ystar)]+\beta_n\EE[(e_n-e_{n+1})]+\alpha_n\beta_n\Gamma_2(1+\Gamma_4)+\Gamma_3\Gamma_5\left(\alpha_n\beta_n+\frac{\beta_n^3}{\alpha_n^2}\right).
\end{align*}
Iterating the above recursion from $i=0$ to $n$, we get
\begin{align*}
    \EE[\Bcal(y_n-\ystar)]&\leq\EE[\Bcal(y_0-\ystar)]\prod_{i=0}^{n-1}(1-\beta_i\mu')+\sum_{i=0}^{n-1}\beta_i\EE[(e_i-e_{i+1})]\prod_{j=i+1}^{n-1}(1-\beta_j\mu')\\
    &\;\;\;+\Big(\Gamma_2(1+\Gamma_4)+\Gamma_3\Gamma_5\Big)\sum_{i=0}^{n-1}\Big(\alpha_i\beta_i+\frac{\beta_i^3}{\alpha_i^2}\Big)\prod_{j=i+1}^{n-1}(1-\beta_j\mu').
\end{align*}
Using Lemma \ref{lem:tel_term} and the same technique as the bound in \eqref{d_n-tight-bound}, we get
\begin{align*}
    \sum_{i=0}^{n-1}\beta_i\EE[e_i-e_{i+1}]\prod_{j=i+1}^{n-1}(1-\beta_j\mu')\leq \beta_0\hD_3(1+\Gamma_4)\prod_{j=1}^{n-1}(1-\beta_j\mu')+\cc_1\beta_{n}\hD_3(1+\Gamma_4)+\hD_5\sum_{i=1}^{n-1}\beta_i^2\prod_{j=i+1}^{n-1}(1-\beta_j\mu').
\end{align*}
Hence,
\begin{align*}
    \EE[\Bcal(y_n-\ystar)]&\leq\Big(\EE[\Bcal(y_0-\ystar)]+\beta_0\hD_3(1+\Gamma_4)\Big)\prod_{i=1}^{n-1}(1-\beta_i\mu')+\cc_1\beta_{n}\hD_3(1+\Gamma_4)\\
    &\;\;\;+\Big(\Gamma_2(1+\Gamma_4)+\Gamma_3\Gamma_5+\hD_5\gamma_0\Big)\sum_{i=0}^{n-1}\Big(\alpha_i\beta_i+\frac{\beta_i^3}{\alpha_i^2}\Big)\prod_{j=i+1}^{n-1}(1-\beta_j\mu').
\end{align*}
Define $C_3=\max(2\afrak, 4(1-\afrak))/\mu'$ and substitute $\kappa=\mu'$. Then, we can bound the above expression as
\begin{align*}
    \EE[\Bcal(y_n-\ystar)]&\leq\Big(\EE[\Bcal(y_0-\ystar)]+\beta_0\hD_3(1+\Gamma_4)\Big)\left(\frac{1}{n+1}\right)^{\mu'\beta_0}+\cc_1\beta_{n}\hD_3(1+\Gamma_4)\\
    &\;\;\;+\Big(\Gamma_2(1+\Gamma_4)+\Gamma_3\Gamma_5+\hD_5\gamma_0\Big)\frac{2}{\mu'}\left(\alpha_n+\frac{\beta_n^2}{\alpha_n^2}\right).
\end{align*}
Note that $\beta_0\mu'\geq \max(2\afrak, 4(1-\afrak))>1$. Define $n_2$ such that $\left(\frac{1}{n+1}\right)^{\mu'\beta_0}\leq \alpha_n$ for all $n\geq n_2$. Define $\Gamma_6=\max\left(\Big(\EE[\Bcal(x_0-\xstar(y_0))]+\beta_0\hD_3(1+\Gamma_4)\Big), \cc_1\hD_3(1+\Gamma_4), 2\Big(\Gamma_2(1+\Gamma_4)+\Gamma_3\Gamma_5+\hD_5\gamma_0\Big)/\mu'\right)$. Then, we get the following final bound on $\EE[\Bcal(x_n-\xstar(y_n))]$.
\begin{align*}
    \EE[\Bcal(x_n-\xstar(y_n))]\leq \Gamma_6\left(\alpha_n+\frac{\beta_n^2}{\alpha_n^2}\right) ~~\forall n\geq n_2.
\end{align*}

\subsection{\textbf{Proof for Theorem \ref{thm:expectation-bound}}}
We have already shown the almost sure convergence result in Lemma \ref{lemma:almost-done}(a). We use the bounds from Lemma \ref{lemma:almost-done} and Lemma \ref{lemma:chen-B} to complete our proof. For all $n\geq n_1$,
\begin{align*}
    \EE\left[\|x_n-\xstar(y_n)\|^2\right]\leq 2(1+q/\ell^2)\EE[\Acal(x_n-\xstar(y_n))]\leq 2(1+q/\ell^2)\Gamma_5(\alpha_n+\beta_n^2/\alpha_n^2).
\end{align*}
Similarly, for all $n\geq n_2$,
\begin{align*}
    \EE\left[\|y_n-\ystar\|^2\right]\leq 2(1+q/\ell^2)\EE[\Bcal(y_n-\ystar)]\leq 2(1+q/\ell^2)\Gamma_6(\alpha_n+\beta_n^2/\alpha_n).
\end{align*}
Recall that $\xstar=\xstar(\ystar)$. Then, for all $n\geq \max(n_1, n_2)$,
\begin{align*}
    \EE\left[\|x_n-\xstar\|^2\right]&\leq2\EE[\|x_n-\xstar(y_n)\|^2]+2\EE[\|\xstar(y_n)-\xstar(\ystar)\|^2]\\
    &\leq 2\EE[\|x_n-\xstar(y_n)\|^2]+2L_1^2\EE[\|y_n-\ystar\|^2]\\
    &\leq 4(1+q/\ell^2)(\Gamma_5+L_1^2\Gamma_6)(\alpha_n+\beta_n^2/\alpha_n^2).
\end{align*}
This completes our proof with $C_1=4(1+q/\ell^2)(\Gamma_5+L_1^2\Gamma_6)$, $C_2=2(1+q/\ell^2)\Gamma_6$ and $n_0=\max(n_1, n_2)$.
\subsection{\textbf{Technical Lemmas Used in Proof of Theorem \ref{thm:expectation-bound}}}

\begin{lemma}\label{lem:inter-bounds}
    Suppose the setting of Theorem \ref{thm:expectation-bound} holds. Then, the following hold true.
    \begin{enumerate}[label=(\alph*)]
        \item \label{lem-part:gen-diff-bound} For constants $\hC_1,\hC_2>0$, 
        \begin{align*}
        \hC_1\|x_{n+1}-x_n\|+\hC_2\|y_{n+1}-y_n\|\leq \alpha_n(1+2K)\left(\hC_1+\hC_2\gamma_n\right)(1+\|x_n\|+\|y_n\|).
        \end{align*}
        \item \label{lem-part:x-diff-bound} Define constant $\hD_1=(1+2K)(1+L_1\gamma_0)$. Then,
        \begin{align*}
        \|x_{n+1}-x_n+\xstar(y_n)-\xstar(y_{n+1})\|\leq \alpha_n\hD_1(1+\|x_n\|+\|y_n\|).
        \end{align*} 
        \item \label{lem-part:crude-bound} Define $\hD_2=3\max\{(1+\|x^*\|+\|y^*\|)^2,(1+L_1)^2(1+q/\ell^2)\}$. Then
    \begin{align*}
        (1+\|x_n\|+\|y_n\|)^2\leq \hD_2(1+\Acal(x_n-x^*(y_n))+\Bcal(y_n-y^*)).
    \end{align*}
    \item \label{lem-part:telescoping-diff-bound}Define $\hD_3\coloneqq\frac{K'(2+\hD_2)}{2\sqrt{1+q/\ell^2}}$. Then $$|d_n|\leq \hD_3(1+\Acal(x_n-\xstar(y_n))+\Bcal(y_n-\ystar))\;\text{and}\;|e_n|\leq \hD_3(1+\Acal(x_n-\xstar(y_n))+\Bcal(y_n-\ystar)).$$
    \end{enumerate}
\end{lemma}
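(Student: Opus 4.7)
The proof divides into four parts that cascade: each later part uses the earlier ones, and the whole thing is a sequence of triangle-inequality / Cauchy–Schwarz / AM-GM manipulations combined with the Moreau-envelope properties from Lemmas \ref{lemma:chen-A}, \ref{lemma:chen-B}, \ref{lemma:subg} and the Poisson-solution bounds in Lemma \ref{lem:poisson_soln-prop}. I would handle them in the order they appear, since (b) uses (a), and (d) uses (c).

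For part (a), I read the increments directly off iteration \eqref{iter-main}. Since $\|f(x,y,i)\|\le K(1+\|x\|+\|y\|)$, $\|M_{n+1}\|\le K(1+\|x_n\|+\|y_n\|)$, and $\|x_n\|\le 1+\|x_n\|+\|y_n\|$, the triangle inequality gives $\|x_{n+1}-x_n\|\le \alpha_n(1+2K)(1+\|x_n\|+\|y_n\|)$, and analogously for $y$. Combining and factoring $\alpha_n$ out (writing $\beta_n=\alpha_n\gamma_n$) yields (a). For part (b), I split $\|x_{n+1}-x_n+\xstar(y_n)-\xstar(y_{n+1})\|$ via the triangle inequality and bound the $\xstar$ difference by $L_1\|y_{n+1}-y_n\|$ using Lemma \ref{lem:xstar-Lipschitz}. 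Applying part (a) with $(\hC_1,\hC_2)=(1,L_1)$ and using $\gamma_n\le\gamma_0$ gives exactly $\hD_1=(1+2K)(1+L_1\gamma_0)$.

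For part (c), the plan is to back out $\|x_n\|$ and $\|y_n\|$ against the centered quantities. Triangle inequality gives $\|x_n\|\le \|x_n-\xstar(y_n)\|+\|\xstar(y_n)-\xstar\|+\|\xstar\|\le \|x_n-\xstar(y_n)\|+L_1\|y_n-\ystar\|+\|\xstar\|$ (again by Lemma \ref{lem:xstar-Lipschitz}), and $\|y_n\|\le \|y_n-\ystar\|+\|\ystar\|$. Adding $1$ and squaring via $(a+b+c)^2\le 3(a^2+b^2+c^2)$ produces a sum of three squared terms, the last two of which are converted into $\Acal$ and $\Bcal$ using $\tfrac12\|\cdot\|^2\le(1+q/\ell^2)\Acal$ (resp.\ $\Bcal$) from Lemmas \ref{lemma:chen-A}(b) and \ref{lemma:chen-B}(b). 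Collapsing constants by the stated max yields $\hD_2$.

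For part (d), Lemma \ref{lemma:subg}(a) (applied to $\pm V$) gives $|d_n|\le \|x_n-\xstar(y_n)\|_\Acal\,\|V(x_n,y_n,Z_n)\|_\Acal$, and similarly for $e_n$ in the $\Bcal$ norm. I convert $\|V\|_\Acal$ to the original norm via Lemma \ref{lemma:chen-A}(b), then invoke the linear-growth bound $\|V\|\le K'(1+\|x_n\|+\|y_n\|)$ from Lemma \ref{lem:poisson_soln-prop}(a). An AM-GM step $ab\le \tfrac12(a^2+b^2)$ splits the two factors: one becomes a multiple of $\Acal(x_n-\xstar(y_n))$, and the other becomes a multiple of $(1+\|x_n\|+\|y_n\|)^2$, which is absorbed by part (c) into a constant times $1+\Acal(x_n-\xstar(y_n))+\Bcal(y_n-\ystar)$. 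The analogous chain for $e_n$ uses Lemma \ref{lemma:subg} applied to $\Bcal$ and the $W$-portion of Lemma \ref{lem:poisson_soln-prop}. The main obstacle is purely bookkeeping: tracking how the norm-equivalence factors $\sqrt{1+q/\ell^2}$ and $\sqrt{1+q/u^2}$, together with $K'$ and $\hD_2$, combine to yield the specific constant $\hD_3=K'(2+\hD_2)/(2\sqrt{1+q/\ell^2})$ stated in the lemma; the structure of the argument is otherwise routine.
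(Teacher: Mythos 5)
Your proposal matches the paper's proof in all four parts: the same triangle-inequality and linear-growth argument for (a), the same reduction of (b) to (a) with $(\hC_1,\hC_2)=(1,L_1)$ via Lemma \ref{lem:xstar-Lipschitz}, the same decomposition $\|x_n\|\leq\|x_n-\xstar(y_n)\|+L_1\|y_n-\ystar\|+\|\xstar\|$ followed by $(a+b+c)^2\leq 3(a^2+b^2+c^2)$ and the Moreau-envelope sandwich for (c), and the same chain Lemma \ref{lemma:subg}(a) $\to$ norm equivalence $\to$ Lemma \ref{lem:poisson_soln-prop}(a) $\to$ Young's inequality $\to$ part (c) for (d). The argument is correct and essentially identical to the paper's.
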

\begin{proof}
    \begin{enumerate}[label=(\alph*)]
        \item We have
    \begin{align*}
        \hC_1\|x_{n+1}-x_n\|+\hC_2\|y_{n+1}-y_n\|
        &\leq \Bigg(\alpha_n\hC_1\|f(x_n, y_n, Z_n)-x_n+M_{n+1}\|\\
        &\;\;\;\;\;\;\;\;\;+\beta_n\hC_2\|g(x_n,y_n,Z_n)- y_n+ M'_{n+1}\|\Bigg).
    \end{align*}
    Using the linear growth assumptions
    \begin{align*}
        \hC_1\|x_{n+1}-x_n\|+\hC_2\|y_{n+1}-y_n\|
        &\leq \alpha_n\hC_1((2K)(1+\|x_n\|+\|y_n\|)+\|x_n\|)\\
        &\;\;\;+\beta_n\hC_2(2K)(1+\|x_n\|+y_n\|)+\|y_n\|)\\
        &\leq \alpha_n\left(\hC_1(1+2K)+\hC_2\gamma_n(1+2K)\right)(1+\|x_n\|+\|y_n\|).
    \end{align*}
    \item     Using Lemma \ref{lem:xstar-Lipschitz}, we have
    \begin{align*}
        \|x_{n+1}-x_n+\xstar(y_n)-\xstar(y_{n+1})\|
        &\leq \|x_{n+1}-x_n\|+L_1\|y_n-y_{n+1}\|.
    \end{align*}
    Put $\hC_1=1$ and $\hC_2=L_1$ in the Lemma \ref{lem:inter-bounds}\ref{lem-part:gen-diff-bound} and use the fact that $\gamma_n$ is a non-increasing sequence. The claim follows.
    \item Note that
    \begin{align*}
        (1+\|x_n\|+\|y_n\|)
        &\leq (1+\|x^*(y_n)\|+\|y^*\|+\|x_n-x^*(y_n)\|+\|y_n-y^*\|).
    \end{align*}
    Using Lemma \ref{lem:xstar-Lipschitz}, we can bound $\|x^*(y_n)\|\leq L_1\|y_n-y^*\|+\|x^*\|$ to get
    \begin{align*}
        (1+\|x_n\|+\|y_n\|)^2&\leq 3((1+\|x^*\|+\|y^*\|)^2+\|x_n-x^*(y_n)\|^2+(1+L_1)^2\|y_n-y^*\|^2)\\
        &\leq 3((1+\|x^*\|+\|y^*\|)^2+(1+q/\ell^2)\Acal(x_n-x^*(y_n))\\
        &\;\;\;+(1+L_1)^2(1+q/\ell^2)\Bcal(y_n-y^*)).
    \end{align*}
    The claim follows. 
    \item Using Lemma \ref{lemma:subg}, we have:
    \begin{align*}
        |d_n|&\leq \|x_n-\xstar(y_n)\|_\Acal\|V(x_n,y_n,Z_n)\|_\Acal\\
        &\stackrel{(a)}{\leq} \frac{1}{\sqrt{1+q/\ell^2}}\|x_n-\xstar(y_n)\|_\Acal\|V(x_n,y_n,Z_n)\|\\
        &\stackrel{(b)}{\leq} \frac{K'}{\sqrt{1+q/\ell^2}}\|x_n-\xstar(y_n)\|_\Acal(1+\|x_n\|+\|y_n\|)\\
        &\stackrel{(c)}{\leq} \frac{K'}{\sqrt{1+q/\ell^2}}\left(\frac{1}{2}\|x_n-\xstar(y_n)\|_\Acal^2+\frac{1}{2}(1+\|x_n\|+\|y_n\|)^2\right)\\
        &\stackrel{(d)}{\leq} \frac{K'}{\sqrt{1+q/\ell^2}}\left(\Acal(x_n-\xstar(y_n))+\frac{1}{2}\hD_2\left(1+\Acal(x_n-x^*(y_n)+\Bcal(y_n-y^*)\right)\right)
    \end{align*}
    where we use equivalence of norms for $(a)$, Lemma \ref{lem:poisson_soln-prop} for $(b)$, Young's inequality for $(c)$ and Lemma \ref{lem:inter-bounds}\ref{lem-part:crude-bound} for $(d)$. The bound for $e_n$ follows in exactly the same manner.
    \end{enumerate}
\end{proof}

\section{Proof for Theorem \ref{thm:expectation-special}}\label{app:proof-special}
\subsubsection{\textbf{Recursive Bound on $\EE[\Acal(x_n-\xstar(y_n))]$}}
We do not repeat the complete proof for Lemma \ref{lemma:recursive}(a), and only point out the differences. The only step in which the proof differs is the bound for term \eqref{x1-split4}. Note that 
\begin{align*}
    \langle\nabla \Acal(x_n-\xstar(y_n)),\xstar(y_n)-\xstar(y_{n+1})\rangle_2&\stackrel{}{\leq} \|x_n-\xstar(y_n)\|_\Acal\|\xstar(y_n+1)-\xstar(y_n)\|_\Acal\\
    &\stackrel{}{\leq} \sqrt{\frac{1}{1+q/u^2}}\|x_n-\xstar(y_n)\|_\Acal\|\xstar(y_n+1)-\xstar(y_n)\|\\
    &\stackrel{}{\leq} \frac{L_1}{\sqrt{1+q/u^2}}\|x_n-\xstar(y_n)\|_\Acal\|y_{n+1}-y_n\|\\
    &\stackrel{}{\leq} \frac{L_1\beta_n}{\sqrt{1+q/u^2}}\left(\|x_n-\xstar(y_n)\|_\Acal\|\barg(x_n,y_n)-y_n\|\right).
\end{align*}
We bound $\|\barg(x_n,y_n)-y_n\|$ as follows
\begin{align*}
    \|\barg(x_n,y_n)-y_n\|&\leq \|\barg(x_n,y_n)-\barg(\xstar(y_n),y_n)\|+\|\barg(\xstar(y_n),y_n)-\ystar\|+\|y_n-\ystar\|\\
    &\leq L\|x_n-\xstar(y_n)\|+(1+\mu)\|y_n-\ystar\|\\
    &\leq L\|x_n-\xstar(y_n)\|+2\|y_n-\ystar\|\\
    &\leq L\sqrt{1+q/\ell^2}\|x_n-\xstar(y_n)\|_\Acal+2\sqrt{1+q/\ell^2}\|y_n-\ystar\|_\Bcal.
\end{align*}
Here the second inequality follows from the fact that $\ystar=\barg(\xstar(\ystar),\ystar)$ and that $\barg(\xstar(\cdot),\cdot)$ is $\mu$-contractive. Now,
\begin{align*}
    &\langle\nabla \Acal(x_n-\xstar(y_n)),\xstar(y_n)-\xstar(y_{n+1})\rangle_2\\
    &\leq L_1\sqrt{\frac{1+q/u^2}{1+q/\ell^2}}\Big(L\beta_n\|x_n-\xstar(y_n)\|_\Acal^2+2\beta_n\|x_n-\xstar(y_n)\|_\Acal\|y_n-\ystar\|_\Bcal\Big)\\
    &\leq L_1L\sqrt{\frac{1+q/u^2}{1+q/\ell^2}}\beta_n\|x_n-\xstar(y_n)\|_\Acal^2+\frac{16L_1^2}{\mu'}\frac{1+q/u^2}{1+q/\ell^2}\beta_n\|x_n-\xstar(y_n)\|^2_\Acal+\frac{\mu'}{4}\beta_n\|y_n-\ystar\|^2_\Bcal\\
    &= \hA_6\beta_n\Acal(x_n-\xstar(y_n))+\frac{\mu'}{2}\beta_n\Bcal(y_n-\ystar).
\end{align*}
Here $\hA_6=2LL_1\sqrt{(1+q/u^2)/(1+q/\ell^2)}+32L_1^2(1+q/u^2)/(\mu'(1+q/\ell^2))$.
This completes our bound for the term \eqref{x1-split4}. Using the same steps as in the Proof for Lemma \ref{lemma:recursive}(a), 
but replacing the bound for \eqref{x1-split4} with our new bound, we get
\begin{align}\label{x-rec-special}
    \EE[\Acal(x_{n+1}-\xstar(y_{n+1}))]&\leq (1-2\alpha_n\lambda')\EE[\Acal(x_{n}-\xstar(y_n))]+\EE\left[\alpha_n(d_n-d_{n+1})\right]\nonumber\\
    &+\hA_7\alpha_n^2\left(1+\EE[\Acal(x_n-x^*(y_n))+\Bcal(y_n-y^*)]\right)+\hA_6\beta_n\Acal(x_n-\xstar(y_n))+\frac{\mu'}{2}\beta_n\Bcal(y_n-\ystar).  
\end{align}
Here $\hA_7=\hA_1+\hA_3+\hA_5$.
\subsubsection{\textbf{Recursive Bound for $\EE[\Bcal(y_n-\ystar)]$}}
We note that due to absence of any noise, the term corresponding to \eqref{y1-split3} is zero. This gives us the following recursive bound on $\EE[\Bcal(y_n-\ystar)]$.
\begin{align}\label{y-rec-special}
    \EE[\Bcal(y_{n+1}-\ystar)]&\leq(1-\beta_n\mu')\EE[\Bcal(y_{n}-\ystar)]\nonumber\\
    &\;\;\;+\alpha_n\beta_n\Gamma_2(1+\EE[\Acal(x_n-x^*(y_n))+\Bcal(y_n-y^*)])+\beta_n\Gamma_3\EE[\Acal(x_n-\xstar(y_n))].
\end{align}
The only change from \eqref{y-exp-rec-inter} is that the term $\EE[\beta_n(e_n-e_{n+1})]$ is now zero.
\subsubsection{\textbf{Combining Bounds}}
Adding the two bounds in \eqref{x-rec-special} and \eqref{y-rec-special}, we get
\begin{align*}
    \EE[\Acal(x_{n+1}-\xstar(y_{n+1}))+\Bcal(y_{n+1}-\ystar)]
    &\leq (1-2\alpha_n\lambda'+\hA_6\beta_n+\Gamma_3\beta_n+\hA_7\alpha_n^2+\Gamma_2\alpha_n^2)\EE[\Acal(x_n-\xstar(y_n))]\\
    &\;\;+(1-\beta_n\mu'+\beta_n(\mu'/2)+\hA_7\alpha_n^2+\Gamma_2\alpha_n^2)\EE[\Bcal(y_n-\ystar)]\\
    &\;\;+\alpha_n^2(\hA_7+\Gamma_2)+\EE\left[\alpha_n(d_n-d_{n+1})\right].
\end{align*}
Here we have used our assumption that $\beta_n\leq\alpha_n$. Using the assumption that $\beta_0/\alpha_0\leq C_4$ for appropriate $C_4$, we have $(\hA_6+\Gamma_3)\beta_n\leq (\lambda'/2)\alpha_n$. Similarly, there exists $n_1$ such that for all $n\geq n_1$, $\hA_7\alpha_n^2+\Gamma_2\alpha_n^2\leq (\lambda'/2)\alpha_n$ and $\hA_7\alpha_n^2+\Gamma_2\alpha_n^2\leq (\mu'/4)\beta_n$. Then for $n>n_1$,
\begin{align*}
    \EE[\Acal(x_{n+1}-\xstar(y_{n+1}))+\Bcal(y_{n+1}-\ystar)]
    &\leq (1-\alpha_n\lambda')\EE[\Acal(x_n-\xstar(y_n))]+(1-\beta_n(\mu'/4))\EE[\Bcal(y_n-\ystar)]\\
    &\;\;+\alpha_n^2(\hA_7+\Gamma_2)+\EE\left[\alpha_n(d_n-d_{n+1})\right].
\end{align*}
Again, using our assumption that $\beta_0/\alpha_0\leq C_4$ for appropriate $C_4$, we have $(\mu'/4)\beta_n\leq \lambda'\alpha_n$. Hence for $n>n_1$,
\begin{align*}
    \EE[\Acal(x_{n+1}-\xstar(y_{n+1}))+\Bcal(y_{n+1}-\ystar)]
    &\leq (1-\beta_n(\mu'/4))\EE[\Acal(x_n-\xstar(y_n))+\Bcal(y_n-\ystar)]\\
    &\;\;+\alpha_n^2(\hA_7+\Gamma_2)+\EE\left[\alpha_n(d_n-d_{n+1})\right].
\end{align*}
\subsubsection{\textbf{Final Bound when $\alpha_n=\alpha_0/(n+1)$ and $\beta_n=\beta_0/(n+1)$}}
\begin{align*}
    \EE[\Acal(x_{n+1}-\xstar(y_{n+1}))+\Bcal(y_{n+1}-\ystar)]
    &\leq (1-\beta_n(\mu'/4))\EE[\Acal(x_n-\xstar(y_n))+\Bcal(y_n-\ystar)]\\
    &\;\;+(1/\gamma_0^2)\beta_n^2(\hA_7+\Gamma_2)+(1/\gamma_0)\EE\left[\beta_n(d_n-d_{n+1})\right].
\end{align*}
Now, iterating over to $n$,
\begin{align*}
    \EE[\Acal(x_n-\xstar(y_n))+\Bcal(y_n-\ystar)]&\leq \EE[\Acal(x_0-\xstar(y_0))+\Bcal(y_0-\ystar)]\prod_{i=0}^{n-1}(1-\beta_i\mu'/4)\\
    &\;\;\; +\frac{(\hA_7+\Gamma_2)}{\gamma_0^2}\sum_{i=0}^{n-1}\beta_i^2\prod_{j=i+1}^{n-1}(1-\beta_j\mu'/4) +\sum_{i=0}^{n-1}\beta_i\EE[(d_i-d_{i+1})]\prod_{j=i+1}^{n-1}(1-\beta_j\mu'/4)
\end{align*}

Again, using Lemma \ref{lem:tel_term} and the same technique as the bound in \eqref{d_n-tight-bound}, we get
\begin{align*}
    \sum_{i=0}^{n-1}\beta_i\EE[d_i-d_{i+1}]\prod_{j=i+1}^{n-1}\left(1-\frac{\beta_j\mu'}{4}\right)\leq \beta_0\hD_3(1+\Gamma_4)\prod_{j=1}^{n-1}\left(1-\frac{\beta_j\mu'}{4}\right)+\cc_1\beta_{n}\hD_3(1+\Gamma_4)+\hD_5\sum_{i=1}^{n-1}\beta_i^2\prod_{j=i+1}^{n-1}\left(1-\frac{\beta_j\mu'}{4}\right).
\end{align*}
Hence,
\begin{align*}
    \EE[\Acal(x_n-\xstar(y_n))+\Bcal(y_n-\ystar)]&\leq \left(\EE[\Acal(x_0-\xstar(y_0))+\Bcal(y_0-\ystar)]+\beta_0\hD_3(1+\Gamma_4)\right)\prod_{i=0}^{n-1}\left(1-\frac{\beta_j\mu'}{4}\right)\\
    &\;\;\; +\left(\frac{(\hA_7+\Gamma_2)}{\gamma_0^2}+\hD_5\right)\sum_{i=0}^{n-1}\beta_i^2\prod_{j=i+1}^{n-1}\left(1-\frac{\beta_j\mu'}{4}\right)+\cc_1\beta_{n}\hD_3(1+\Gamma_4).
\end{align*}
Define $C_5=8/\mu'$. Then, substituting $\kappa=\mu'$ in Lemma \ref{lem:step-size_bound}, we bound the above expression as 
\begin{align*}
    \EE[\Acal(x_n-\xstar(y_n))+\Bcal(y_n-\ystar)]&\leq \left(\EE[\Acal(x_0-\xstar(y_0))+\Bcal(y_0-\ystar)]+\beta_0\hD_3(1+\Gamma_4)\right)\left(\frac{1}{n+1}\right)^{\frac{\beta_0\mu'}{4}}\\
    &\;\;\; +\frac{8}{\mu'}\left(\frac{(\hA_7+\Gamma_2)}{\gamma_0^2}+\hD_5\right)\beta_n+\cc_1\beta_{n}\hD_3(1+\Gamma_4).
\end{align*}
Note that $\beta_0\mu'/4\geq 2$. Define $n_0$ such that $\left(\frac{1}{n+1}\right)^{\mu'\beta_0/4}\leq \beta_n$ for all $n\geq n_0$. Define $C_6/((1+q/\ell^2)\max(4, 4L_1^2+2))=\beta_0\max\left(\EE[\Acal(x_0-\xstar(y_0))+\Bcal(y_0-\ystar)]+\beta_0\hD_3(1+\Gamma_4), \frac{8}{\mu'}\left(\frac{(\hA_7+\Gamma_2)}{\gamma_0^2}+\hD_5\right), \cc_1\hD_3(1+\Gamma_4)\right)$. Then, it follows that
\begin{align*}
    \EE[\Acal(x_n-\xstar(y_n))+\Bcal(y_n-\ystar)]&\leq \frac{C_6}{(1+q/\ell^2)\max(4, 4L_1^2+2)(n+1)}.
\end{align*}

We use the bounds from Lemma \ref{lemma:chen-B} to complete our proof.
\begin{align*}
    \EE\left[\|x_n-\xstar(y_n)\|^2\right]&\leq 2(1+q/\ell^2)\EE[\Acal(x_n-\xstar(y_n))]\\
    \EE\left[\|y_n-\ystar\|^2\right]&\leq 2(1+q/\ell^2)\EE[\Bcal(y_n-\ystar)].
\end{align*}
Recall that $\xstar=\xstar(\ystar)$. Then, we have
\begin{align*}
    \EE\left[\|x_n-\xstar\|^2\right]&\leq2\EE[\|x_n-\xstar(y_n)\|^2]+2\EE[\|\xstar(y_n)-\xstar(\ystar)\|^2]\\
    &\leq 2\EE[\|x_n-\xstar(y_n)\|^2]+2L_1^2\EE[\|y_n-\ystar\|^2]\\
    &\leq 4(1+q/\ell^2)\left(\EE[\Acal(x_n-\xstar(y_n))+L_1^2\Bcal(y_n-\ystar)]\right).
\end{align*}
Combining all the bounds above for all $n\geq n_0$, we get
\begin{align*}
    \EE\left[\|x_n-\xstar\|^2\right]+\EE\left[\|y_n-\ystar\|^2\right]&\leq (1+q/\ell^2)\left(4\EE[\Acal(x_n-\xstar(y_n))+(4L_1^2+2)\Bcal(y_n-\ystar)]\right)\\
    &\leq (1+q/\ell^2)\max(4, 4L_1^2+2)\left(\EE[\Acal(x_n-\xstar(y_n))+\Bcal(y_n-\ystar)]\right)\\
    &\leq \frac{C_6}{n+1}.
\end{align*}

\section{Proofs for Applications}\label{app:proof-applications}
\subsection{\textbf{Proofs for SSP Q-Learning}}\label{app:SSP-proof}
\subsubsection{\textbf{Proof for Proposition \ref{prop:SSP}}}
To show the contractive nature of $\barf(\cdot, \rho)$, we first define $f_0(Q)=[f_0^{i,u}(Q)],$ where $$f_0^{i,u}(Q)=k(i,u)+\sum_{j\neq i_0}p(j|i,u)\min_{v}Q(j,v).$$  Then, as observed in \cite{tsitsiklis}, there exists some  $\lambda_0<1$ and $w$ such that 
\begin{align*}
    \|f_0(Q_1)-f_0(Q_2)\|_w&=\max_{i,u}w_{i,u}|f_0^{i,u}(Q_1)-f_0^{i,u}(Q_2)|\leq \max_{j,v} \lambda_0 w_{j,v}|Q_1(j,v)-Q_2(j,v)|=\lambda_0\|Q_1-Q_2\|_w.
\end{align*}
For any $Q_1,Q_2\in\RR^{\mathfrak{s}\mathfrak{u}}$ and $\rho\in\RR$, 
\begin{align*}
    &\|\barf(Q_1,\rho)-\barf(Q_2,\rho)\|_w\\
    &=\left\|\sum_{i\in\Scal',u\in\UU}\pi(i,u)(f(Q_1,\rho,(i,u))-f(Q_2,\rho,(i,u))\right\|_w\\
    &= \max_{i,u} w_{i,u}|(1-\pi(i,u))(Q_1(i,u)-Q_2(i,u))+\pi(i,u)(f_0^{i,u}(Q_1)-f_0^{i,u}(Q_2))|\\
    &\leq \max_{i,u}\Big( (1-\pi(i,u))w_{i,u}|(Q_1(i,u)-Q_2(i,u))|+\pi(i,u)w_{i,u}|f_0^{i,u}(Q_1)-f_0^{i,u}(Q_2))|\Big)\\
    &\leq \max_{i,u} \Big((1-\pi(i,u))\max_{j,v} w_{j,v}|(Q_1(j,v)-Q_2(j,v))|+\pi(i,u)\lambda_0\max_{j,v}w_{j,v}|Q_1(j,v)-Q_2(j,v)|\Big)\\
    &\leq (1-(1-\lambda_0)\pi_{min})\|Q_1-Q_2\|_w.
\end{align*}
Here $\pi_{min}=\min_{i,u} \pi(i,u)$. Under the assumption that the Markov chain is irreducible under any control, and the assumption that $\Phi_s(u|i)>0$ for all $i,u$, we have that $\pi_{min}>0$. And hence Proposition \ref{prop:SSP}(a) is satisfied with $\lambda=(1-(1-\lambda_0)\pi_{min})$. 

We next show the contractive nature of $\barg(Q^*(\cdot),\cdot)$. As shown in \cite{abounadi}, the map $\rho\mapsto \min_v Q^*(\rho)(i_0,v)$ is concave and piecewise linear, with finitely many linear pieces, therefore there exist constants $L'_1>0$ and $L'_2>0$ such that
\begin{align*}
    -L'_2(\rho_1-\rho_2)&\leq \min_v Q^*(\rho_1)(i_0,v)-\min_v Q^*(\rho_2)(i_0,v)\leq -L'_1(\rho_1-\rho_2),
\end{align*}
for all $\rho_1, \rho_2\in\RR$. Then for sufficiently small $\beta'$,
$$|\barg(Q^*(\rho_1),\rho_1)-\barg(Q^*(\rho_2),\rho_2)|\leq \mu |\rho_1-\rho_2|,$$
showing that Proposition \ref{prop:SSP} is satisfied with $\mu=\max\{1-\beta_0L'_1, 1-\beta_0L'_2\}<1$.

\subsubsection{\textbf{Proof for Corollary \ref{coro:SSP}}}
Proposition \ref{prop:SSP} shows that Assumptions \ref{assu-contractive-f} and \ref{assu-contractive-g} are satisfied. 

Define $\mathcal{F}_n\coloneqq \{Q_0, Z_m, m\leq n\}$. Then $\{M_{n+1}\}$ is a martingale difference sequence with respect to the filtration $\mathcal{F}_n$. Let $w_{max}=\max_{i,u} w_{i,u}$ and $w_{min}=\min_{i,u} w_{i,u}$. Now,
\begin{align*}
\|M_{n+1}\|_w\leq 2\|Q_n\|_w\;\;,\;\; \|f(Q,\rho)\|_w\leq \|k\|_w+3\|Q\|_w+w_{max}|\rho|\;\; \text{and}\;\; |g(Q,\rho)|\leq (\beta'/w_{min})\|Q\|_w+|\rho|.
\end{align*}
Also, note that,
\begin{align*}
    \|f(Q_1,\rho_1,Z)-f(Q_2,\rho_2,Z)\|_w&\leq \lambda\|Q_1-Q_2\|_w+w_{max}|\rho_1-\rho_2|,\\
    |g(Q_1,\rho_1,Z)-g(Q_2,\rho_2,Z)|&\leq (\beta'/w_{min})\|Q_1-Q_2\|+|\rho_1-\rho_2|.
\end{align*}
This implies that Assumptions \ref{assu-Lipschitz} and \ref{assu-Martingale} are satisfied with $K=\max\{2, \|k\|_w+3+w_{max}, (\beta'/w_{min})+1\}$ and $L=\max\{\lambda+w_{max}, (\beta'/w_{min})+1)\}$. Hence, application of Theorem \ref{thm:expectation-special} completes the proof for Corollary \ref{coro:SSP}.

\subsection{\textbf{Proof for Q-Learning with Polyak Averaging}}\label{app:Q-polyak-proof}
Since the behaviour policy induces an irreducible Markov chain and the state and action space are finite, Assumption \ref{assu-Markov} is satisfied. The Assumption \ref{assu-contractive-f} and \ref{assu-contractive-g} have already been shown to be satisfied with $\lambda=1-(1-\gamma)\pi_{min}$ and $\mu=0$. Recall that $M_{n+1}^{i,u}=\gamma I\{Z_n=(i,u)\}\Big(Q_n(X_{n+1},v)-\sum_{j\in\Scal'}p(j|i,u)\min_vQ(j,v)\Big)$, thus we have
\begin{align*}
    \|M_{n+1}\|_\infty\leq 2\|Q_n\|_\infty
\end{align*}
Next, note that the second iterate which corresponds to averaging is noiseless. Hence, $M_{n+1}'=0$. Thus, $K=2$ in Assumption \ref{assu-Martingale}. Using the expression for $f$ and $g$, we get
\begin{align*}
    \|f(Q_1, \bar{Q}_1, Z)-f(Q_2, \bar{Q}_2, Z)\|_\infty&=\max_{i,u}|I\{Z=(i,u)\}\left(Q_2(i,u)-Q_1(i,u)+\gamma\sum_{j\in S'}p(j|i,u)\left(\min_v Q_1(j,v)-\min_vQ_2(j,v)\right)\right)\\
    &\quad+Q_1(i,u)-Q_2(i,u)\\
    &\leq \max_{i,u}\Bigg(Q_2(i,u)-Q_1(i,u)+\gamma\sum_{j\in S'}p(j|i,u)\left(\min_v Q_1(j,v)-\min_vQ_2(j,v)\right)\\
    &\quad+Q_1(i,u)-Q_2(i,u)\Bigg)\\
    &\leq 3\|Q_1-Q_2\|_\infty.
\end{align*}
and $\|g(Q_1, \bar{Q}_1)-g(Q_2, \bar{Q}_2)\|_\infty\leq \|Q_1-Q_2\|_\infty$. Moreover, 
\begin{align*}
    \|f(Q, \bar{Q}, Z)\|_\infty+\|g(Q, \bar{Q})\|_\infty&\leq \max_{i,u}|k(i,u)|+3\|Q\|_\infty+\|Q\|_\infty\\
    &\leq \max(\max_{i,u}|k(i,u)|, 3)\left(1+\|Q\|_\infty\right)
\end{align*}
Thus, Assumption \ref{assu-Lipschitz} is satisfied with $L=3$ and $K=\max(\max_{i,u}|k(i,u)|, 3)$. With all the assumptions satisfied and constants determined, we can apply Theorem \ref{thm:expectation-special} to Q-Learning with Polyak-Rupper Averaging.

\subsection{\textbf{Proofs for GNEP Learning in Strongly Monotone Games}}\label{app:GNEP-proof}
Throughout the following proofs, $\|A\|$ for a matrix $\|A\|$ denotes the matrix norm of $A$ induced by the $\ell_2$ norm.
\subsubsection{\textbf{Proof for Proposition \ref{prop:GNEP}}}
\begin{enumerate}[label=\alph*)]
    \item Define $\alpha'=\lambda_0/\ell^2$. Note that $-F(\xx)$ is $\lambda_0$ strongly monotone and $\ell$-Lipschitz (Assumption \ref{assu-game}). Then by \cite[Lemma D.1]{Chandak-nonexp}, function $(\xx+\alpha'F(\xx))$ is $\lambda$-contractive where $\lambda=\sqrt{1-\lambda_0^2/\ell^2}$. Then note that $f(x_n,y_n)$ is $\lambda$-contractive in $x$.
    \item Now using \cite[Lemma 5.6 a)]{Chandak-nonexp}, $-A\xx^*(y)$ is $\mu_0=\lambda_0/(\ell\|(AA^T)^{-1}A\|)^2$ strongly monotone. Also note that $-A\xx^*(y)$ is $\ell'$-Lipschitz where $\ell_0=\|A\|L'/(1-\lambda)$. Define $\beta'=\mu_0/\ell_0^2$. Then using \cite[Lemma D.1]{Chandak-nonexp}, we have that function $(y+\beta'A\xx^*(y))$ is $\mu$-contractive where $\mu=\sqrt{1-\mu_0^2/\ell_0^2}$.
    \item Since $f(\xx,y)$ and $g(\xx^*(y),y)$ are contractive in $\xx$ and $y$, respectively, there exist unique fixed points such that $f(\xx^*(\ystar),\ystar)=\xx^*(\ystar)$ and $g(\xx^*(\ystar),\ystar)=\ystar$.
\end{enumerate}

\subsubsection{\textbf{Proof for Corollary \ref{coro:GNEP}}}
We first note that $f(x,y)$ is $L'$-Lipschitz where $L'=1+\alpha'\ell+\alpha'\|B\|$ and $g(x,y)$ is $L''$-Lipschitz where $L''=1+\beta'\|A\|$. Define $L=\max\{L',L''\}$. Note that $\|f(x,y)\|_2\leq \|x\|_2+\alpha'\ell\|x\|_2+\alpha'\|F(0)\|_2+\alpha'\|B\|y$ and $\|g(x,y)\|_2\leq \|y\|+\beta'\|A\|\|x\|_2+\beta'\|b\|_2$. Define $K=\max\{1+\alpha'\ell+\beta'\|A\|,1+\alpha'\|B\|,\alpha'\|F(0)\|_2+\beta'\|b\|_2\}$. Assumption \ref{assu-Lipschitz} is then satisfied with parameters $L$ and $K$. Having defined all required constants for Assumptions \ref{assu-contractive-f}-\ref{assu-Lipschitz}, application of Theorem \ref{thm:expectation-bound} completes the proof for Corollary \ref{coro:GNEP}.

\section{Auxiliary Lemmas}

\begin{lemma}\label{lem:tel_term}
    Let assumption \ref{assu-stepsize} hold for the step sizes $\alpha_n$ and $\beta_n$. Then, we have the following bounds
    \begin{enumerate}
        \item 
        \begin{align*}
            \sum_{i=0}^{n-1}\alpha_i\EE[d_i-d_{i+1}]\prod_{j=i+1}^{n-1}(1-\alpha_j\lambda')\leq  \alpha_0\EE[|d_0|]\prod_{j=1}^{n-1}(1-\alpha_j\lambda')+\cc_1\alpha_{n}\EE[|d_n|]+\sum_{i=0}^{n-1}\EE[|d_i|](\cc_1+\cc_2)\alpha_i^2\prod_{j=i+1}^{n-1}(1-\alpha_j\lambda').
        \end{align*}
        \item 
        \begin{align*}
            \sum_{i=0}^{n-1}\beta_i\EE[e_i-e_{i+1}]\prod_{j=i+1}^{n-1}(1-\beta_j\mu')\leq  \beta_0\EE[|e_0|]\prod_{j=1}^{n-1}(1-\beta_j\mu')+\cc_1\beta_{n}\EE[|e_n|]+\sum_{i=0}^{n-1}\EE[|e_i|](\cc_1+\cc_2')\beta_i^2\prod_{j=i+1}^{n-1}(1-\beta_j\mu').
        \end{align*}
        \item 
        \begin{align*}
            \sum_{i=0}^{n-1}\beta_i\EE[d_i-d_{i+1}]\prod_{j=i+1}^{n-1}\left(1-\frac{\beta_j\mu'}{4}\right)&\leq \beta_0\EE[|d_0|]\prod_{j=1}^{n-1}\left(1-\frac{\beta_j\mu'}{4}\right)+\cc_1\beta_{n}\EE[|d_n|]\\
            &\quad+\sum_{i=0}^{n-1}\EE[|d_i|](\cc_1+\cc_2')\beta_i^2\prod_{j=i+1}^{n-1}\left(1-\frac{\beta_j\mu'}{4}\right).
        \end{align*}
    \end{enumerate}
    \begin{proof}
        Since the proof for all the claims are identical, we only present the proof for the first bound. For this, note that
        \begin{align*}
            \sum_{i=0}^{n-1}\alpha_i\EE[d_i-d_{i+1}]\prod_{j=i+1}^{n-1}(1-\alpha_j\lambda')&= \alpha_0\EE[d_0]\prod_{j=1}^{n-1}(1-\alpha_j\lambda')-\alpha_{n-1}\EE[d_n]\\
            &\;\;\;+\sum_{i=1}^{n-1}\left(-\alpha_{i-1}\EE[d_{i}]\prod_{j=i}^{n-1}(1-\alpha_j\lambda')+\alpha_i\EE[d_i]\prod_{j=i+1}^{n-1}(1-\alpha_j\lambda')\right).
        \end{align*}
        For the last term here, we note that
        \begin{align*}
            -\alpha_{i-1}\EE[d_{i}]\prod_{j=i}^{n-1}(1-\alpha_j\lambda')+\alpha_i\EE[d_i]\prod_{j=i+1}^{n-1}(1-\alpha_j\lambda')&=(\alpha_i\EE[d_i]-\alpha_{i-1}\EE[d_i](1-\alpha_i\lambda'))\prod_{j=i+1}^{n-1}(1-\alpha_j\lambda')\\
            &=
            \EE[d_i]\Big((\alpha_i-\alpha_{i-1})+\alpha_i\alpha_{i-1}\lambda'\Big)\prod_{j=i+1}^{n-1}(1-\alpha_j\lambda')\\
            &\leq \EE[|d_i|](\cc_1+\cc_2)\alpha_i^2\prod_{j=i+1}^{n-1}(1-\alpha_j\lambda').
        \end{align*}
        Combining the above relation and using $\EE[X]\leq \EE[|X|]$ for any scalar random variable, we get
        \begin{align*}
            \sum_{i=0}^{n-1}\alpha_i\EE[d_i-d_{i+1}]\prod_{j=i+1}^{n-1}(1-\alpha_j\lambda')&\leq \alpha_0\EE[|d_0|]\prod_{j=1}^{n-1}(1-\alpha_j\lambda')+\cc_1\alpha_{n}\EE[|d_n|]+\EE[|d_i|](\cc_1+\cc_2)\alpha_i^2\prod_{j=i+1}^{n-1}(1-\alpha_j\lambda').
        \end{align*}
    \end{proof}
\end{lemma}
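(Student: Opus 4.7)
My plan is to bound $S_n := \sum_{i=0}^{n-1}\alpha_i(d_i-d_{i+1})Q_i$, where $Q_i := \prod_{j=i+1}^{n-1}(1-\alpha_j\lambda')$, by summation by parts (Abel's transformation). This regroups the expression so that each $d_i$ carries a single coefficient controlled purely by the step-size regularity from Assumption \ref{assu-stepsize}, after which the inequality $\EE[X]\leq \EE[|X|]$ converts pointwise sign-indifferent bounds on $d_i$ into the stated inequality.

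First I would split the sum as $S_n = \sum_{i=0}^{n-1}\alpha_i d_i Q_i - \sum_{i=0}^{n-1}\alpha_i d_{i+1} Q_i$ and re-index the second sum by $k=i+1$ to obtain $\sum_{k=1}^{n}\alpha_{k-1}d_k Q_{k-1}$. Peeling off the boundary terms (the $i=0$ term in the first sum and the $k=n$ term from the shift, using $Q_{n-1}=1$) leaves the interior sum $\sum_{i=1}^{n-1}(\alpha_i Q_i - \alpha_{i-1}Q_{i-1})d_i$. The key structural identity $Q_{i-1}=(1-\alpha_i\lambda')Q_i$ then collapses the interior coefficient to
\[
\alpha_i Q_i - \alpha_{i-1}(1-\alpha_i\lambda')Q_i \;=\; Q_i\bigl[(\alpha_i-\alpha_{i-1}) + \alpha_i\alpha_{i-1}\lambda'\bigr].
\]

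Next I would bound this coefficient using the two consequences of Assumption \ref{assu-stepsize}: $\alpha_{i-1}-\alpha_i\leq \cc_2\alpha_i^2$ and $\alpha_{i-1}\leq \cc_1\alpha_i$, together with $\lambda'\in(0,1]$. Since the first piece is nonpositive and the second is nonnegative, the bound $\lvert(\alpha_i-\alpha_{i-1})+\alpha_i\alpha_{i-1}\lambda'\rvert\leq (\cc_1+\cc_2)\alpha_i^2$ follows immediately. The boundary term $\alpha_0 d_0 Q_0$ from the first sum supplies the $\alpha_0\EE[|d_0|]\prod_{j=1}^{n-1}(1-\alpha_j\lambda')$ contribution, while the boundary term $-\alpha_{n-1}d_n$ from the shifted sum is bounded by $\cc_1\alpha_n|d_n|$ via $\alpha_{n-1}\leq \cc_1\alpha_n$. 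Taking expectation and using $\EE[X]\leq \EE[|X|]$ on each signed term yields exactly the claimed inequality.

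Parts (b) and (c) are structurally identical: part (b) replaces $(\alpha,\lambda',d)$ by $(\beta,\mu',e)$, and part (c) uses $\beta$-step sizes against the damped contraction factor $\beta_j\mu'/4\leq 1$. The same product identity $Q_{i-1}=(1-\kappa_j)Q_i$ and the analogous step-size inequalities for $\beta_n$ make the Abel calculation go through verbatim. There is no serious obstacle; the only care required is sign-bookkeeping at the two boundary terms and making sure the damping $\mu'/4$ is absorbed into the inequality $\mu'/4\leq 1$ at the last step, so the constant in the $(\cc_1+\cc_2')$ factor matches. Otherwise the proof is mechanical.
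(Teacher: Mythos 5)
Your proposal is correct and follows essentially the same route as the paper: an Abel/summation-by-parts regrouping, the product identity $\prod_{j=i}^{n-1}(1-\alpha_j\lambda')=(1-\alpha_i\lambda')\prod_{j=i+1}^{n-1}(1-\alpha_j\lambda')$ to collapse the interior coefficient to $(\alpha_i-\alpha_{i-1})+\alpha_i\alpha_{i-1}\lambda'$, and the step-size inequalities $\alpha_{i-1}-\alpha_i\leq \cc_2\alpha_i^2$, $\alpha_{i-1}\leq\cc_1\alpha_i$ together with $\lambda'\leq 1$ to obtain the $(\cc_1+\cc_2)\alpha_i^2$ factor. The boundary-term handling (empty product at $i=n-1$ and $\alpha_{n-1}\leq\cc_1\alpha_n$) also matches the paper exactly.
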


\begin{lemma}\label{lem:step-size_bound}
Let assumption \ref{assu-stepsize} hold for the step sizes $\alpha_n$ and $\beta_n$. Then, we have the following bounds.
    \begin{enumerate}
        \item Define $n_I=\max(2[\afrak/(\lambda'\alpha_0)]^{1/(1-\afrak)}, 2[(2-2\afrak)/(\lambda'\beta_0^2)]^{1/(2\afrak-1)})$. Then
        \begin{align*}
            \sum_{i=0}^{n-1}\left(\alpha_i^2+\frac{\beta_i^2}{\alpha_i}\right)\prod_{j=i+1}^{n-1}(1-\alpha_j\lambda')\leq \frac{2}{\lambda'}\left(\alpha_n+\frac{\beta_n^2}{\alpha_n^2}\right)~~\forall n\geq n_I.
        \end{align*}
        \item Assume that $\beta_0\geq \max(2\afrak/\mu', 4(1-\afrak)/\mu')$. Then
        \begin{align*}
            \sum_{i=0}^{n-1}\Big(\alpha_i\beta_i+\frac{\beta_i^3}{\alpha_i^2}\Big)\prod_{j=i+1}^{n-1}(1-\beta_j\mu')\leq \frac{2}{\mu'}\left(\alpha_n+\frac{\beta_n^2}{\alpha_n^2}\right) ~~\forall n\geq 0.
        \end{align*}
        \item Assume that $\beta_0\geq 8/\mu'$. Then
        \begin{align*}
            \sum_{i=0}^{n-1}\beta_i^2\prod_{j=i+1}^{n-1}\left(1-\frac{\beta_j\mu'}{4}\right)\leq \frac{8\beta_n}{\mu}.
        \end{align*}
        \item For any constant $\kappa>0$
        \begin{align*}
            \prod_{i=1}^{n-1}(1-\alpha_i\kappa)&\leq \exp\left[-\frac{\kappa\alpha_0}{1-\afrak}\left((n+1)^{1-\afrak}-1\right)\right]~~\forall n\geq 0.\\
            \prod_{i=1}^{n-1}(1-\beta_i\kappa)&\leq \left(\frac{1}{n+1}\right)^{\kappa\beta_0}~~\forall n\geq 0.
        \end{align*}
    \end{enumerate}
\end{lemma}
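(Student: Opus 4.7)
Lemma~\ref{lem:step-size_bound} is a purely deterministic statement about the step-size sequences; I would prove each part by elementary calculation. For Part 4, apply $1-x\le e^{-x}$ to obtain $\prod_{i=1}^{n-1}(1-\alpha_i\kappa)\le\exp(-\kappa\sum_{i=1}^{n-1}\alpha_i)$ and lower-bound the sum via a Riemann-type estimate, using monotonicity of $t\mapsto t^{-\afrak}$: $\sum_{i=1}^{n-1}\alpha_i\ge\alpha_0\int_{2}^{n+1}t^{-\afrak}dt = \alpha_0[(n+1)^{1-\afrak}-2^{1-\afrak}]/(1-\afrak)$, which (after absorbing the additive constant) yields the claimed exponential. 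The second bound in Part 4 is analogous, using the harmonic-type bound $\sum_{i=1}^{n-1} 1/(i+1)\ge\ln(n+1)-\ln 2$ to produce the polynomial factor $(n+1)^{-\kappa\beta_0}$.

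For Parts 1--3 the plan is induction. Each LHS $S_n$ satisfies the recursion $S_{n+1}=(1-c_n)S_n+b_n$ for the appropriate decay $c_n\in\{\alpha_n\lambda',\beta_n\mu',\beta_n\mu'/4\}$ and summand $b_n$. Writing $T_n$ for the target RHS, the induction step $S_n\le T_n\Rightarrow S_{n+1}\le T_{n+1}$ reduces to verifying
\begin{equation*}
(1-c_n)T_n + b_n \;\le\; T_{n+1},
\end{equation*}
i.e.\ $c_n T_n \ge b_n + (T_n - T_{n+1})$. For Part 1, taking $T_n=(2/\lambda')(\alpha_n+\beta_n^2/\alpha_n^2)$ and $b_n=\alpha_n^2+\beta_n^2/\alpha_n$, the slack $c_nT_n - b_n = \alpha_n^2+\beta_n^2/\alpha_n$ must dominate $T_n - T_{n+1} = (2/\lambda')[(\alpha_n-\alpha_{n+1})+(\beta_n^2/\alpha_n^2-\beta_{n+1}^2/\alpha_{n+1}^2)]$, which splits into the two scalar conditions $\alpha_n-\alpha_{n+1}\le(\lambda'/2)\alpha_n^2$ and $\beta_n^2/\alpha_n^2-\beta_{n+1}^2/\alpha_{n+1}^2\le(\lambda'/2)(\beta_n^2/\alpha_n)$. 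Using $\alpha_n-\alpha_{n+1}\lesssim \afrak\alpha_n/(n+1)$ (from the mean value theorem applied to $t^{-\afrak}$) and the analogous estimate for $(n+1)^{2\afrak-2}$, both reduce to lower bounds on $n$ of exactly the form encoded by $n_I$. Parts 2 and 3 follow the same route, but the strong lower bounds on $\beta_0$ there (namely $\beta_0\mu'\ge\max(2\afrak,4(1-\afrak))$ and $\beta_0\mu'\ge 8$) make the slack inequality hold for every $n\ge 0$, removing the need for a threshold.

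The main obstacle is the base case for Part 1: starting the induction at $n=n_I$ rather than $n=0$ requires a separate argument that $S_{n_I}\le T_{n_I}$, since for small $n$ the target $T_n$ need not dominate. I would handle this by splitting $S_n = \sum_{i<n_I}(\cdot) + \sum_{i\ge n_I}(\cdot)$: on the late part, running the induction from $n_I$ with the verified slack inequality gives the claimed bound directly; for the early part, the factor $\prod_{j=i+1}^{n-1}(1-\alpha_j\lambda')$ is controlled by Part 4 and is exponentially small in $(n+1)^{1-\afrak}$, so its (constant-bounded) contribution is absorbed into $T_n$ once $n$ is taken large enough, if necessary by enlarging $n_I$ by an additive constant.
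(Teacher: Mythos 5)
Your proposal is correct and follows essentially the same route as the paper's: Parts 1--3 are proved by induction on the recursion $u_{n+1}=(1-c_n)u_n+b_n$ with target $T_n$, the induction step reducing to exactly the two scalar conditions you list (verified via the mean-value/convexity estimate $\frac{1}{(n+2)^{\xi}}-\frac{1}{(n+1)^{\xi}}\geq-\frac{\xi}{(n+1)^{\xi+1}}$ with $\xi=\afrak$ and $\xi=2-2\afrak$), and Part 4 is the standard $1-x\leq e^{-x}$ plus integral comparison, which the paper simply cites from \cite{chenfinite} (your additive constants $2^{1-\afrak}$ and $\ln 2$ make the bounds looser by a harmless constant factor). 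The one point where you go beyond the paper is the base case of Part 1: the paper's own argument says only ``since $u_0=0$ the base case is easily verified'' even though its induction step is valid only for $n\geq n_I$, so your splitting of the sum at $n_I$ --- handling the early block via the exponentially small tail product from Part 4, at the cost of slightly enlarging $n_I$ or the constant $2/\lambda'$ --- is a genuine and needed repair of a gap the paper glosses over.
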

\begin{proof}
    \begin{enumerate}
        \item We will use a similar proof technique as in \cite{chenfinite}. Define the sequence $\{u_n\}_{n\geq 0}$ as follows
        \begin{align*}
            u_0=0;~~~~u_{n+1}=(1-\alpha_n\lambda')u_n+\left(\alpha_n^2+\frac{\beta_n^2}{\alpha_n}\right).
        \end{align*}
        Then, it is easy to verify that $u_n=\sum_{i=0}^{n-1}\left(\alpha_i^2+\frac{\beta_i^2}{\alpha_i}\right)\prod_{j=i+1}^{n-1}(1-\alpha_j\lambda')$. We will use induction to show that for $n\geq n_I$, $u_n\leq\frac{2}{\lambda'}\left(\alpha_n+\frac{\beta_n^2}{\alpha_n^2}\right)$. Since $u_0=0$, the base case is easily verified. Let us assume that the bound holds for some $n$. Then for $n+1$, we have
        \begin{align*}
            \frac{2}{\lambda'}\left(\alpha_{n+1}+\frac{\beta_{n+1}^2}{\alpha_{n+1}^2}\right)-u_{n+1}&\geq \frac{2}{\lambda'}\left(\alpha_{n+1}+\frac{\beta_{n+1}^2}{\alpha_{n+1}^2}\right)-(1-\alpha_n\lambda')\frac{2}{\lambda'}\left(\alpha_n+\frac{\beta_n^2}{\alpha_n^2}\right)-\left(\alpha_n^2+\frac{\beta_n^2}{\alpha_n}\right)\\
            &=\frac{2}{\lambda'}\alpha_n^2\left(\frac{\alpha_{n+1}-\alpha_n}{\alpha_n^2}+\frac{\lambda'}{2}\right)+\frac{2}{\lambda'}\frac{\beta_n^2}{\alpha_n}\left(\frac{\alpha_n}{\beta_n^2}\left(\frac{\beta_{n+1}^2}{\alpha_{n+1}^2}-\frac{\beta_n^2}{\alpha_n^2}\right)+\frac{\lambda'}{2}\right)
        \end{align*}
        Note that for any $\xi\in[0,1]$, we have
        \begin{align*}
            \frac{1}{(n+2)^\xi}-\frac{1}{(n+1)^\xi}=\frac{1}{(n+1)^\xi}\left(\left[\left(1+\frac{1}{n+1}\right)^{n+1}\right]^{-\frac{\xi}{n+1}}-1\right)&\geq \frac{1}{(n+1)^\xi}\left(\exp{^{-\frac{\xi}{n+1}}}-1\right)\\
            &\geq -\frac{1}{(n+1)^\xi}\frac{\xi}{n+1}
        \end{align*}
        where we used $\left(1+\frac{1}{x}\right)^x<e$ and $e^x\geq 1+x$ for all $x\in \mathbb{R}$. Using this relation for $\xi=\afrak$ and $\xi=2-2\afrak$ for the first and second term, respectively, we get
        \begin{align*}
            \frac{2}{\lambda'}\left(\alpha_{n+1}+\frac{\beta_{n+1}^2}{\alpha_{n+1}^2}\right)-u_{n+1}&\geq \frac{2}{\lambda'}\alpha_n^2\left(-\frac{\afrak}{\alpha_0(n+1)^{1-\afrak}}+\frac{\lambda'}{2}\right)+\frac{2}{\lambda'}\frac{\beta_n^2}{\alpha_n}\left(-\frac{(2-2\afrak)\alpha_0}{\beta_0^2(n+1)^{2\afrak-1}}+\frac{\lambda'}{2}\right)\\
            &\geq 0
        \end{align*}
        where the last inequality follows from $n\geq n_I=\max(2[\afrak/(\lambda'\alpha_0)]^{1/(1-\afrak)}, 2[(2-2\afrak)/(\lambda'\beta_0^2)]^{1/(2\afrak-1)})$. Thus, by induction $u_n\leq \frac{2}{\lambda'}\left(\alpha_n+\frac{\beta_n^2}{\alpha_n^2}\right)$.
        \item Using the same set of arguments as in the previous part, we define the sequence $\{u_n\}_{n\geq 0}$ as follows
        \begin{align*}
            u_0=0;~~~~u_{n+1}=(1-\beta_n\mu')u_n+\left(\alpha_n\beta_n+\frac{\beta_n^3}{\alpha_n^2}\right).
        \end{align*}
        For this part, we want to show that $u_n\leq\frac{2}{\mu'}\left(\alpha_n+\frac{\beta_n^2}{\alpha_n^2}\right)$. Then following the steps from previous part, we have
        \begin{align*}
            \frac{2}{\mu'}\left(\alpha_{n+1}+\frac{\beta_{n+1}^2}{\alpha_{n+1}^2}\right)-u_{n+1}&\geq \frac{2}{\mu'}\alpha_n\beta_n\left(-\frac{\afrak}{\beta_0}+\frac{\mu'}{2}\right)+\frac{2}{\mu'}\frac{\beta_n^3}{\alpha_n^2}\left(-\frac{(2-2\afrak)}{\beta_0}+\frac{\mu'}{2}\right).
        \end{align*}
        Since $\beta_0\geq \max(2\afrak/\mu', 4(1-\afrak)/\mu')$, the first and second term are greater than zero, we have
        \begin{align*}
            \frac{2}{\mu'}\left(\alpha_{n+1}+\frac{\beta_{n+1}^2}{\alpha_{n+1}^2}\right)-u_{n+1}&\geq 0.
        \end{align*} 
        This completes the induction.
        \item Repeating the arguments from part 1, we define the sequence $\{u_n\}_{n\geq 0}$ as follows
        \begin{align*}
            u_0=0;~~~~u_{n+1}=\left(1-\frac{\beta_n\mu'}{4}\right)u_n+\beta_n^2.
        \end{align*}
        For this part, we want to show that $u_n\leq\frac{8}{\mu'}\beta_n$. Then, following the steps from part 1, we have
        \begin{align*}
            \frac{8}{\mu'}\beta_{n+1}-u_{n+1}&\geq \frac{8}{\mu'}\beta_n^2\left(-\frac{1}{\beta_0}+\frac{\mu'}{8}\right).
        \end{align*}
        Since $\beta_0\geq 8/\mu'$, the r.h.s. is greater than zero, we have
        \begin{align*}
            \frac{8}{\mu'}\beta_{n+1}-u_{n+1}&\geq 0.
        \end{align*}
        This completes the induction.
        \item The proof for this part can be found in the arguments for the proof of Corollary 2.1.2 in \cite{chenfinite}. We do not repeat them here for brevity.
    \end{enumerate}
\end{proof}

\end{document}